\documentclass[onefignum,onetabnum]{siamart171218}

\usepackage{lipsum}
\usepackage{amsfonts}
\usepackage{graphicx}
\usepackage{epstopdf}
\usepackage{algorithmic}
\ifpdf
  \DeclareGraphicsExtensions{.eps,.pdf,.png,.jpg}
\else
  \DeclareGraphicsExtensions{.eps}
\fi

\newsiamremark{remark}{Remark}
\newsiamremark{re}{Remark}
\newsiamremark{hypothesis}{Hypothesis}
\crefname{hypothesis}{Hypothesis}{Hypotheses}
\newsiamthm{claim}{Claim}
\newsiamthm{thm}{Theorem}

\headers{Deep Adaptive Dimension Reduction for Bayesian Inference}{Y. Wang, X. Wang, K. Tang, X. Wan, T. Zhou and C. Yang}

\title{Deep Adaptive Dimension Reduction for Bayesian Inference in Inverse Problems}

\author{
Yueyang Wang\thanks{School of Mathematical Sciences, Peking University. (\email{wangyueyang@stu.pku.edu.cn}, \email{xiliwang@stu.pku.edu.cn}, \email{chao\_yang@pku.edu.cn}). Correspondence to: Chao Yang.}
\and Xili Wang\footnotemark[1]
\and Kejun Tang\thanks{School of Sciences, Great Bay University. (\email{tangkj@gbu.edu.cn}).}
\and Xiaoliang Wan\thanks{Department of Mathematics, Louisiana State University. (\email{xlwan@lsu.edu}).}
\and Tao Zhou\thanks{SKLMS \& Institute of Computational Mathematics and Scientific/Engineering Computing, Academy of Mathematics and Systems Science, Chinese Academy of Sciences. (\email{tzhou@lsec.cc.ac.cn}).}
\and Chao Yang\footnotemark[1]
}
\usepackage{amsopn}

\usepackage[numbers]{natbib}
\usepackage{mathtools}
\usepackage{subfig}
\usepackage{booktabs}
\usepackage{multirow}
\usepackage{algorithm}

\newcommand{\mb}{\boldsymbol}

\newcommand{\comm}[1]{}

\newcommand{\squishlist}{
   \begin{list}{$\bullet$}
    { \setlength{\itemsep}{0pt}      \setlength{\parsep}{0pt}
      \setlength{\topsep}{-3pt}       \setlength{\partopsep}{0pt}
      \setlength{\listparindent}{-2pt}
      \setlength{\itemindent}{-5pt}
      \setlength{\leftmargin}{1em} \setlength{\labelwidth}{0em}
      \setlength{\labelsep}{0.5em} } }

\newcommand{\squishend}{
    \end{list}  }

\makeatletter
\newcommand*{\addFileDependency}[1]{
  \typeout{(#1)}
  \@addtofilelist{#1}
  \IfFileExists{#1}{}{\typeout{No file #1.}}
}
\makeatother

\ifpdf
\hypersetup{
  pdftitle={An Example Article},
  pdfauthor={D. Doe, P. T. Frank, and J. E. Smith}
}
\fi

\begin{document}

\maketitle

\begin{abstract}
    Solving high-dimensional PDE-governed inverse problems is often challenging due to complex non-Gaussian posterior distributions, expensive forward model evaluations, and misspecified prior information. To address these issues, we propose a deep adaptive dimension-reduction Bayesian inference framework based on the \emph{Variational Flow} (VF) model. Since standard normalizing flows are restricted by bijective mappings and cannot directly reduce dimensions, VF overcomes this limitation by integrating VAE-based nonlinear dimension reduction with dual normalizing flows for the latent prior and encoder. This design provides a strictly higher evidence lower bound than VAE and allows more flexible approximation of complex posterior distributions. We further introduce an iterative prior updating strategy that gradually moves the prior mean toward high-probability posterior regions, avoiding manual prior tuning. These components form a closed adaptive loop together with an adaptively fine-tuned Fourier Neural Operator (FNO) surrogate: VF generates posterior-concentrated samples to refine the surrogate, while the updated surrogate further improves posterior inference. Numerical experiments on a 100-dimensional Rosenbrock problem and three standard PDE-governed inverse problems show that our method delivers competitive or superior accuracy compared with MCMC, UKI, and SVGD baselines across all tested configurations, with the most pronounced advantages emerging in challenging scenarios such as high-noise observations and high-dimensional parameter spaces.

\end{abstract}

\begin{keywords}
  Bayesian inverse problems; variational inference; adaptive sampling; deep generative models; neural surrogates.
\end{keywords}

\begin{AMS}
    62F15, 35R30, 68T07, 65C20
\end{AMS}

\section{Introduction}
Bayesian inverse problems governed by partial differential equations (PDEs) arise widely in science and engineering, including subsurface flow modeling, medical imaging, and climate science~\cite{stuart2010inverse, tarantola2005inverse}. The primary task is to recover unknown parameters from noisy, incomplete observations by characterizing the posterior distribution. In practice, the parameters are often high-dimensional, while the posterior is induced by a nonlinear and expensive PDE forward map and is typically non-Gaussian. This makes accurate and efficient posterior approximation a central challenge. Existing methods face clear limitations: MCMC methods~\cite{rudolf2018generalization, goodman2010ensemble, foreman2013emcee} are computationally expensive, Kalman-based methods such as EKI~\cite{iglesias2013ensemble} and UKI~\cite{huang2022iterated} rely on Gaussian approximations, and particle-based variational inference (VI) methods such as SVGD~\cite{liu2017stein} often degrade in high-dimensional spaces.

Forward evaluation is often the dominant computational bottleneck in Bayesian inverse problems. Neural operators such as FNO~\cite{li2020fourier} and DeepONet~\cite{lu2021learning} provide efficient surrogates, but models trained on prior samples can incur large out-of-distribution (OOD) errors in posterior regions. A related and often overlooked issue is the use of a static prior: in high-dimensional spaces lacking explicit domain knowledge, a static prior can rigidly confine inference to its initial subspace~\cite{kaipio2005statistical, dunlop2017hierarchical}. A narrow prior restricts the search space and may miss the true posterior, whereas an overly broad prior provides insufficient regularization, exacerbates ill-posedness, and can push PDE solvers into unphysical regimes. This motivates a posterior-informed view of prior construction: a useful prior should be compatible with the structure of the posterior induced by the data and the forward model. Recent adaptive surrogate methods~\cite{gao2024adaptive} partially address distribution shift, but they typically rely on Gaussian posterior approximations (via UKI) and expensive sample-selection procedures.

To address these challenges, we propose a \textbf{deep adaptive dimension-reduction Bayesian inference framework} with three components. First, we introduce \emph{Variational Flow} (VF), a latent-variable model that combines nonlinear dimension reduction with flow-based density modeling to approximate complex non-Gaussian posteriors. Second, we develop an \emph{iterative prior updating} strategy that progressively shifts the prior mean toward the posterior, improving stability and reducing prior misspecification. Third, we use \emph{adaptive surrogate fine-tuning} based on local perturbations around posterior-concentrated VF samples, avoiding expensive sample-selection heuristics. These components form a mutually reinforcing loop: VF guides the surrogate toward the posterior region, while the refined surrogate improves posterior inference.

\subsection{Related Work}

\paragraph{Bayesian inference for PDE-governed inverse problems}
Classical MCMC methods, such as preconditioned Crank--Nicolson (pCN)~\cite{rudolf2018generalization} and affine-invariant ensemble samplers~\cite{goodman2010ensemble, foreman2013emcee}, provide asymptotically exact posterior samples but are often prohibitively expensive for PDE-constrained inverse problems due to costly forward solves. Kalman-based methods, including the Ensemble Kalman Inversion (EKI)~\cite{iglesias2013ensemble} and the Unscented Kalman Inversion (UKI)~\cite{huang2022iterated}, are more efficient but rely on Gaussian posterior structure, limiting accuracy for multimodal or strongly nonlinear posteriors. Stein variational gradient descent (SVGD)~\cite{liu2016stein, liu2017stein} offers a flexible particle-based alternative, yet its reliance on gradient information and diminishing particle efficiency in high-dimensional spaces limit its applicability.

\paragraph{Neural operator surrogates and adaptive training}
Neural operators, including FNO~\cite{li2020fourier}, DeepONet~\cite{lu2021learning}, and related architectures~\cite{wang2021learning, kontolati2023learning}, provide efficient surrogates for PDE forward evaluations. However, surrogates trained only on prior samples often suffer from distribution shift and generalize poorly in posterior regions. Adaptive retraining or fine-tuning of the surrogate can mitigate this issue~\cite{gao2024adaptive, wang2024deep, cao2023residual, tang_das, li2023surrogate}. A closely related work~\cite{gao2024adaptive} initializes the surrogate on prior samples, then alternates between UKI-based posterior estimation and surrogate refinement by selecting an observation-matching anchor from posterior samples and greedily adding nearby yet prediction-diverse samples for retraining.

\paragraph{Deep generative models}
Deep generative models have been widely applied to complex problems such as image generation~\cite{dhariwal2021diffusion, bond2021deep}, protein conformation~\cite{janson2024transferable, wang2025estimating, wang2024protein}, statistical physics~\cite{wu2020stochastic, tang2022adaptive}, and probability approximation~\cite{tang2020deep, papamakarios2021normalizing}. Variational autoencoders (VAEs)~\cite{kingma2013auto} are scalable but can be limited by Gaussian assumptions. Consequently, many dimension-reduction approaches for inverse problems~\cite{xia2022bayesian, goh2019solving, xia2023vi} perform inference in a latent space following the VAE framework. Normalizing flows~\cite{rezende2015variational} support exact density evaluation and have also been applied to inverse problems~\cite{papamakarios2021normalizing}. Importantly, normalizing flows have been shown to be universal probability approximators~\cite{teshima2020coupling, kong2020expressive}, but their bijective structure generally does not provide intrinsic dimensionality reduction. The VAE-flow model~\cite{kingma2016improved} focuses on improving latent expressiveness with autoregressive flow. Our Variational Flow (VF) model leverages a dual-flow structure to enhance not only the latent distribution but also the encoder’s flexibility, and is compatible with our theoretical analysis.

\subsection{Main Contributions}
The main contributions of this paper are:
\begin{itemize}
    \item We propose \textbf{Variational Flow (VF)}, a flow-enhanced latent model for nonlinear dimension reduction and expressive posterior approximation. We not only theoretically prove that VF achieves strictly higher ELBO than VAE, but show it can capture complex non-Gaussian posteriors, including on a \textbf{100D Rosenbrock benchmark}.
    \vspace{0.5em}
    \item We propose an \textbf{iterative prior updating} strategy that adaptively refines the prior mean during inference, mitigating prior misspecification without requiring problem-specific prior design.
    \vspace{0.5em}
    \item We propose a \textbf{deep adaptive dimension-reduction Bayesian inference framework} alternating between VF updating and surrogate fine-tuning, which achieves superior performance on \textbf{PDE-governed inverse problems} like Darcy flow and Navier-Stokes equations.
\end{itemize}

\subsection{Organization}
The remainder of this paper is organized as follows. Section \ref{sec:preliminaries} introduces the preliminaries on Bayesian inverse problems. Section \ref{variational_flow} details the proposed Variational Flow model. In Section \ref{sec:method}, we present the deep adaptive Bayesian inference framework, including iterative prior updating and adaptive surrogate fine-tuning. Section \ref{sec:experiments} demonstrates the numerical performance of our framework. Finally, we conclude the paper in Section \ref{sec:conclusions}.

\section{Preliminaries}
\label{sec:preliminaries}
\subsection{Bayesian Inverse Problems}
Let $\Omega_s \subset \mathbb{R}^n$ be a bounded spatial domain and $\mb{\xi} \in \Omega_p \subset \mathbb{R}^d$ be the unknown parameter vector. We consider physical systems governed by parametric partial differential equations:
\begin{align}
	\mathcal{L}(u(\mb{x};m_{\mb{\xi}}(\mb{x})); m_{\mb{\xi}}(\mb{x})) &= s(\mb{x}), &&\forall (\mb{x}, \mb{\xi}) \in \Omega_s \times \Omega_p, \label{spdexi1}\\
	\mathcal{B}(u(\mb{x};m_{\mb{\xi}}(\mb{x})); m_{\mb{\xi}}(\mb{x})) &= g(\mb{x}), &&\forall (\mb{x}, \mb{\xi}) \in \partial \Omega_s \times \Omega_p, \label{spdexi2}
\end{align}
where $m_{\mb{\xi}}(\mb{x})$ is the coefficient field characterized by $\mb{\xi}$. Here, $\mathcal{L}$ and $\mathcal{B}$ denote the differential and boundary operators respectively, while $s$ and $g$ specify the source and boundary conditions. For any $\mb{\xi} \in \Omega_p$, we assume the system~\eqref{spdexi1}-\eqref{spdexi2} is well-posed, admitting a unique solution $u$ in a state space $\mathcal{U}$.

The goal of inverse problems is to infer the unknown parameters $\mb{\xi} \in \Omega_p$ from noisy measurements $\mb{y} \in \mathbb{R}^m$. Let $\mathcal{S}: \Omega_p \to \mathcal{U}$ be the forward parameter-to-state map, and $\mathcal{O}: \mathcal{U} \to \mathbb{R}^m$ be the observation operator. The composite parameter-to-observation map is defined as $\mathcal{G} = \mathcal{O} \circ \mathcal{S}$. We consider the additive Gaussian noise model:
\begin{equation}
    \mb{y} = \mathcal{G}(\mb{\xi}) + \mb{\eta}, \qquad \mb{\eta} \sim \mathcal{N}(\mb{0}, \mb{\Sigma}_{\eta}).
\end{equation}
The negative log-likelihood function is defined as:
\begin{equation}
    \Phi(\mb{\xi}, \mb{y}) := \frac{1}{2} \left\| \mb{\Sigma}_{\eta}^{-1/2} \left( \mb{y} - \mathcal{G}(\mb{\xi}) \right) \right\|^2.
    \label{eq:misfit}
\end{equation}
Since typically $m \ll d$, the inverse problem is ill-posed. The Bayesian framework regularizes this by modeling parameters as random variables and introducing a prior distribution, assumed here to be a Gaussian $\pi_0(\mb{\xi}) = \mathcal{N}(\mb{\mu}_0, \mb{\Sigma}_0)$ supported on $\Omega_p$. According to Bayes' theorem, the posterior distribution is:
\begin{equation}
\begin{split}
    \pi(\mb{\xi} | \mb{y}) &= \frac{1}{Z} \exp \left( -\Phi(\mb{\xi}, \mb{y}) \right) \pi_0(\mb{\xi}), \\
    &\propto \exp\left( 
    -\frac{1}{2} \left\| \mb{\Sigma}_\eta^{-\frac{1}{2}} (\mb{y} - \mathcal{G}(\mb{\xi})) \right\|^2 
    -\frac{1}{2} \left\| \mb{\Sigma}_0^{-\frac{1}{2}} (\mb{\xi} - \mb{\mu}_0) \right\|^2
    \right),
\end{split}
\label{eq:posterior}
\end{equation}
where $Z = \int_{\Omega_p} \exp(-\Phi(\mb{\xi}, \mb{y})) \pi_0(\mb{\xi}) d\mb{\xi}$ is the intractable normalization constant (model evidence).

To achieve numerical tractability, we parameterize the spatially dependent Gaussian random field $m_{\mb{\xi}}(\mb{x})$ using a truncated Karhunen-Lo\`{e}ve (KL) expansion, which maps a finite-dimensional parameter vector $\boldsymbol{\xi} = (\xi_1, \dots, \xi_d)^\top \in \mathbb{R}^d$ to the continuous field. The detailed formulation of the covariance operator and the KL expansion is provided in Appendix~\ref{appendix:kl}.

\section{Variational Flow}
\label{variational_flow}

\subsection{Background: VAEs and Normalizing Flows}

\subsubsection{Variational Autoencoder (VAE)}
To approximate the true distribution $p_{\mb{x}}$ of data $\mb{x} \in \mathbb{R}^d$, VAE~\cite{kingma2013auto} introduces a latent variable $\mb{z} \in \mathbb{R}^{k}$ ($k<d$ for dimension reduction) and optimizes the Evidence Lower Bound (ELBO):
\begin{equation}
L_{\mb{\theta},\mb{\phi}}(\mb{x})
= -D_{\mathrm{KL}}\!\left(q_{\mb{z}|\mb{x},\mb{\phi}} \,\big\|\, p_{\mb{z}}\right)
+ \int q_{\mb{z}|\mb{x},\mb{\phi}} \log p_{\mb{x}|\mb{z},\mb{\theta}} \, d\mb{z},
\label{eq:elbo}
\end{equation}
where the decoder, encoder, and prior of VAE are respectively defined as:
\begin{align}
p_{\mb{x}|\mb{z},\mb{\theta}} &= \mathcal{N}\!\left(\mb{\mu}_{\mathrm{de},\mb{\theta}}(\mb{z}), \operatorname{diag}(\mb{\sigma}^2_{\mathrm{de},\mb{\theta}}(\mb{z}))\right), \label{eq:vae_decoder}\\
q_{\mb{z}|\mb{x},\mb{\phi}} &= \mathcal{N}\!\left(\mb{\mu}_{\mathrm{en},\mb{\phi}}(\mb{x}), \operatorname{diag}(\mb{\sigma}^2_{\mathrm{en},\mb{\phi}}(\mb{x}))\right), \label{eq:vae_encoder}\\
p_{\mb{z}} &= \mathcal{N}(\mb{0},\mb{I}), \label{eq:vae_prior}
\end{align}
with mean and variance $\mb{\mu}_{\mathrm{de},\mb{\theta}}$, $\mb{\sigma}^2_{\mathrm{de},\mb{\theta}}$, $\mb{\mu}_{\mathrm{en},\mb{\phi}}$, $\mb{\sigma}^2_{\mathrm{en},\mb{\phi}}$ parameterized by neural networks.

While these structural choices enable tractable optimization, they fundamentally limit the model's expressiveness: the standard Gaussian prior distribution assigned to the latent variable $\mb{z}$ cannot capture complex structures in the latent space, and the diagonal Gaussian encoder cannot represent multimodal or highly correlated distributions, both common in high-dimensional Bayesian inference.

\subsubsection{Normalizing Flow}
A normalizing flow~\cite{rezende2015variational, papamakarios2021normalizing} constructs a complex probability distribution through an invertible, differentiable transformation $f: \mathbb{R}^k \to \mathbb{R}^k$ that maps variable $\mb{z} \in \mathbb{R}^k$ to a simple base variable $\mb{v} = f(\mb{z})$ with known density $p_{\mb{v}}(\mb{v})$ (typically $\mathcal{N}(\mb{0}, \mb{I})$). By the change-of-variables formula, the density of $\mb{z}$ is:
\begin{equation}
    p_{\mb{z}}(\mb{z}) = p_{\mb{v}}\!\left(f(\mb{z})\right) \left|\det \nabla_{\mb{z}} f(\mb{z})\right|.
\end{equation}
In practice, $f$ is composed of $K$ invertible layers $f = f_K \circ \cdots \circ f_1$, each designed so that the Jacobian determinant is efficiently computable. This yields:
\begin{equation}
    p_{\mb{z}}(\mb{z}) = p_{\mb{v}}(\mb{v}_K) \prod_{k=1}^{K} \left|\det \nabla_{\mb{v}_{k-1}} f_k(\mb{v}_{k-1})\right|,
\end{equation}
where $\mb{v}_0 = \mb{z}$ and $\mb{v}_k = f_k(\mb{v}_{k-1})$.

\paragraph{Affine coupling layers}
A key building block of normalizing flows is the affine coupling layer~\cite{dinh2014nice, dinh2016density}. Given an input $\mb{v} \in \mathbb{R}^k$, it is split into two disjoint partitions $\mb{v} = [\mb{v}^A, \mb{v}^B]$, and the transformation is defined as:
\begin{equation}
    \mb{v}^A_{\mathrm{out}} = \mb{v}^A, \quad \mb{v}^B_{\mathrm{out}} = \mb{v}^B \odot \exp\!\left(\mb{s}(\mb{v}^A)\right) + \mb{t}(\mb{v}^A),
\end{equation}
where $\mb{s}(\cdot)$ and $\mb{t}(\cdot)$ are scale and translation functions parameterized by neural networks, and $\odot$ denotes element-wise multiplication. Since $\mb{v}^A$ is unchanged, the Jacobian of this transformation is lower-triangular, and its log-determinant reduces to $\sum_j s_j(\mb{v}^A)$. More expressive variants, such as Glow~\cite{kingma2018glow}, further incorporate invertible $1 \times 1$ convolutions and actnorm layers to improve flexibility.

\paragraph{Universal approximation}
Crucially, normalizing flows have been shown to be \emph{universal probability approximators}: given sufficient depth and width, coupling-based invertible networks can approximate any diffeomorphism and thus any continuous target distribution to arbitrary precision~\cite{teshima2020coupling, kong2020expressive}. This universal approximation property is the theoretical foundation underlying the expressiveness guarantees of the Variational Flow model.

\subsubsection{Conditional Normalizing Flow}
A Conditional Normalizing Flow \allowbreak(CNF)~\cite{zhu2019physics} extends the standard normalizing flow by conditioning each invertible transformation on an external variable, here the data $\mb{x}$.

\paragraph{Architecture}
The CNF is constructed by composing $K$ invertible conditional layers. Starting from $\mb{z}_0 = \mb{z}$, each layer applies
\begin{equation}
    \mb{z}_k = f^{(k)}(\mb{z}_{k-1};\mb{x}), \quad k = 1, \dots, K.
\end{equation}
Each $f^{(k)}(\cdot\,;\mb{x})$ is invertible with respect to $\mb{z}_{k-1}$ for any fixed $\mb{x}$. Typical implementations use conditional affine coupling layers, in which the scale and translation functions $\mb{s}(\cdot\,;\mb{x})$ and $\mb{t}(\cdot\,;\mb{x})$ take both the passthrough partition and $\mb{x}$ as inputs.

\paragraph{Base distribution}
The final output $\mb{z}_K = \mb{u}$ follows a conditional Gaussian base distribution:
\begin{equation}
p_{\mb{u}}(\mb{u};\mb{x}) = \mathcal{N}\!\left(\mb{\mu}(\mb{x}),\operatorname{diag}(\mb{\sigma}^2(\mb{x}))\right),
\end{equation}
where the mean $\mb{\mu}(\mb{x})$ and variance $\mb{\sigma}^2(\mb{x})$ are parameterized by neural networks. This data-dependent base distribution provides additional flexibility beyond a fixed standard Gaussian.

\paragraph{Density computation}
By iteratively applying the change-of-variables formula, the conditional density is:
\begin{equation}
q_{\mb{z}|\mb{x},\mb{\alpha}}(\mb{z}|\mb{x})
= p_{\mb{u}}(\mb{u};\mb{x})
\prod_{k=1}^{K}
\left|\det \frac{\partial f^{(k)}(\mb{z}_{k-1};\mb{x})}{\partial \mb{z}_{k-1}}\right|.
\end{equation} Since each conditional layer preserves the triangular Jacobian structure of affine coupling, the log-determinant remains computable.

\subsection{Variational Flow Model}
\label{sec:vf_model}

Normalizing flows provide expressive density models with exact likelihood evaluation, but their bijective structure requires equal input and output dimensions and thus does not support intrinsic dimensionality reduction. VAEs naturally address this issue through latent-variable compression, but standard Gaussian priors and encoders can be restrictive. We therefore propose Variational Flow (VF), which enriches both the latent prior and the encoder using normalizing flows.

\paragraph{Generalizing the prior}
We replace the fixed Gaussian prior $p_{\mb{z}} = \mathcal{N}(\mb{0}, \mb{I})$ with a normalizing flow prior. Specifically, we define a normalizing flow $f_{\mathrm{pr},\mb{\beta}}: \mathbb{R}^k \to \mathbb{R}^k$ with trainable parameters $\mb{\beta}$, which maps the latent variable $\mb{z}$ to a base variable $\mb{v} = f_{\mathrm{pr},\mb{\beta}}(\mb{z})$ with $\mb{v} \sim p_{\mb{v}} = \mathcal{N}(\mb{0}, \mb{I})$. By the change-of-variables formula, the induced prior density on $\mb{z}$ is:
\begin{equation}
p_{\mb{z},\mb{\beta}}(\mb{z}) = p_{\mb{v}}\!\big(f_{\mathrm{pr},\mb{\beta}}(\mb{z})\big)\left|\det \nabla_{\mb{z}} f_{\mathrm{pr},\mb{\beta}}(\mb{z})\right|.
\end{equation}



\paragraph{Generalizing the encoder}
We replace the diagonal Gaussian encoder $q_{\mb{z}|\mb{x},\mb{\phi}}$ with a CNF. Specifically, we define a CNF $f_{\mathrm{en},\mb{\alpha}}: \mathbb{R}^k \to \mathbb{R}^k$ with trainable parameters $\mb{\alpha}$, which maps the latent variable $\mb{z}$ to a base variable $\mb{u} = f_{\mathrm{en},\mb{\alpha}}(\mb{z};\mb{x})$ with $\mb{u} \sim p_{\mb{u}}(\mb{u};\mb{x})$. By the change-of-variables formula, the encoder density becomes:
\begin{equation}
q_{\mb{z}|\mb{x},\mb{\alpha}}(\mb{z}|\mb{x})
= p_{\mb{u}}\!\left(f_{\mathrm{en},\mb{\alpha}}(\mb{z};\mb{x});\mb{x}\right)
\left|\det \nabla_{\mb{z}} f_{\mathrm{en},\mb{\alpha}}(\mb{z};\mb{x})\right|.
\label{eq:cnf_encoder}
\end{equation}

\paragraph{VF definition}
Combining the above generalizations, the Variational Flow is defined by:
\begin{align}
p_{\mb{z},\mb{\beta}}(\mb{z})&=p_{\mb{v}}\!\big(f_{\mathrm{pr},\mb{\beta}}(\mb{z})\big)\left|\det \nabla_{\mb{z}} f_{\mathrm{pr},\mb{\beta}}(\mb{z})\right|, \label{vf_prior}\\
q_{\mb{z}|\mb{x},\mb{\alpha}}(\mb{z}|\mb{x})&=p_{\mb{u}}\!\left(f_{\mathrm{en},\mb{\alpha}}(\mb{z};\mb{x});\,\mb{x}\right) \left|\det \nabla_{\mb{z}} f_{\mathrm{en},\mb{\alpha}}(\mb{z};\mb{x})\right|,\label{vf_encoder}\\
p_{\mb{x}|\mb{z},\mb{\theta}}(\mb{x}|\mb{z})&=\mathcal{N}\!\left(\mb{\mu}_{\mathrm{de},\mb{\theta}}(\mb{z}), \operatorname{diag}(\mb{\sigma}^2_{\mathrm{de},\mb{\theta}}(\mb{z}))\right).\label{vf_decoder}
\end{align}
This ``dual-flow" architecture addresses the main limitation of standard normalizing flows -- the bijectivity constraint that prevents dimensionality reduction. It achieves this by using the encoder-decoder structure of VAE for latent compression and applying flows to model the density in both the prior and encoder.
\begin{thm}
\label{thm:vf_elbo_extended}
Suppose the decoder $p_{\mb{x}|\mb{z},\mb{\theta}}$ in VAE is parameterized by nonlinear neural networks. If
\begin{enumerate}
\item[(a)] the family of flow priors $\{p_{\mb{z},\mb{\beta}}\}_{\mb{\beta}}$ induced by the bijection $f_{\mathrm{pr},\boldsymbol{\beta}}$ can approximate any $k$-dimensional probability density function to arbitrary precision;
\item[(b)] the family of conditional normalizing flow encoders $\{q_{\mb{z}|\mb{x},\boldsymbol{\alpha}}\}_{\boldsymbol{\alpha}}$ can approximate any conditional distribution to arbitrary precision;
\end{enumerate}
suppose $(\mb{\phi}^*,\mb{\theta}^*)$ are the optimal parameters for standard VAE:
\begin{equation}
(\mb{\phi}^*,\mb{\theta}^*) = \mathop{\arg\max}_{\mb{\phi},\, \mb{\theta}} \; \mathbb{E}_{p_{\mb{x}}}[L_{\mb{\phi},\mb{\theta}}(\mb{x})],
\end{equation}
then the Variational Flow (VF) model achieves a strictly higher Evidence Lower Bound (ELBO) than standard VAE if either of the following conditions holds:
\begin{enumerate}
\item[(i)] the aggregated posterior of the optimal encoder in the standard VAE does not equal the prior, i.e., $q_{\mb{z},\boldsymbol{\phi}^*}= \int q_{\mb{z}|\mb{x},\mb{\phi}^*} p_{\mb{x}}d\mb{x} \neq p_{\mb{z}}$;
\item[(ii)] there exists a set of positive measure with respect to $p_{\mb{x}}$ such that the model posterior $p_{\mb{z}|\mb{x},\boldsymbol{\theta}^*}=p_{\mb{x}|\mb{z},\mb{\theta}^*}p_{\mb{z}}/ \int p_{\mb{x}|\mb{z},\mb{\theta}^*}p_{\mb{z}}d\mb{z}$ of VAE is not a diagonal Gaussian distribution.
\end{enumerate}
Furthermore, when conditions (i) and (ii) hold simultaneously, the flow prior and the conditional normalizing flow encoder each provide a strictly positive improvement to the ELBO.
\end{thm}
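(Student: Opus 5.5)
Fix the VAE optimum $(\mb{\phi}^*,\mb{\theta}^*)$ and keep the decoder at $\mb{\theta}^*$. The VF ELBO is at least the value obtained with this decoder and any admissible prior and encoder, so it suffices to build a VF configuration $(\mb{\alpha},\mb{\beta},\mb{\theta}^*)$ whose expected ELBO is strictly above $\mathbb{E}_{p_{\mb{x}}}[L_{\mb{\phi}^*,\mb{\theta}^*}(\mb{x})]$. We also use that the VAE family embeds in the VF family: taking $f_{\mathrm{pr},\mb{\beta}}$ and $f_{\mathrm{en},\mb{\alpha}}$ equal to the identity, with base encoder $p_{\mb{u}}(\cdot;\mb{x})$ equal to the VAE encoder, recovers the VAE exactly, so VF is never worse. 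Strictness then comes from two standard decompositions of the averaged ELBO, each isolating one component.

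\textbf{Prior term (condition (i)).} Since $\mathbb{E}_{p_{\mb{x}}}\int q_{\mb{z}|\mb{x}}\log p_{\mb{z}}\,d\mb{z}=\int q_{\mb{z}}\log p_{\mb{z}}\,d\mb{z}$, we have
\begin{equation*}
\mathbb{E}_{p_{\mb{x}}}[L]=\mathbb{E}_{p_{\mb{x}}}\Big[\int q_{\mb{z}|\mb{x}}\log p_{\mb{x}|\mb{z}}\,d\mb{z}+H(q_{\mb{z}|\mb{x}})\Big]-D_{\mathrm{KL}}(q_{\mb{z}}\,\|\,p_{\mb{z}})-H(q_{\mb{z}}),
\end{equation*}
where the prior enters only through $-D_{\mathrm{KL}}(q_{\mb{z}}\|p_{\mb{z}})$. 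Fix the encoder at $\mb{\phi}^*$, viewed as a CNF with identity flow. Under (i), $\delta:=D_{\mathrm{KL}}(q_{\mb{z},\mb{\phi}^*}\|\mathcal{N}(\mb{0},\mb{I}))>0$. By assumption (a), choose $\mb{\beta}$ with $D_{\mathrm{KL}}(q_{\mb{z},\mb{\phi}^*}\|p_{\mb{z},\mb{\beta}})<\delta/2$. This gives a gain of at least $\delta/2>0$.

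\textbf{Encoder term (condition (ii)).} For a fixed prior and decoder, $L(\mb{x})=\log p_{\mb{x}}^{\mathrm{model}}(\mb{x})-D_{\mathrm{KL}}(q_{\mb{z}|\mb{x}}\|p_{\mb{z}|\mb{x},\mb{\theta}^*})$. Under (ii), for $\mb{x}$ in a positive-measure set $A$ the true posterior is not diagonal Gaussian, so the diagonal Gaussian $q_{\mb{z}|\mb{x},\mb{\phi}^*}$ differs from it and $D_{\mathrm{KL}}>0$ on $A$. Integrating gives $\epsilon:=\mathbb{E}_{p_{\mb{x}}}[D_{\mathrm{KL}}(q_{\mb{z}|\mb{x},\mb{\phi}^*}\|p_{\mb{z}|\mb{x},\mb{\theta}^*})]>0$. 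By (b), choose $\mb{\alpha}$ so that this expected KL falls below $\epsilon/2$, with the prior kept at $\mathcal{N}(\mb{0},\mb{I})$ (identity prior flow). This raises the ELBO by at least $\epsilon/2$.

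\textbf{Both conditions.} When (i) and (ii) hold together, each construction alone gives a strict gain, so each component contributes a strictly positive improvement. If the statement is read as requiring the combined model to gain from both, argue sequentially: first replace the prior by the flow prior with gain at least $\delta/2$, then improve the encoder with respect to the new prior and decoder. The posterior under the new prior, $p_{\mb{x}|\mb{z},\mb{\theta}^*}p_{\mb{z},\mb{\beta}}/(\cdot)$, is generically still not of the form of the current encoder, which gives a further gain.

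\textbf{Main obstacle.} The hardest step is making "approximate to arbitrary precision" in (a)/(b) yield KL closeness, and in (b) uniformly enough in $\mb{x}$ that the expected KL goes to zero, for instance via dominated convergence under integrability assumptions. I would make this precise by interpreting precision in KL divergence. A second subtle point is the sequential strictness: I would argue that the flow prior changes the model posterior while the encoder is still diagonal Gaussian, so condition (ii) persists. Failing that, I would state the final claim as the two separate gains $\delta/2$ and $\epsilon/2$ relative to the VAE baseline.
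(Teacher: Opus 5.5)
Your proposal is correct and follows essentially the same route as the paper. For the flow prior under (i) you use the aggregated-posterior identity that isolates $D_{\mathrm{KL}}(q_{\mb{z},\mb{\phi}^*}\,\|\,p_{\mb{z}})$; for the CNF encoder under (ii) you use the decomposition into model evidence minus expected posterior KL, with the prior flow set to the identity when only (ii) holds; and you bound the VF supremum from below by the constructed value. The only difference is in the ``furthermore'' clause: the paper chains $E_2 > E_1 > E_0$ by applying (ii) to the posterior under the new prior $p_{\mb{z},\tilde{\mb{\beta}}}$, justified only by a ``generally not quadratic'' remark, whereas (ii) concerns the posterior under $\mathcal{N}(\mb{0},\mb{I})$. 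Your caution there is well founded, and your parallel reading, with each gain measured against the VAE baseline, is the version that follows rigorously from (i) and (ii) as stated.
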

\begin{proof}

Let 
\begin{equation}
E_0 \coloneqq \mathbb{E}_{p_{\mb{x}}}[L_{\mb{\phi}^*,\mb{\theta}^*}(\mb{x})]
= -\mathbb{E}_{p_{\mb{x}}}[D_{\mathrm{KL}}(q_{\mb{z}|\mb{x},\mb{\phi}^*} \| p_{\mb{z}})]
+ \mathbb{E}_{p_{\mb{x}}\, q_{\mb{z}|\mb{x},\mb{\phi}^*}}[\log p_{\mb{x}|\mb{z},\mb{\theta}^*}].
\end{equation}

\paragraph{Generalize the prior}
Fixing the encoder $q_{\mb{z}|\mb{x},\mb{\phi}^*}$ and decoder $p_{\mb{x}|\mb{z},\mb{\theta}^*}$ and replacing the prior $p_{\mb{z}}$ with the normalizing flow prior $p_{\mb{z},\mb{\beta}}$, the ELBO becomes
\begin{equation}
\mathbb{E}_{p_{\mb{x}}}[L_{\mb{\phi}^*,\mb{\theta}^*,\mb{\beta}}(\mb{x})]
= -\mathbb{E}_{p_{\mb{x}}}[D_{\mathrm{KL}}(q_{\mb{z}|\mb{x},\mb{\phi}^*} \| p_{\mb{z},\mb{\beta}})]
+ \mathbb{E}_{p_{\mb{x}}\, q_{\mb{z}|\mb{x},\mb{\phi}^*}}[\log p_{\mb{x}|\mb{z},\mb{\theta}^*}].
\end{equation}
Subtracting $E_0$ yields
\begin{equation}
\mathbb{E}_{p_{\mb{x}}}[L_{\mb{\phi}^*,\mb{\theta}^*,\mb{\beta}}(\mb{x})] - E_0
= \mathbb{E}_{p_{\mb{x}}}[D_{\mathrm{KL}}(q_{\mb{z}|\mb{x},\mb{\phi}^*} \| p_{\mb{z}})]
- \mathbb{E}_{p_{\mb{x}}}[D_{\mathrm{KL}}(q_{\mb{z}|\mb{x},\mb{\phi}^*} \| p_{\mb{z},\mb{\beta}})].
\label{eq:diff_prior}
\end{equation}
Simplifying the right-hand side of the above equation, we note that
\begin{align}
\mathbb{E}_{p_{\mb{x}}}[D_{\mathrm{KL}}(q_{\mb{z}|\mb{x},\mb{\phi}^*} \| p)]
&= \int q_{\mb{z}|\mb{x},\mb{\phi}^*}(\mb{z}|\mb{x})\, p_{\mb{x}}(\mb{x}) \log \frac{q_{\mb{z}|\mb{x},\mb{\phi}^*}(\mb{z}|\mb{x})}{p(\mb{z})}\, d\mb{z}\, d\mb{x} \nonumber\\
&= \int q_{\mb{z}|\mb{x},\mb{\phi}^*}\, p_{\mb{x}} \log q_{\mb{z}|\mb{x},\mb{\phi}^*}\, d\mb{z}\, d\mb{x}
- \int q_{\mb{z},\mb{\phi}^*}(\mb{z}) \log p(\mb{z})\, d\mb{z},
\end{align}
where $q_{\mb{z},\mb{\phi}^*}(\mb{z}) = \int q_{\mb{z}|\mb{x},\mb{\phi}^*}(\mb{z}|\mb{x})\, p_{\mb{x}}(\mb{x})\, d\mb{x}$ is the aggregated posterior. Thus, \eqref{eq:diff_prior} simplifies to
\begin{equation}
\mathbb{E}_{p_{\mb{x}}}[L_{\mb{\phi}^*,\mb{\theta}^*,\mb{\beta}}(\mb{x})] - E_0
= D_{\mathrm{KL}}(q_{\mb{z},\mb{\phi}^*} \| p_{\mb{z}}) - D_{\mathrm{KL}}(q_{\mb{z},\mb{\phi}^*} \| p_{\mb{z},\mb{\beta}}).
\label{eq:prior_improvement}
\end{equation}
By assumption (a), the flow prior $p_{\mb{z},\mb{\beta}}$ can approximate $q_{\mb{z},\mb{\phi}^*}$ to arbitrary precision. If condition (i) holds, i.e., $q_{\mb{z},\mb{\phi}^*} \neq p_{\mb{z}}$, then $D_{\mathrm{KL}}(q_{\mb{z},\mb{\phi}^*} \| p_{\mb{z}}) > 0$. By choosing $\tilde{\mb{\beta}}$ such that $D_{\mathrm{KL}}(q_{\mb{z},\mb{\phi}^*} \| p_{\mb{z},\tilde{\mb{\beta}}}) < D_{\mathrm{KL}}(q_{\mb{z},\mb{\phi}^*} \| p_{\mb{z}})$, we have
\begin{equation}
E_1 \coloneqq \mathbb{E}_{p_{\mb{x}}}[L_{\mb{\phi}^*,\mb{\theta}^*,\tilde{\mb{\beta}}}(\mb{x})] > E_0.
\label{eq:E1_gt_E0}
\end{equation}

\paragraph{Generalize the encoder} 
Fixing the decoder $p_{\mb{x}|\mb{z},\mb{\theta}^*}$ and the prior $p_{\mb{z},\tilde{\mb{\beta}}}$, we extend the encoder from a diagonal Gaussian to a conditional normalizing flow $q_{\mb{z}|\mb{x},\boldsymbol{\alpha}}$. Utilizing the following decomposition of the ELBO: 
\begin{align}
\mathbb{E}_{p_{\mb{x}}}[L_{\mb{\alpha},\mb{\theta}^*,\tilde{\mb{\beta}}}(\mb{x})]
&= \mathbb{E}_{p_{\mb{x}}}\left[\int q_{\mb{z}|\mb{x},\mb{\alpha}}\left[\log\frac{p_{\mb{x}|\mb{z},\mb{\theta}^*}p_{\mb{z},\tilde{\mb{\beta}}}}{q_{\mb{z}|\mb{x},\mb{\alpha}}}\right]d\mb{z}\right] \nonumber\\
&= \mathbb{E}_{p_{\mb{x}}}\left[\int q_{\mb{z}|\mb{x},\mb{\alpha}}\left[\log\frac{p_{\mb{x},\mb{\theta}^*,\tilde{\mb{\beta}}}p_{\mb{z}|\mb{x},\mb{\theta}^*,\tilde{\mb{\beta}}}}{q_{\mb{z}|\mb{x},\mb{\alpha}}}\right]d\mb{z}\right] \nonumber\\
&=\mathbb{E}_{p_{\mb{x}}}[\log p_{\mb{x},\mb{\theta}^*,\tilde{\mb{\beta}}}(\mb{x})]
- \mathbb{E}_{p_{\mb{x}}}[D_{\mathrm{KL}}(q_{\mb{z}|\mb{x},\mb{\alpha}} \| p_{\mb{z}|\mb{x},\mb{\theta}^*,\tilde{\mb{\beta}}})],\label{eq:elbo_decomp}
\end{align}
where
\begin{equation}
p_{\mb{x},\mb{\theta}^*,\tilde{\mb{\beta}}}(\mb{x}) = \int p_{\mb{x}|\mb{z},\mb{\theta}^*}(\mb{x}|\mb{z})\, p_{\mb{z},\tilde{\mb{\beta}}}(\mb{z})\, d\mb{z}
\end{equation}
is the model evidence (independent of the encoder parameters), and
\begin{equation}
p_{\mb{z}|\mb{x},\mb{\theta}^*,\tilde{\mb{\beta}}}(\mb{z}|\mb{x})
= \frac{p_{\mb{x}|\mb{z},\mb{\theta}^*}(\mb{x}|\mb{z})\, p_{\mb{z},\tilde{\mb{\beta}}}(\mb{z})}{p_{\mb{x},\mb{\theta}^*,\tilde{\mb{\beta}}}(\mb{x})}
\end{equation}
is the true model posterior under the decoder $p_{\mb{x}|\mb{z},\mb{\theta}^*}$ and the prior $p_{\mb{z},\tilde{\mb{\beta}}}$.
From the decomposition~\eqref{eq:elbo_decomp}, maximizing the ELBO with respect to the encoder is equivalent to minimizing $\mathbb{E}_{p_{\mb{x}}}[D_{\mathrm{KL}}(q_{\mb{z}|\mb{x},\mb{\alpha}} \| p_{\mb{z}|\mb{x},\mb{\theta}^*,\tilde{\mb{\beta}}})]$. In prior generalization, the encoder remains a diagonal Gaussian $q_{\mb{z}|\mb{x},\mb{\phi}^*}$, and the corresponding ELBO is
\begin{equation}
E_1 = \mathbb{E}_{p_{\mb{x}}}[\log p_{\mb{x},\mb{\theta}^*,\tilde{\mb{\beta}}}(\mb{x})]
- \mathbb{E}_{p_{\mb{x}}}[D_{\mathrm{KL}}(q_{\mb{z}|\mb{x},\mb{\phi}^*} \| p_{\mb{z}|\mb{x},\mb{\theta}^*,\tilde{\mb{\beta}}})].
\end{equation}
Since the decoder mean $\mb{\mu}_{\mathrm{de},\mb{\theta}^*}(\mb{z})$ is a nonlinear function of $\mb{z}$, the log model posterior
$$
\log p_{\mb{z}|\mb{x},\mb{\theta}^*,\tilde{\mb{\beta}}}(\mb{z}|\mb{x})
= \log p_{\mb{x}|\mb{z},\mb{\theta}^*}(\mb{x}|\mb{z}) + \log p_{\mb{z},\tilde{\mb{\beta}}}(\mb{z}) - \log p_{\mb{x},\mb{\theta}^*,\tilde{\mb{\beta}}}(\mb{x})
$$
is generally not a quadratic function of $\mb{z}$; thus, $p_{\mb{z}|\mb{x},\mb{\theta}^*,\tilde{\mb{\beta}}}$ is generally not a diagonal Gaussian distribution. If condition (ii) holds, the diagonal Gaussian encoder cannot accurately match the model posterior, i.e.,
\begin{equation}
\mathbb{E}_{p_{\mb{x}}}[D_{\mathrm{KL}}(q_{\mb{z}|\mb{x},\mb{\phi}^*} \| p_{\mb{z}|\mb{x},\mb{\theta}^*,\tilde{\mb{\beta}}})] > 0.
\label{eq:gauss_gap}
\end{equation}
By assumption (b), the conditional normalizing flow encoder $q_{\mb{z}|\mb{x},\mb{\alpha}}$ can approximate any conditional distribution to arbitrary precision. Therefore, there exists $\tilde{\mb{\alpha}}$ such that
\begin{equation}
\mathbb{E}_{p_{\mb{x}}}[D_{\mathrm{KL}}(q_{\mb{z}|\mb{x},\tilde{\mb{\alpha}}} \| p_{\mb{z}|\mb{x},\mb{\theta}^*,\tilde{\mb{\beta}}})]
< \mathbb{E}_{p_{\mb{x}}}[D_{\mathrm{KL}}(q_{\mb{z}|\mb{x},\mb{\phi}^*} \| p_{\mb{z}|\mb{x},\mb{\theta}^*,\tilde{\mb{\beta}}})].
\label{eq:encoder_improvement}
\end{equation}
From \eqref{eq:elbo_decomp} and \eqref{eq:encoder_improvement}, we have
\begin{equation}
E_2 \coloneqq \mathbb{E}_{p_{\mb{x}}}[L_{\tilde{\mb{\alpha}},\mb{\theta}^*,\tilde{\mb{\beta}}}(\mb{x})] > E_1.
\label{eq:E2_gt_E1}
\end{equation}

\paragraph{Combining the two steps}
From \eqref{eq:E1_gt_E0} and \eqref{eq:E2_gt_E1}, when both conditions (i) and (ii) hold,
\begin{equation}
E_2 > E_1 > E_0.
\end{equation}
Since the optimal ELBO of the VF model is no lower than the ELBO value at any specific parameters, we have
\begin{equation}
\sup_{\mb{\alpha},\, \mb{\theta},\, \mb{\beta}} \mathbb{E}_{p_{\mb{x}}}[L_{\mb{\alpha},\mb{\theta},\mb{\beta}}(\mb{x})]
\geq E_2 > E_0
= \sup_{\mb{\phi},\, \mb{\theta}} \mathbb{E}_{p_{\mb{x}}}[L_{\mb{\phi},\mb{\theta}}(\mb{x})].
\end{equation}
When only condition (ii) holds, one can directly choose $\tilde{\mb{\beta}}$ in encoder generalization step to correspond to the identity mapping (i.e., the prior remains $p_{\mb{z}}$), in which case the encoder improvement still provides a strict improvement. In summary, if either condition (i) or (ii) holds, a strict improvement in the ELBO is guaranteed.
\end{proof}

\begin{remark}
Condition (i) is expected to hold for most non-degenerate VAEs in practice: if $q_{\mb{z},\boldsymbol{\phi}^*} = p_{\mb{z}} = \mathcal{N}(\mb{0},\mb{I})$, it implies that the marginal distribution of the latent variables across all training samples is exactly the standard Gaussian, which is almost impossible to achieve precisely in practice. 
Condition (ii) typically holds when the nonlinear decoder induces a non-quadratic log-likelihood in the latent variable: since the model posterior satisfies $p_{\mb{z}|\mb{x},\boldsymbol{\theta}^*}(\mb{z}|\mb{x}) \propto p_{\mb{x}|\mb{z},\boldsymbol{\theta}^*}(\mb{x}|\mb{z})\, p_{\mb{z}}(\mb{z})$, and $\log p_{\mb{x}|\mb{z},\boldsymbol{\theta}^*}$ is generally non-quadratic with respect to $\mb{z}$ (as the decoder mean $\boldsymbol{\mu}_{\mathrm{de},\boldsymbol{\theta}^*}(\mb{z})$ is a nonlinear function), the model posterior is generally non-Gaussian.
\end{remark}

\begin{remark}
The proof reveals that ELBO improvement stems from two complementary mechanisms:
\begin{itemize}

\item \textbf{Prior matching}: The flow prior eliminates the mismatch between the aggregated posterior $q_{\mb{z},\mb{\phi}^*}$ and the prior $p_{\mb{z}}$, with a contribution of $D_{\mathrm{KL}}(q_{\mb{z},\mb{\phi}^*} \| p_{\mb{z}}) - D_{\mathrm{KL}}(q_{\mb{z},\mb{\phi}^*} \| p_{\mb{z},\tilde{\mb{\beta}}}) > 0$;

\item \textbf{Posterior approximation}: The CNF encoder narrows the variational gap, i.e., the difference in KL divergence between the encoder and the true model posterior: $\mathbb{E}_{p_{\mb{x}}}[D_{\mathrm{KL}}(q_{\mb{z}|\mb{x},\mb{\phi}^*} \| p_{\mb{z}|\mb{x},\cdot})] - \mathbb{E}_{p_{\mb{x}}}[D_{\mathrm{KL}}(q_{\mb{z}|\mb{x},\tilde{\mb{\alpha}}} \| p_{\mb{z}|\mb{x},\cdot})] > 0$.
\end{itemize}
The two work synergistically: the flow prior absorbs the burden of matching the aggregated posterior, freeing the CNF encoder to capture complex model posteriors (multimodality, asymmetry, etc.).
\end{remark}
\subsection{Approximating Unnormalized Densities via Variational Flow}
\label{sec:vf_posterior}

Beyond standard generative modeling where VF is trained on observed data by maximizing the ELBO (see Appendix~\ref{appendix:vf_case1}), VF can also approximate a target density $\hat{p}(\mb{x})$ that is known only up to a normalization constant $C = \int \hat{p}(\mb{x})\, d\mb{x}$. We minimize the KL divergence between the VF joint and a target joint constructed with the encoder:
\begin{equation}
\label{vf_object}
\begin{aligned}
D_{\mathrm{KL}}\!\left(p_{\mb{x}|\mb{z},\mb{\theta}}\,p_{\mb{z},\mb{\beta}}\,\Big\Vert\, q_{\mb{z}|\mb{x},\mb{\alpha}}\,\frac{\hat{p}(\mb{x})}{C}\right)
=& \int p_{\mb{x}|\mb{z},\mb{\theta}}\,p_{\mb{z},\mb{\beta}} \log\!\left(\frac{p_{\mb{x}|\mb{z},\mb{\theta}}\,p_{\mb{z},\mb{\beta}}}{q_{\mb{z}|\mb{x},\mb{\alpha}}\,\hat{p}(\mb{x})}\right) d\mb{z}\,d\mb{x} \\
&+ \log C.
\end{aligned}
\end{equation}
Since $\log C$ is independent of the model parameters $(\mb{\beta},\mb{\alpha},\mb{\theta})$, we minimize only the first term via Monte Carlo:
\begin{equation}
\mathcal{L}_{\mathrm{VF}}(\mb{\beta},\mb{\alpha},\mb{\theta}) \approx \frac{1}{M}\sum_{i=1}^{M}\log\!\left(\frac{p_{\mb{x}|\mb{z},\mb{\theta}}(\mb{x}^{(i)}|\mb{z}^{(i)})\,p_{\mb{z},\mb{\beta}}(\mb{z}^{(i)})}{q_{\mb{z}|\mb{x},\mb{\alpha}}(\mb{z}^{(i)}|\mb{x}^{(i)})\,\hat{p}(\mb{x}^{(i)})}\right),
\label{eq:vf_loss}
\end{equation}
where $\mb{z}^{(i)} \sim p_{\mb{z},\mb{\beta}}$ and $\mb{x}^{(i)} \sim p_{\mb{x}|\mb{z}^{(i)},\mb{\theta}}$. In this work, we set $\hat{p}(\mb{x}) = \exp\!\left(-\Phi(\mb{\xi},\mb{y})\right)\pi_0(\mb{\xi})$ as the unnormalized posterior from Eq.~\eqref{eq:posterior} by identifying the VF data variable $\mb{x}$ with the unknown parameter vector $\mb{\xi}$. Once trained, VF efficiently generates posterior samples $\mb{\xi} \sim p_{\mathrm{VF}}(\mb{\xi})$ via 
\begin{equation}
\mb{z}=f^{-1}_{\mathrm{pr},\mb{\beta}}(\mb{v}),\; \mb{v}\sim \mathcal{N}(\mb{0},\mb{I})\quad \Rightarrow \quad \mb{\xi}= \mb{\mu}_{\mathrm{de},\mb{\theta}}(\mb{z}) + \mb{\sigma}_{\mathrm{de},\mb{\theta}}(\mb{z}) \odot \mb{\epsilon}, \; \mb{\epsilon}\sim \mathcal{N}(\mb{0},\mb{I}),
\end{equation}
which are used to adaptively update the surrogate model as described in the next section.

\section{Deep Adaptive Bayesian Inference}
\label{sec:method}

\subsection{Overall Framework}
\label{sec:framework}

The deep adaptive framework addresses two coupled challenges: the out-of-distribution (OOD) inaccuracy of the surrogate and the difficulty of specifying an appropriate prior mean. We tackle both through an alternating optimization loop between the VF posterior approximation and the surrogate model fine-tuning, alongside a dynamic update of the prior mean.

At each stage $k$, the framework executes three sequential steps:
\begin{itemize}

    \item \textbf{VF training}: approximate the current unnormalized posterior $$\hat{\pi}(\mb{\xi}) \propto \exp(-\Phi(\mb{\xi},\mb{y}))\,\pi_0^{(k)}(\mb{\xi})$$ via~\eqref{eq:vf_loss} and update the prior mean iteratively (see Section~\ref{sec:iter_prior}).
    
    \item \textbf{Data replacement}: sample $\{\mb{\xi}_{\mathrm{post}}^{(j)}\}_{j=1}^M \sim p_{\mathrm{VF}}(\mb{\xi})$, query the exact PDE solver for these samples, and construct a newly generated local dataset $\mathcal{D}_k$ as demonstrated in Subsection~\ref{sec:adaptive_surrogate}, discarding the historical data.
    
    \item \textbf{Surrogate fine-tuning}: fine-tune the surrogate model $\mathcal{F}_{\mb{\vartheta}}$ exclusively on the new dataset $\mathcal{D}_k$ to align its accuracy with the current posterior region.
\end{itemize}

The loop terminates when the relative change in data misfit evaluated at the prior mean,
$\bigl| \Phi(\mb{\mu}_{\mathrm{prior}}^{(k-1)},\mb{y}) - \Phi(\mb{\mu}_{\mathrm{prior}}^{(k)},\mb{y}) \bigr|
/ \Phi(\mb{\mu}_{\mathrm{prior}}^{(k-1)},\mb{y})$ falls below a threshold $\epsilon$. Algorithm~\ref{alg:dabai} summarizes the procedure. In our implementation, we employ the Fourier Neural Operator (FNO) as the surrogate model. Details regarding its pre-training are provided in Appendix~\ref{appendix:fno}.

\begin{algorithm}[ht]
\caption{Deep Adaptive Bayesian Inference}
\label{alg:dabai}
\begin{algorithmic}[1]
\REQUIRE Pre-trained FNO $\mathcal{F}_{\mb{\vartheta}^{(0)}}$, initial dataset $\mathcal{D}_0$ sampled from $\pi_0(\mb{\xi}) = \mathcal{N}(\mb{\mu}_0, \mb{\Sigma}_0)$, initial prior mean $\mb{\mu}_{\mathrm{prior}}^{(0)} = \mb{\mu}_{0}$, VF parameters $\mb{\psi}^{(0)}$, max stages $K$, inner epochs $N_e$, samples per stage $M$, coefficient $\alpha$, perturbation strength $\gamma$, tolerance $\epsilon$.
\FOR{$k = 1, 2, \ldots, K$}
    \FOR{$i = 1, 2, \ldots, N_e$}
        \STATE Compute current VF mean estimate $\mb{\mu}_{\mathrm{post}}^{(k,i)}$; set $\mb{\mu}_{\mathrm{prior}}^{(k,i)} = \alpha\,\mb{\mu}_{\mathrm{post}}^{(k,i)} + (1-\alpha)\,\mb{\mu}_{\mathrm{prior}}^{(k-1)}$.
        \STATE Set prior $\pi_0^{(k,i)} = \mathcal{N}(\mb{\mu}_{\mathrm{prior}}^{(k,i)}, \mb{\Sigma}_0)$.
        \STATE Update VF to approximate $\hat{\pi}(\mb{\xi}) \propto \exp(-\Phi(\mb{\xi},\mb{y}))\,\pi_0^{(k,i)}(\mb{\xi})$ by minimizing $\mathcal{L}_{\mathrm{VF}}$~\eqref{eq:vf_loss}.
    \ENDFOR
    \STATE Update prior mean: $\mb{\mu}_{\mathrm{prior}}^{(k)} \leftarrow \mb{\mu}_{\mathrm{prior}}^{(k, N_e)}$.
    \IF{$\bigl| \Phi(\mb{\mu}_{\mathrm{prior}}^{(k-1)},\mb{y}) - \Phi(\mb{\mu}_{\mathrm{prior}}^{(k)},\mb{y}) \bigr| \,/\, \Phi(\mb{\mu}_{\mathrm{prior}}^{(k-1)},\mb{y}) < \epsilon$}
        \STATE \textbf{break}
    \ENDIF
    \STATE Sample $\{\mb{\xi}_{\mathrm{post}}^{(j)}\}_{j=1}^M \sim p_{\mathrm{VF}}(\mb{\xi})$; add perturbations $\hat{\mb{\xi}}^{(j)} = \mb{\xi}_{\mathrm{post}}^{(j)} + \gamma\,\mb{\nu}^{(j)}$, $\mb{\nu}^{(j)} \sim \mathcal{N}(\mb{0},\mb{I})$.
    \STATE Solve PDE for each $\hat{\mb{\xi}}^{(j)}$; update $\mathcal{D}_k \leftarrow \{(m_{\hat{\mb{\xi}}^{(j)}}, u(\cdot,\hat{\mb{\xi}}^{(j)}))\}_{j=1}^M$.
    \STATE Fine-tune FNO: $\mb{\vartheta}^{(k)} \leftarrow \mathop{\arg\min}_{\mb{\vartheta}}\, \mathcal{L}_{\mathrm{FNO}}(\mb{\vartheta};\mathcal{D}_k)$~\eqref{eq:fno_loss}.
\ENDFOR
\ENSURE Trained VF $\mb{\psi}^*$, FNO $\mb{\vartheta}^*$; posterior samples $\{\mb{\xi}\} \sim p_{\mathrm{VF}}(\mb{\xi})$, $\mb{\mu}_{\mathrm{post}}=\mathbb{E}_{p_{\mathrm{VF}}(\mb{\xi})}[\mb{\xi}]$.
\end{algorithmic}
\end{algorithm}
\subsection{Iterative Prior Updating}
\label{sec:iter_prior}

To overcome the limitation of a narrow initial guess and prevent the surrogate from querying unphysical regimes caused by overly broad priors, we seamlessly embed a dynamic prior updating strategy directly into the Variational Flow (VF) training loop. We dynamically recalibrate the mean of Gaussian prior $\pi_0(\mb{\xi})$ appearing in the posterior distribution~\eqref{eq:posterior} before the $i$-th epoch of stage $k$ via a momentum-based moving average:
\begin{equation}
    \mb{\mu}_{\mathrm{prior}}^{(k, i)} = \alpha\,\mb{\mu}_{\mathrm{post}}^{(k,i)} + (1-\alpha)\,\mb{\mu}_{\mathrm{prior}}^{(k-1)}.
    \label{eq:iter_prior}
\end{equation}
This scheme balances adaptability and stability across two time scales. \textbf{At the epoch level}, the posterior mean $\mb{\mu}_{\mathrm{post}}^{(k,i)}$ is estimated by averaging $M$ samples drawn from the current VF model $p_{\mathrm{VF}}^{(k,i-1)}$ (i.e., the VF obtained after epoch $i-1$ of stage $k$):
\begin{equation}
    \mb{\mu}_{\mathrm{post}}^{(k,i)} = \frac{1}{M} \sum_{j=1}^{M} \mb{\xi}^{(j)}, \quad \mb{\xi}^{(j)} \sim p_{\mathrm{VF}}^{(k,i-1)}(\mb{\xi}).
\end{equation}
\textbf{At the stage level}, the anchor $\mb{\mu}_{\mathrm{prior}}^{(k-1)}$ remains fixed throughout stage $k$ and is updated only at the end of the stage via $\mb{\mu}_{\mathrm{prior}}^{(k)} \leftarrow \mb{\mu}_{\mathrm{prior}}^{(k, N_{e})}$. Accordingly, the effective prior is $\pi_0^{(k,i)}(\mb{\xi}) = \mathcal{N}(\mb{\mu}_{\mathrm{prior}}^{(k,i)}, \mb{\Sigma}_0)$, where $\alpha \in (0,1]$ controls the update strength. As noted in \cite{huang2022iterated}, smaller $\alpha$ yields stronger regularization, whereas $\alpha = 1$ removes this effect; that study further found that $\alpha \approx 0.5$ is close to optimal in general. Keeping the covariance fixed at $\mb{\Sigma}_0$ further prevents mode collapse and promotes stable exploration. We initialize with a standard isotropic Gaussian, i.e., $\mb{\mu}_{0} = \mb{0}$ and $\mb{\Sigma}_0 = \mb{I}$, and then use Eq.~\eqref{eq:iter_prior} to gradually shift the prior toward posterior region.

\subsection{Adaptive Surrogate Fine-tuning}
\label{sec:adaptive_surrogate}
Following each VF training stage, the FNO surrogate must be rectified to match the newly located posterior region. We draw a set of high-fidelity parameter samples $\{\mb{\xi}_{\mathrm{post}}^{(j)}\}_{j=1}^M$ from the updated VF model $p_{\mathrm{VF}}(\mb{\xi})$. Because these samples tightly concentrate around the inferred posterior, we inject a Gaussian perturbation with controlled strength to maintain local support coverage and prevent the surrogate from overfitting to a degenerate region:
\begin{equation}
    \hat{\mb{\xi}}^{(j)} = \mb{\xi}_{\mathrm{post}}^{(j)} + \gamma\,\mb{\nu}^{(j)}, \quad \mb{\nu}^{(j)} \sim \mathcal{N}(\mb{0}, \mb{I}),
    \label{eq:perturbation}
\end{equation}
where $\gamma > 1$ controls the perturbation strength. This perturbation is applied directly in the Karhunen-Lo\`{e}ve coefficient space, and its scale is defined relative to the prior standard deviation, since the prior is standard normal in this space. 

Instead of accumulating a growing historical dataset or relying on expensive greedy filtering, we adopt an aggressive replacement strategy. We generate new PDE solutions strictly for these perturbed samples and fine-tune the surrogate model exclusively on this newly generated data. This dynamic replacement directly rectifies the OOD problem by continuously shifting the surrogate's effective training distribution from the initial broad prior to the highly localized posterior support, ensuring local surrogate accuracy with minimal computational overhead.

\section{Numerical Experiments}
\label{sec:experiments}

\subsection{Classes of Problem Studied}
The performance of the proposed framework is evaluated on two types of problems. First, to specifically assess the posterior approximation capability of the VF model, we consider a challenging high-dimensional synthetic distribution. Second, to validate the complete framework, which incorporates both the adaptive surrogate fine-tuning and iterative prior updating, we test it on three PDE-governed inverse problems of increasing complexity.

\begin{itemize}
    \item[\textbf{1.}] \textbf{A 100-dimensional Rosenbrock inverse problem:} In Subsection~\ref{subsec_rosen}, we utilize this problem to evaluate the capability of various methods in approximating complex, non-Gaussian posterior distributions. To isolate the generative performance of the models, the iterative prior updating module is not employed in this example. The numerical results demonstrate that our approach achieves superior posterior approximation accuracy compared to the baseline methods. Notably, it significantly outperforms the vanilla VAE, which empirically corroborates our theoretical result.

    \item[\textbf{2.}] \textbf{PDE-governed inverse problem:} In Subsections~\ref{subsec:1d_darcy}, \ref{subsec:2d_darcy}, and \ref{subsec:2d_ns}, we evaluate our framework on three PDE-governed inverse problems, where observations are corrupted by relative Gaussian noise:
    \begin{equation}
        \mb{y} = \mathcal{G}(\mb{\xi}_{\mathrm{ref}}) + \delta \max\bigl\{|\mathcal{G}(\mb{\xi}_{\mathrm{ref}})|\bigr\}\,\boldsymbol{\eta}, \quad \boldsymbol{\eta} \sim \mathcal{N}(\mb{0}, \mb{I}),
        \label{eq:noise_model}
    \end{equation}
    with noise amplitude $\delta \in \{1\%, 5\%, 10\%\}$. The unknown field $m_{\mb{\xi}}(\mb{x})$ is modeled as a Gaussian random field parameterized via a truncated Karhunen-Lo\`{e}ve (KL) expansion. To evaluate robustness, the reference field $m_{\mb{\xi}_\mathrm{ref}}$ is generated using 256 KL modes with coefficients i.i.d. drawn from $\mathcal{U}[-10,10]$ (rather than the Gaussian prior), while the inversion targets only the first $d \in \{32, 64\}$ modes. Since the FNO surrogate is pre-trained on standard Gaussian prior samples, adaptive fine-tuning is essential to capture the shifted posterior. We compare our approach against pCN with the exact Finite Difference Method (FDM), UKI-FDM, UKI-FNO, and SVGD-FNO. For both UKI and our iterative prior updating scheme, the scaling parameter is set to $\alpha = 0.5$. Each experiment is repeated 3 times and the reported results correspond to the averaged outcomes. Further experimental details are provided in Appendix~\ref{appendix:exp_details}.

    Finally, the performance is assessed using two quantitative metrics. The first is the relative inversion error, which measures the accuracy of the reconstructed parameter field:
\begin{equation}
    e_{\mathcal{I}} = \frac{\|m_{\mb{\mu}_{\mathrm{post}}} - m_{\mb{\xi}_\mathrm{ref}}\|_{L^2(\Omega_s)} }{\|m_{\mb{\xi}_\mathrm{ref}}\|_{L^2(\Omega_s)}},
    \label{eq:error_1}
\end{equation}
where $\mb{\mu}_\mathrm{post}$ denotes the posterior mean estimate obtained from Algorithm~\ref{alg:dabai}. The second metric is the relative surrogate fitting error $e_{\mathcal{S}}$, designed to evaluate the accuracy of the FNO surrogate in the high-probability region surrounding the true solution. We construct a local test set of $N=100$ samples by applying standard Gaussian perturbations to the exact parameter, $\mb{\xi}^{(i)} = \mb{\xi}_{\mathrm{ref}} + \boldsymbol{\eta}^{(i)}$ with $\boldsymbol{\eta}^{(i)} \sim \mathcal{N}(\mb{0}, \mb{I})$. Letting $u(\cdot; m_{\mb{\xi}^{(i)}})$ denote the corresponding high-fidelity state field obtained via the FDM, the surrogate error is computed as the average relative $L^2$ discrepancy:
\begin{equation}
    e_{\mathcal{S}} = \frac{1}{N} \sum_{i=1}^{N}
    \frac{\bigl\|\mathcal{F}_{\mb{\vartheta}}(m_{\mb{\xi}^{(i)}}) - u(\cdot; m_{\mb{\xi}^{(i)}})\bigr\|_{L^2(\Omega_s)}}
         {\bigl\|u(\cdot; m_{\mb{\xi}^{(i)}})\bigr\|_{L^2(\Omega_s)}}.
\end{equation}

    \end{itemize}

\subsection{Posterior Approximation: Rosenbrock Problem}
\label{subsec_rosen}

\begin{figure}[!htp]
    \centering
    \subfloat[Reference\label{rosen_marginal}]{\includegraphics[width=.3\textwidth]{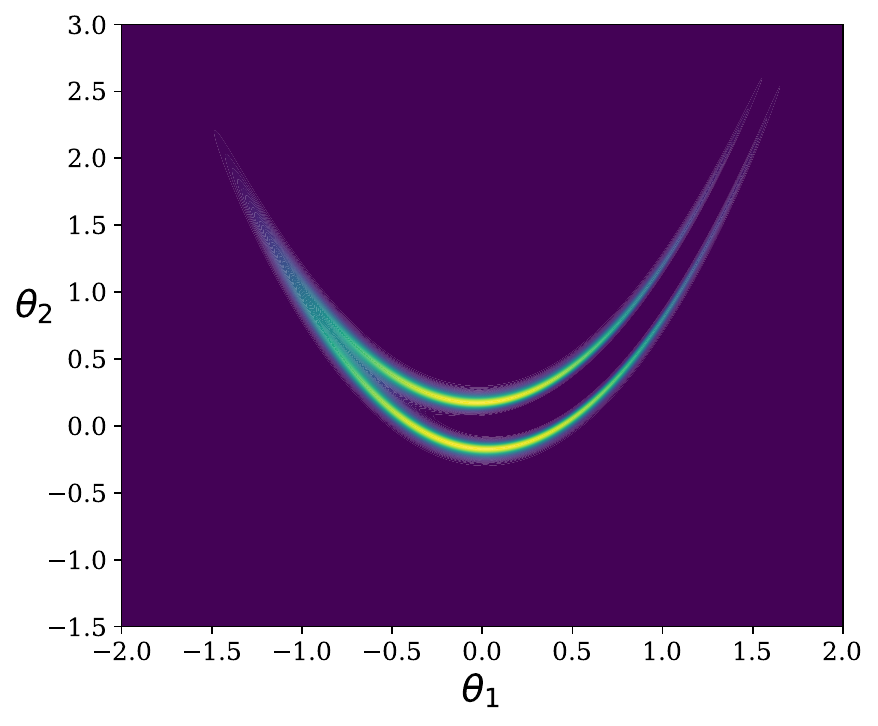}}
    \subfloat[VF (ours) \label{fig:rosen_vaekrnet}]{\includegraphics[width=.3\textwidth]{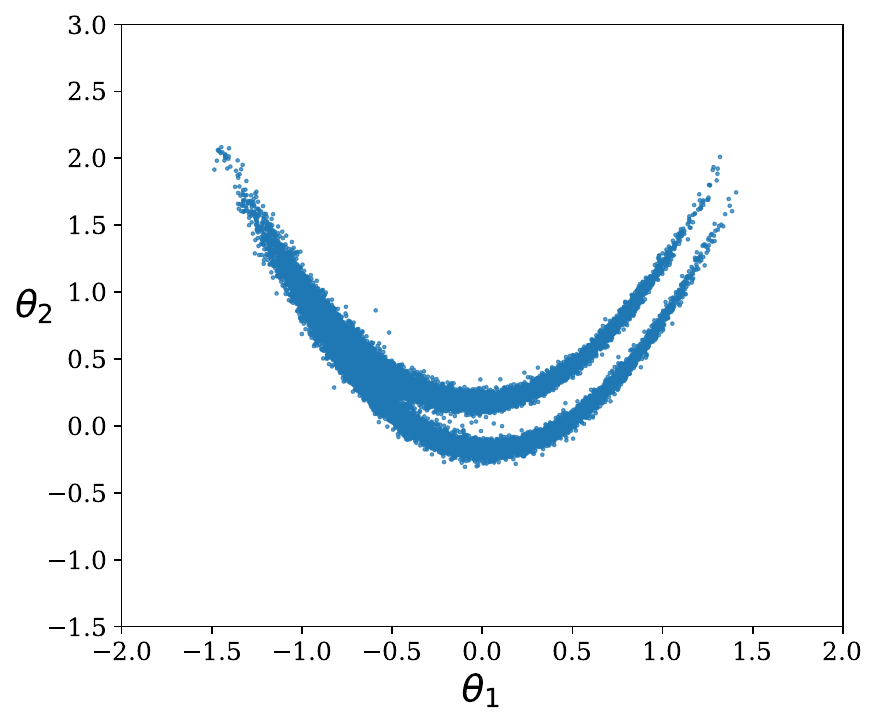}}
    \subfloat[VAE\label{fig:rosen_vae}]{\includegraphics[width=.3\textwidth]{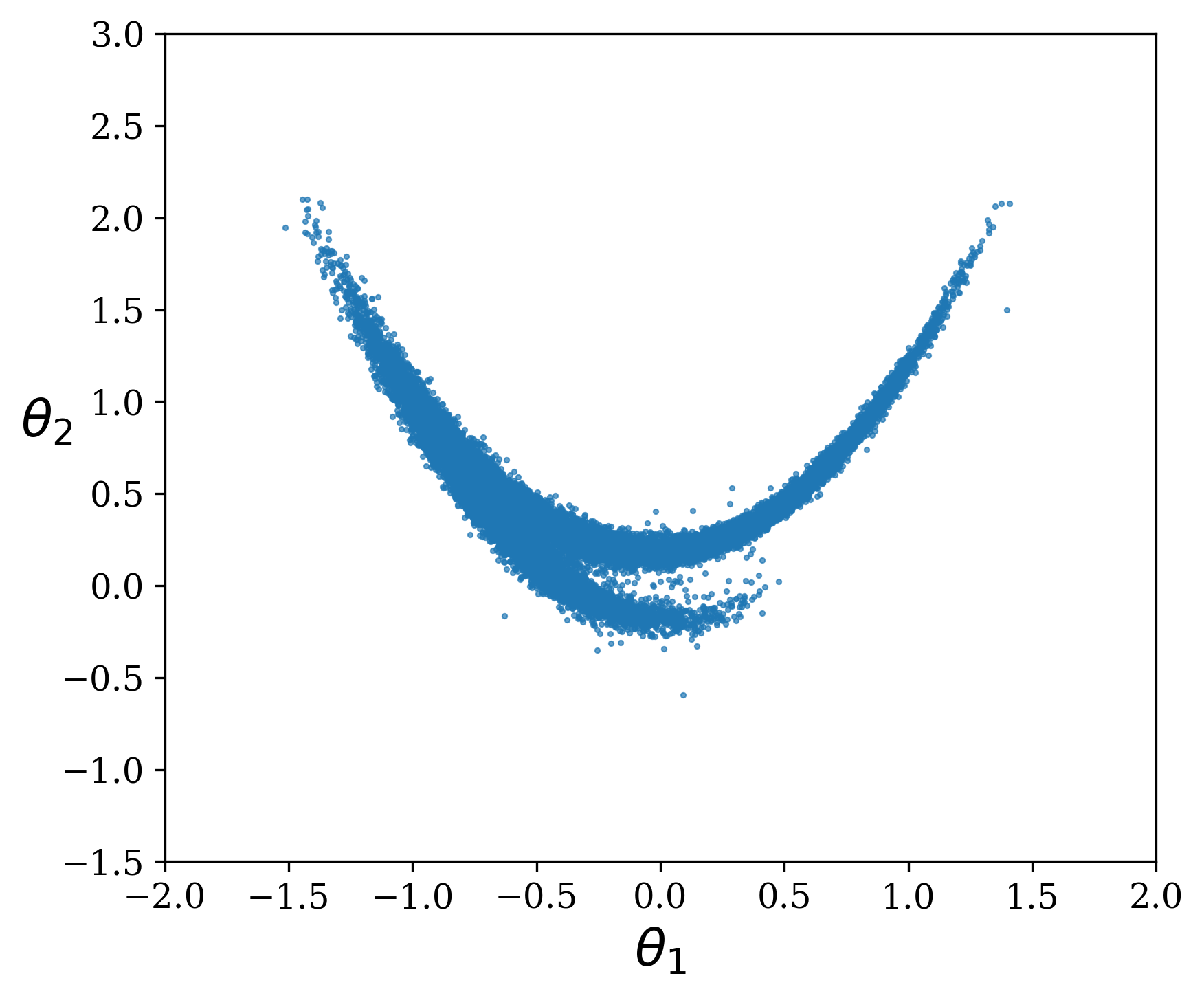}} \\ \vspace{-1em}
    \subfloat[MCMC \label{fig:rosen_mcmc}]{\includegraphics[width=.3\textwidth]{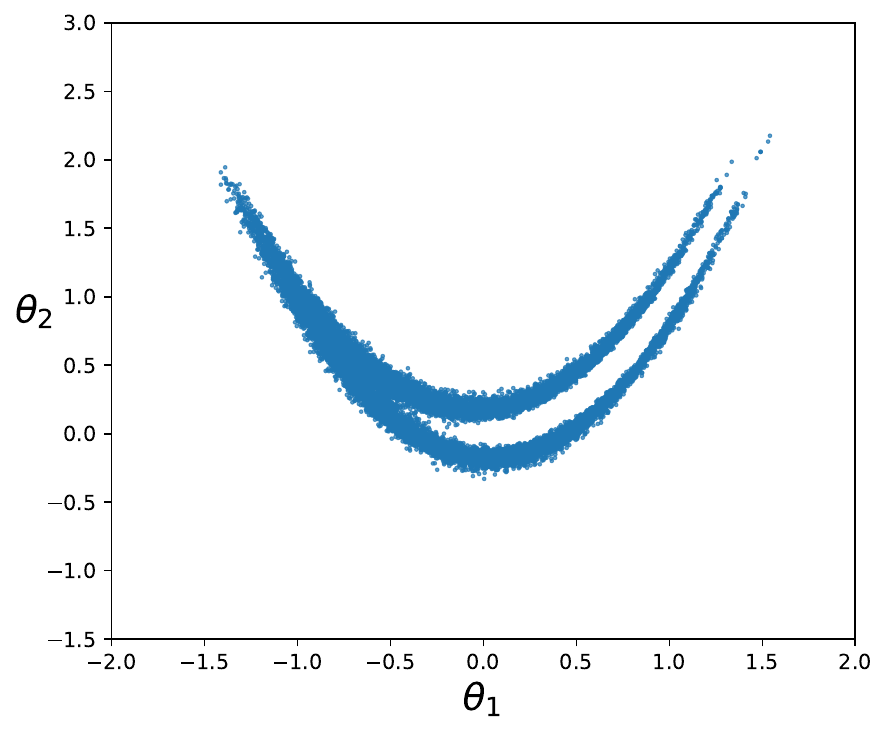}}
    \subfloat[SVGD \label{fig:rosen_svgd}]{\includegraphics[width=.3\textwidth]{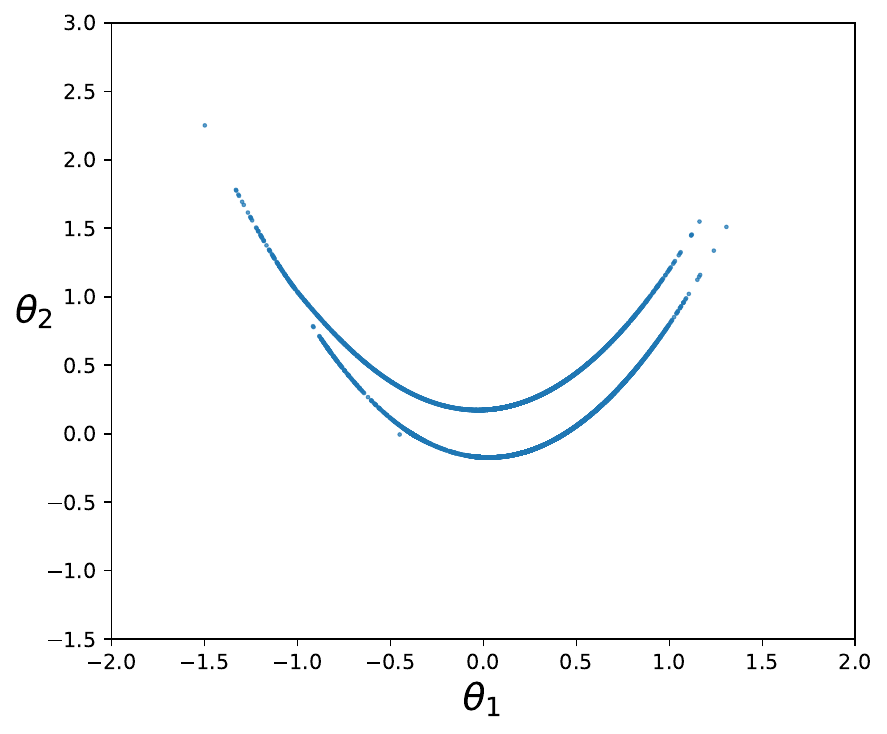}}
    \subfloat[UKI \label{fig:rosen_uki}]{\includegraphics[width=.3\textwidth]{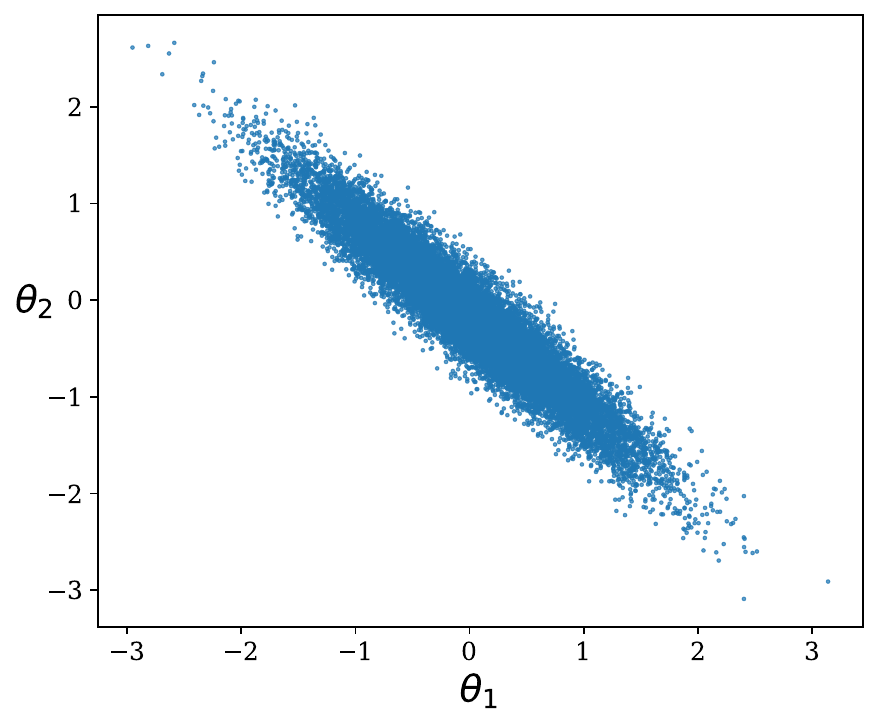}}
    
    \caption{Comparison of the 2D marginal posterior distributions on the $\xi_{1}$-$\xi_{2}$ plane for the Rosenbrock inverse problem. Panel (a) shows the ground truth reference density, while panels (b)-(f) display the sampling results obtained by VF, VAE, MCMC, SVGD, and UKI, respectively.}
    \label{fig:rosenbrock_result}
\end{figure}

To evaluate the representational capacity of the VF model for complex distributions, we consider a 100D inverse problem with a bimodal, banana-shaped posterior~\cite{che2025stable}. Let
$
\mb{\xi} = [\xi_1, \xi_2, \mb{\xi}^c] \in \mathbb{R}^{100}
$
where $\xi_1,\xi_2 \in \mathbb{R}$ are the primary parameters of interest and $\mb{\xi}^c \in \mathbb{R}^{98}$ denotes the complementary high-dimensional parameters. The forward map and observation are defined by
$$
\mathbf{F}(\xi_1,\xi_2) = 
\begin{bmatrix}
\displaystyle \log\left(100(\xi_2 - \xi_1^2)^2 + (1 - \xi_1)^2\right) / 0.3 \\
\xi_1 \\
\xi_2
\end{bmatrix}
\quad \text{and} \quad
\mb{y} = 
\begin{bmatrix}
\log(101) \\
0 \\
0
\end{bmatrix}.
$$
We consider the regularized least-squares objective:
\begin{equation}
\Phi(\boldsymbol{\xi}) = \frac{1}{2} \| \mathbf{F}(\xi_1, \xi_2) - \mb{y} \|_2^2 + \frac{1}{2} \| \boldsymbol{\xi}^c - K \boldsymbol{\xi} \|_2^2,
\end{equation}
where $K \in \mathbb{R}^{98 \times 100}$ is the all-ones matrix. In this problem, we compare our proposed model against several baselines. Notably, we include a vanilla VAE baseline derived by ablating the novel components of our architecture, ensuring an equivalent parameter count. We additionally compare against affine-invariant MCMC utilizing the \texttt{emcee} package~\cite{foreman2013emcee}, SVGD, and UKI. Further implementation details are deferred to Appendix~\ref{sec:rosen_details}. We draw 20,000 samples for each method, with the comparative results depicted in Figure~\ref{fig:rosenbrock_result}.

As illustrated in Figure~\ref{rosen_marginal}, the posterior distribution is bimodal and banana-shaped. Since UKI relies on a Gaussian approximation, it fails to capture the true posterior geometry. In contrast, VF successfully recovers the curved bimodal structure and produces a posterior approximation that closely matches the MCMC result, which serves as a high-fidelity sampling baseline, and it better captures the complex posterior geometry than the SVGD baseline. Furthermore, unlike standard VAE which struggles to capture the complete curved bimodal structure, VF demonstrates superior fitting capacity through its flow-enhanced architecture.

\subsection{1D Darcy Flow}
\label{subsec:1d_darcy}

We consider the steady-state 1D Darcy Flow on the domain $\Omega = [0,1]$. The
forward model is to find the pressure field $p(\mb{x})$ in a porous medium defined by a positive permeability
field $m_{\mb{\xi}}(x)$:
\begin{align}
    - \frac{d}{d x} \Big( \exp(m_{\mb{\xi}}(x)) \frac{d p}{d x} \Big) &= f(x), \quad x \in (0,1), \\
    p(0) = p(1) &= 0,
\end{align}
with constant source term $f(x) \equiv 1$.  
The log-permeability field $m_{\mb{\xi}}(x)$ is modeled as a Gaussian random field with covariance operator $\mathcal{C} = \sigma^2 (-\Delta + \tau^2)^{-l}$, where $\Delta$ is the 1D Laplacian with homogeneous Neumann boundary conditions. We set the parameters as $\sigma = 2$, $\tau = 1$, and $l = 2$. The truncated Karhunen-Loève expansion of $m_{\mb{\xi}}(x)$ is then
\begin{equation}
    m_{\mb{\xi}}(x) = \sum_{k=1}^{d} \sqrt{\lambda_k}\, \xi_k\, \psi_k(x),
\end{equation}
with the eigenfunctions and eigenvalues given explicitly by
\begin{equation}
    \psi_k(x) = \sqrt{2} \cos(k \pi x), \qquad \lambda_k = \sigma^2 (k^2 \pi^2 + \tau^2)^{-l}.
\end{equation}
The vector $\boldsymbol{\xi} \in \mathbb{R}^d$ forms the finite-dimensional parameter space for inference. In our experiments, we infer the log-permeability field from 31 uniformly spaced pressure observations across the domain. The high-fidelity forward solver is implemented using the Finite Difference Method (FDM) on a 1024-point spatial grid.

\begin{table}[t]
\centering
\small
\renewcommand{\arraystretch}{1.12}
\setlength{\tabcolsep}{7pt}
\caption{Relative inversion error $e_{\mathcal{I}}$ of 1D Darcy Flow problem under different noise amplitudes and $d$ values. Lower is better. Results are averaged over 3 runs per experiment.}
\label{tab:comparison_1d_darcy}

\resizebox{0.9\textwidth}{!}{%
\begin{tabular}{lccc@{\hspace{1.2em}}ccc}
\toprule
\multirow{2}{*}{\textbf{Method}}
& \multicolumn{3}{c}{$d=32$}
& \multicolumn{3}{c}{$d=64$} \\
\cmidrule(lr){2-4} \cmidrule(lr){5-7}
& \textbf{1\%} & \textbf{5\%} & \textbf{10\%}
& \textbf{1\%} & \textbf{5\%} & \textbf{10\%} \\
\midrule
pCN        & 0.3164 & 0.5036 & 0.5301 & 0.3160 & 0.5199 & 0.5282 \\
SVGD-FNO   & 0.3092 & 0.5188 & 0.5275 & 0.3072 & 0.5189 & 0.5274 \\
UKI-FDM    & 0.3306 & 0.4924 & 0.4514 & 0.2843 & 0.4809 & 0.4633 \\
UKI-FNO    & 0.2525 & 0.4444 & 0.4441 & 0.2982 & 0.4473 & 0.4487 \\
\textbf{Ours} & \textbf{0.2292} & \textbf{0.3569} & \textbf{0.4076} & \textbf{0.2073} & \textbf{0.3569} & \textbf{0.4098} \\
\bottomrule
\end{tabular}%
}\vspace{-1em}
\end{table}

\begin{figure}[!htb]
\centering
\setlength{\abovecaptionskip}{4pt}
\setlength{\belowcaptionskip}{-2pt}

\includegraphics[width=0.32\textwidth]{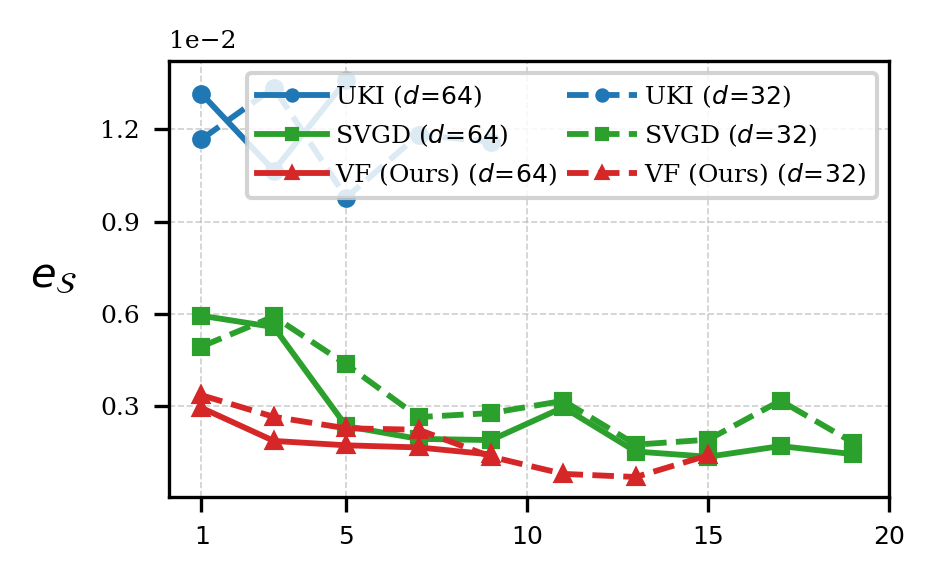}
\includegraphics[width=0.32\textwidth]{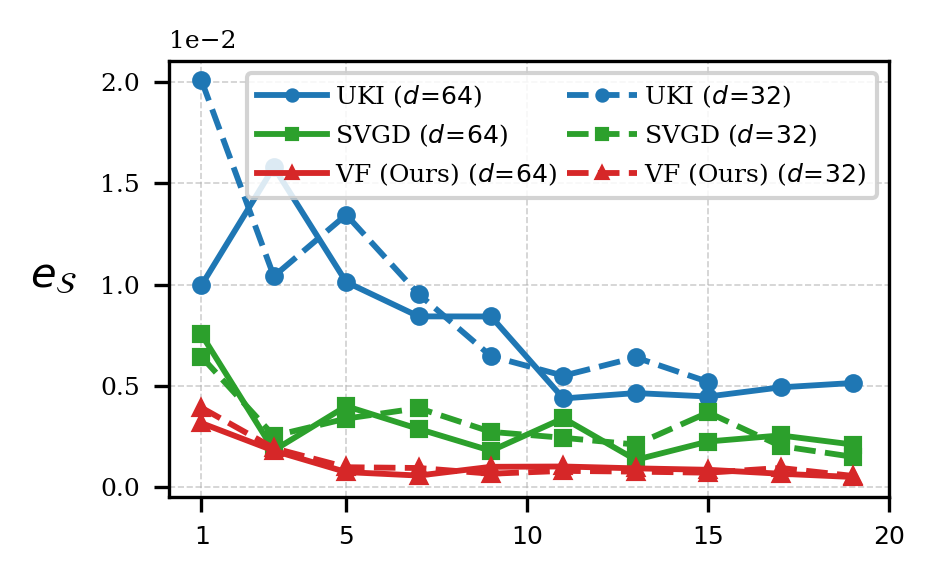}
\includegraphics[width=0.32\textwidth]{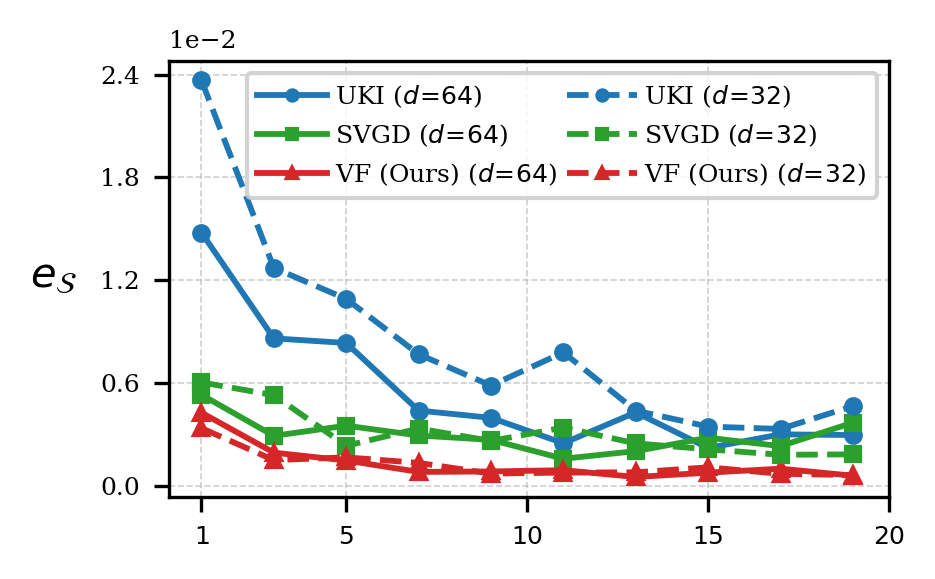}

\caption{Surrogate fitting error $e_{\mathcal{S}}$ across adaptive stages of 1D Darcy Flow problem. Columns: noise levels $\delta \in \{1\%, 5\%, 10\%\}$. Solid/dashed lines: $d=64/32$. Overall, our method converges to lower errors in fewer stages.}
\label{fig:fitting_error_1d_darcy}
\end{figure}

\begin{figure}[htbp]
\centering
\includegraphics[width=0.196\textwidth]{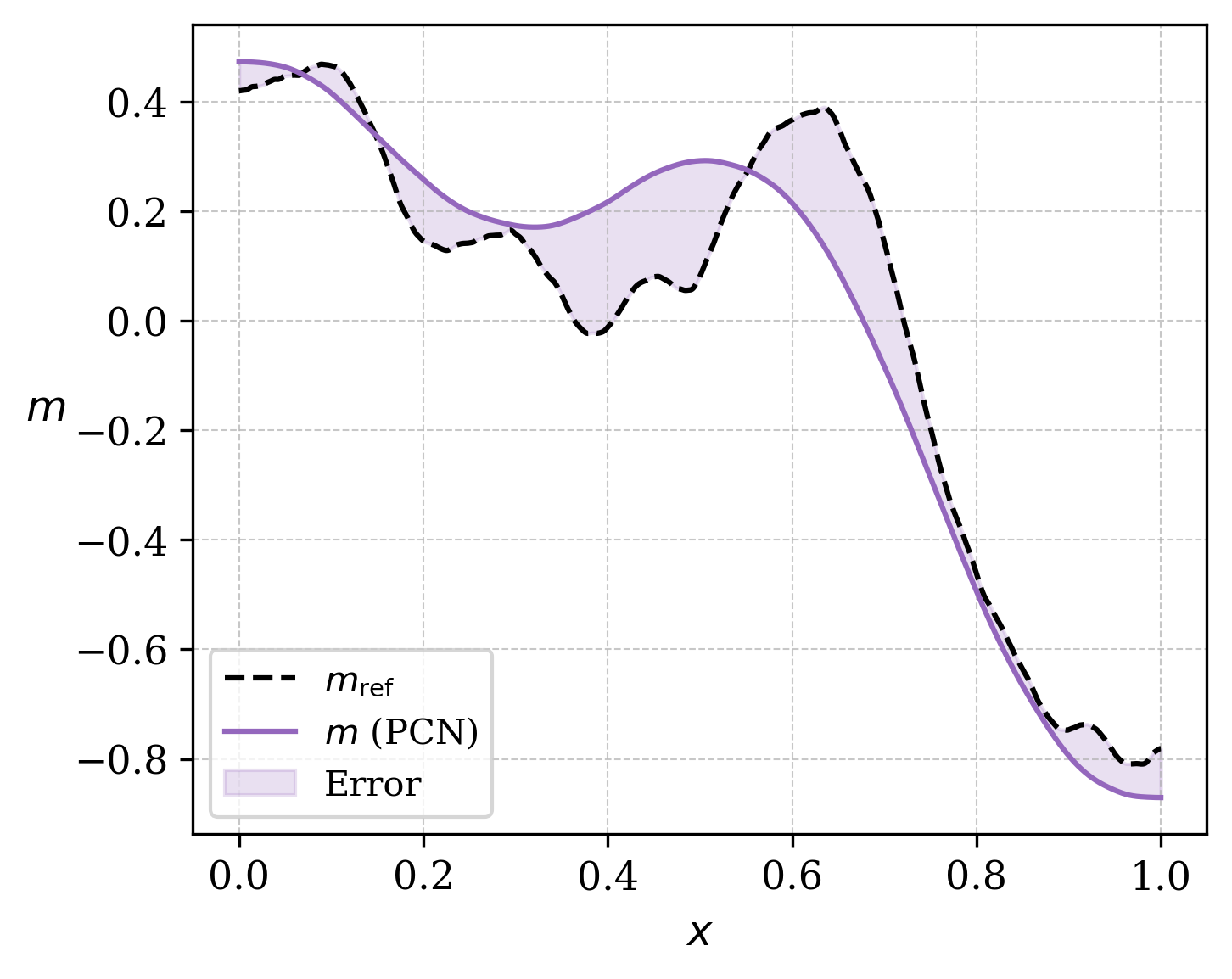}%
\hspace{0.5mm}%
\includegraphics[width=0.196\textwidth]{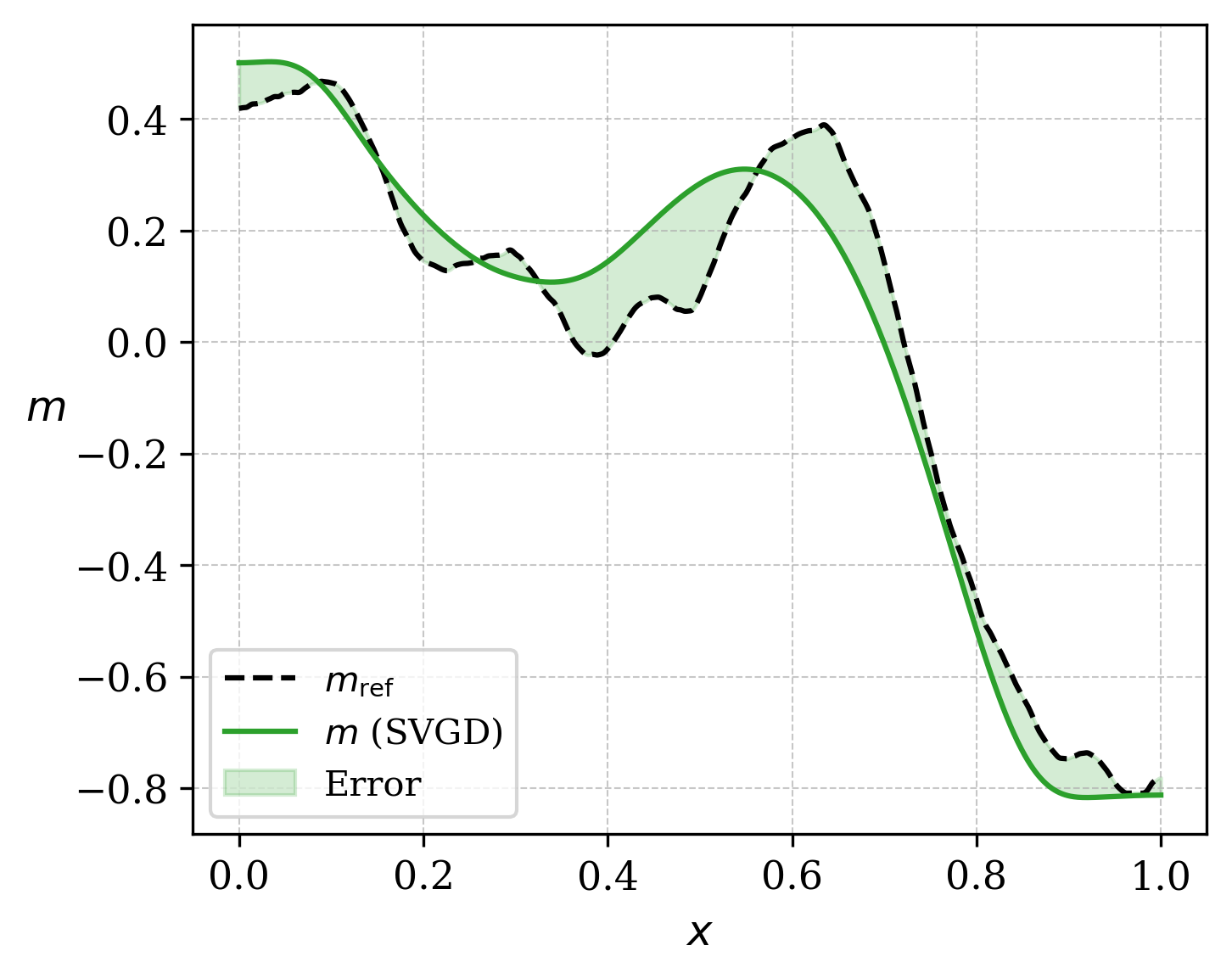}%
\hspace{0.5mm}%
\includegraphics[width=0.196\textwidth]{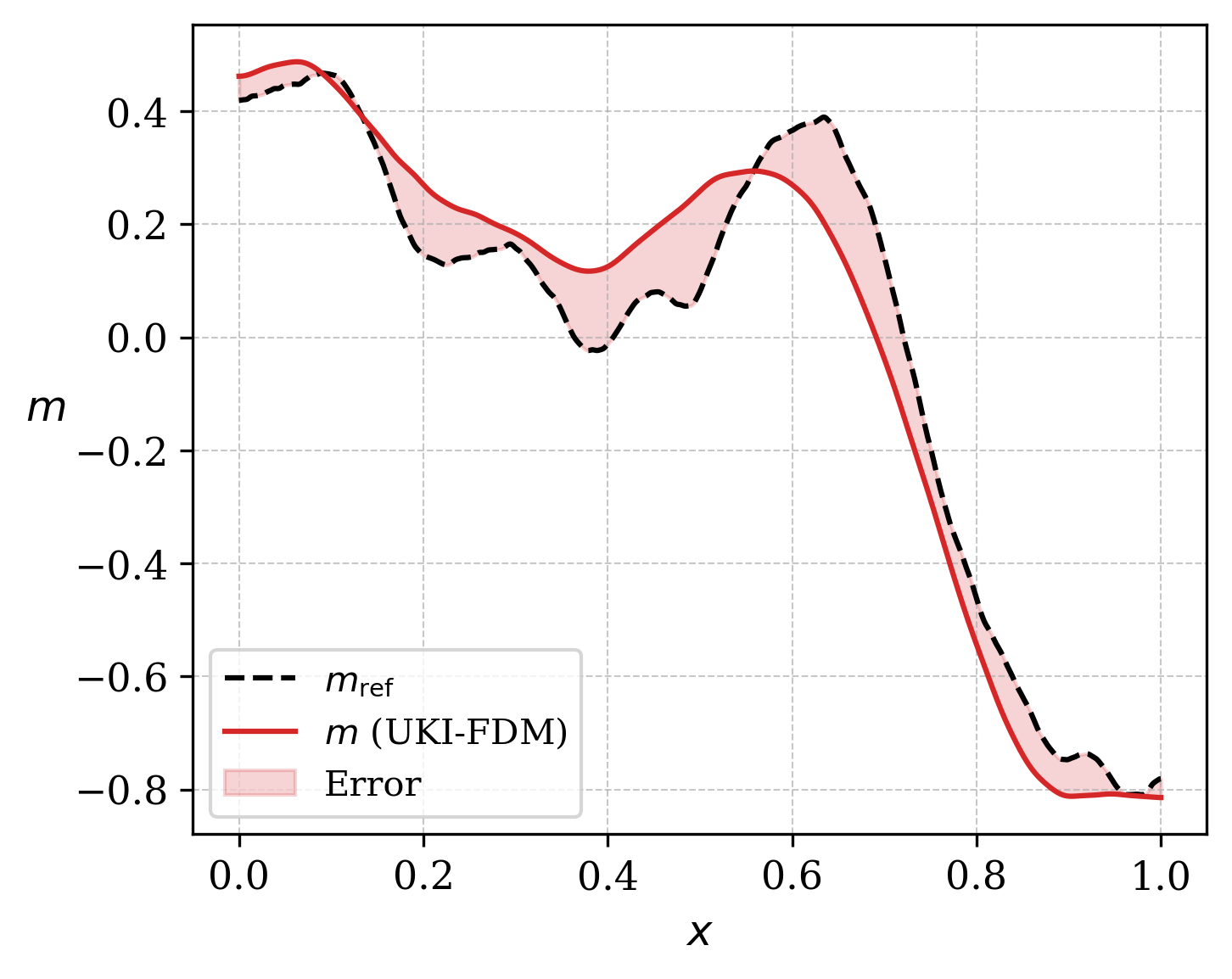}%
\hspace{0.5mm}%
\includegraphics[width=0.196\textwidth]{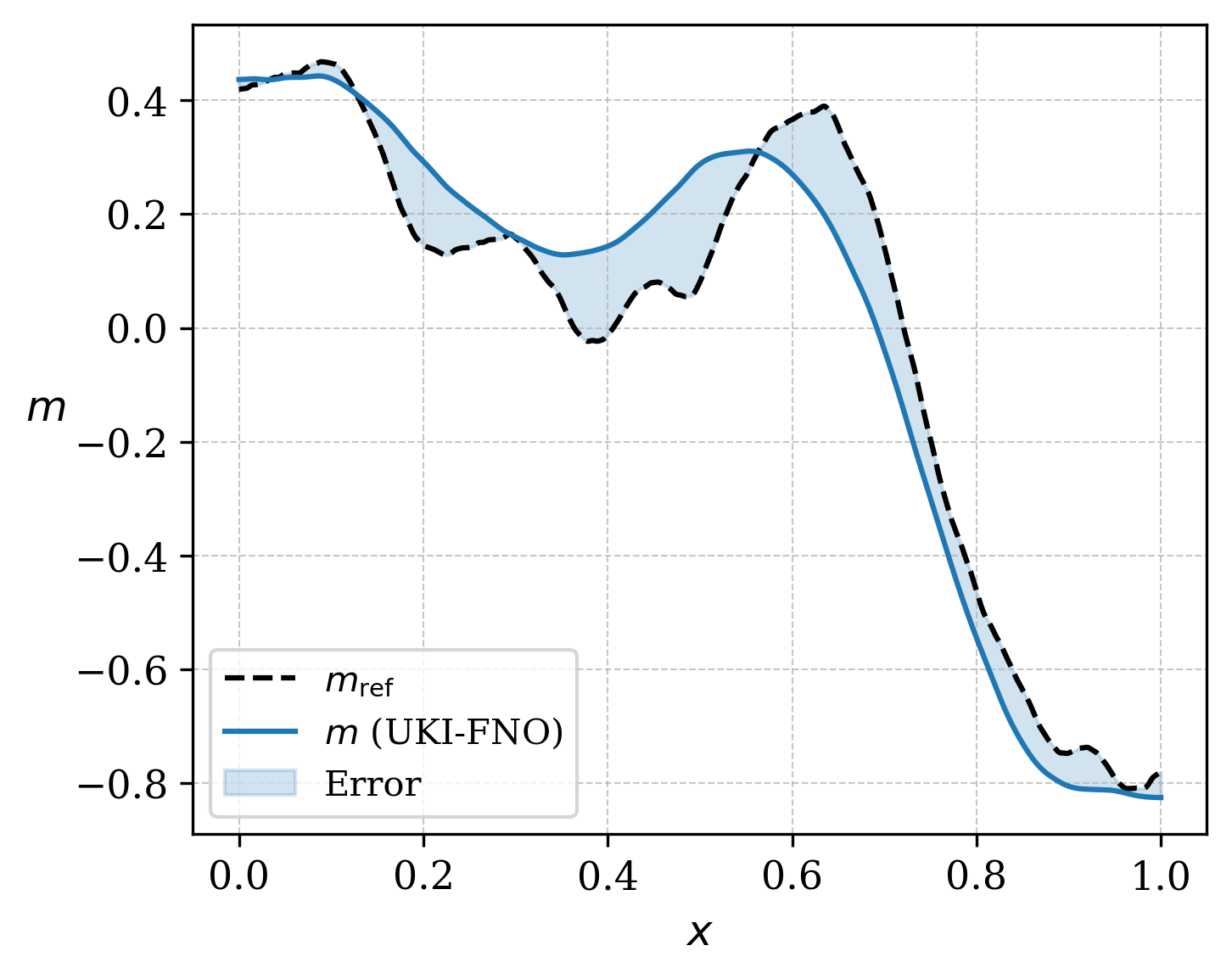}%
\hspace{0.5mm}%
\includegraphics[width=0.196\textwidth]{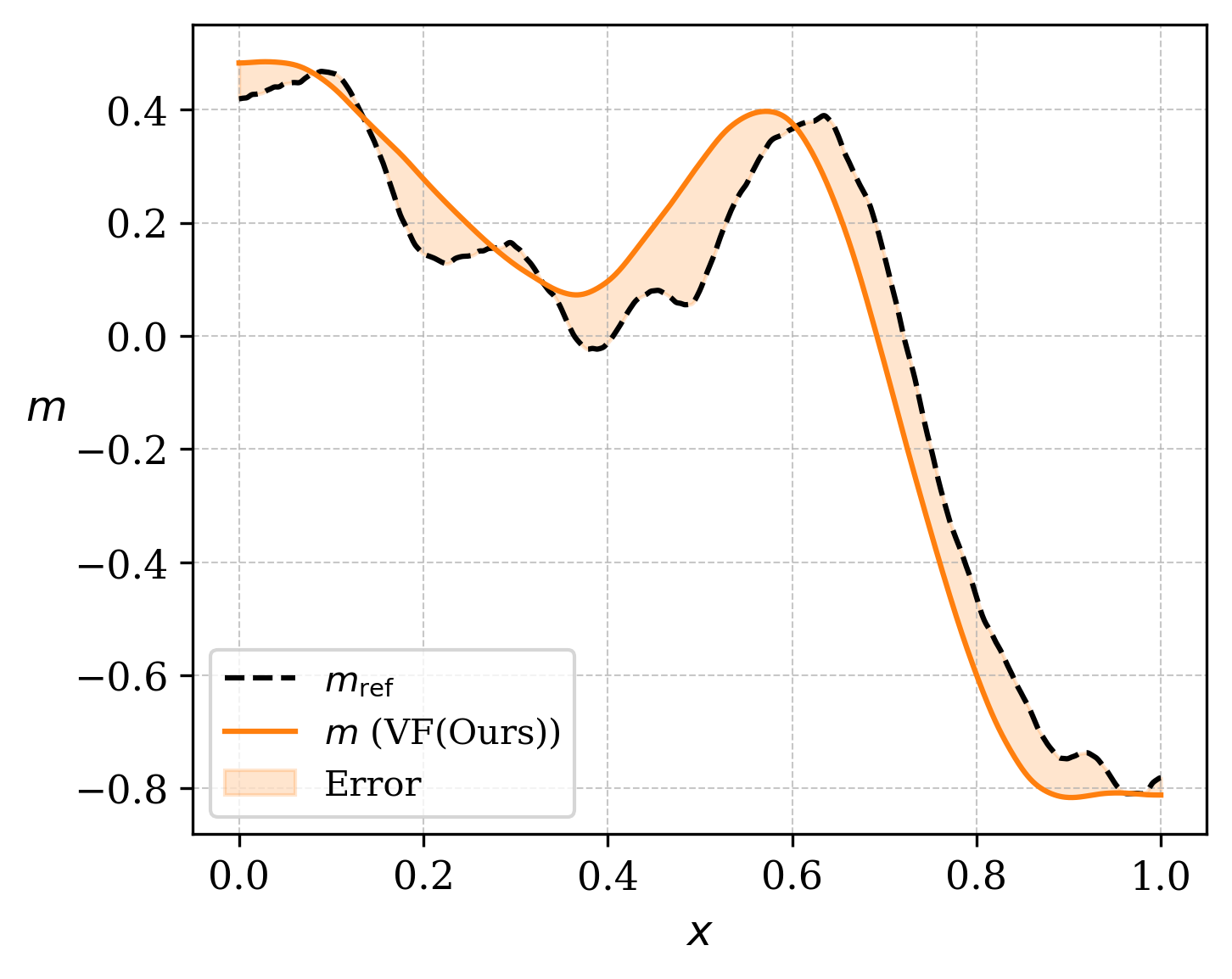}\\
\vspace{0.5mm}

\includegraphics[width=0.196\textwidth]{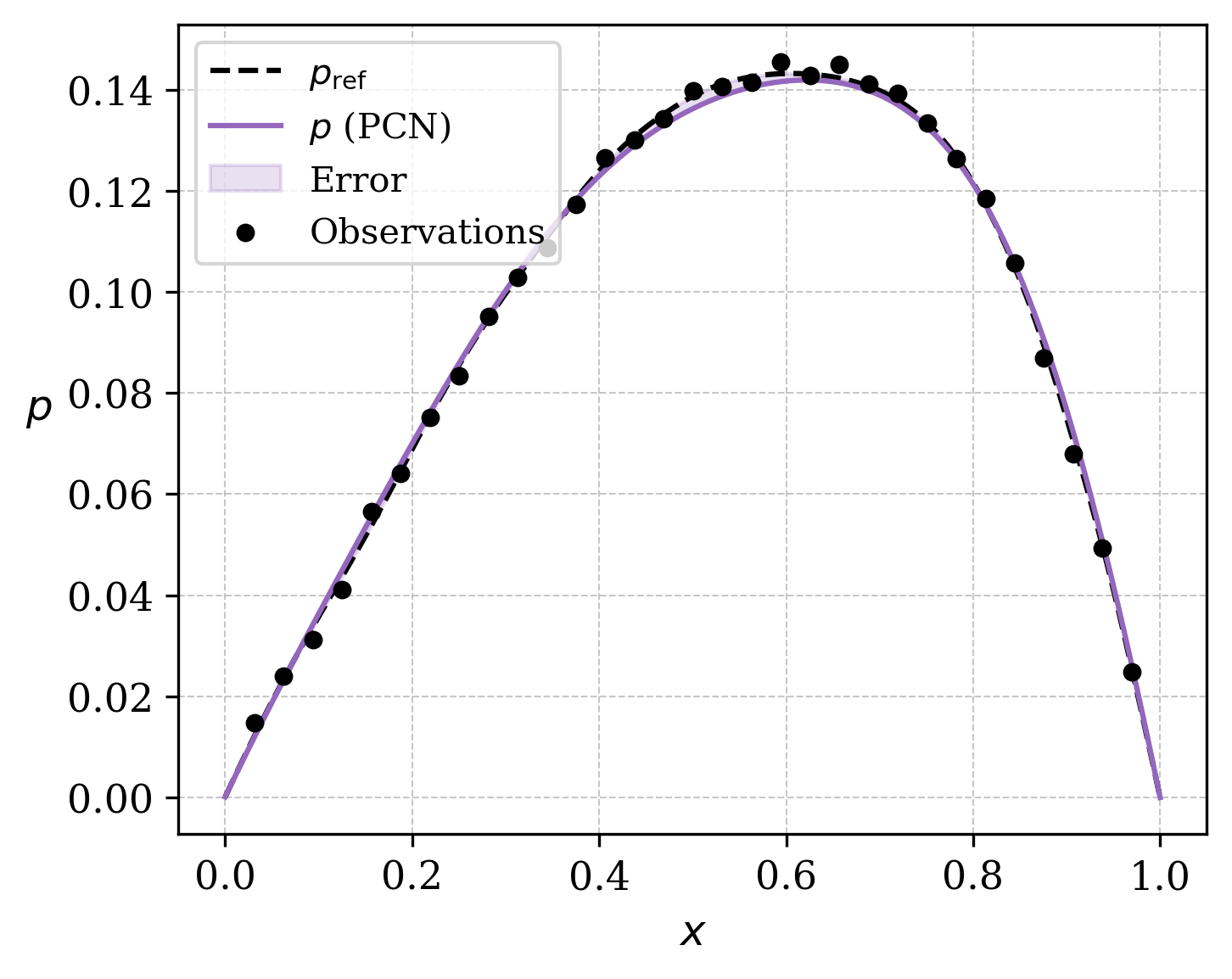}%
\hspace{0.5mm}%
\includegraphics[width=0.196\textwidth]{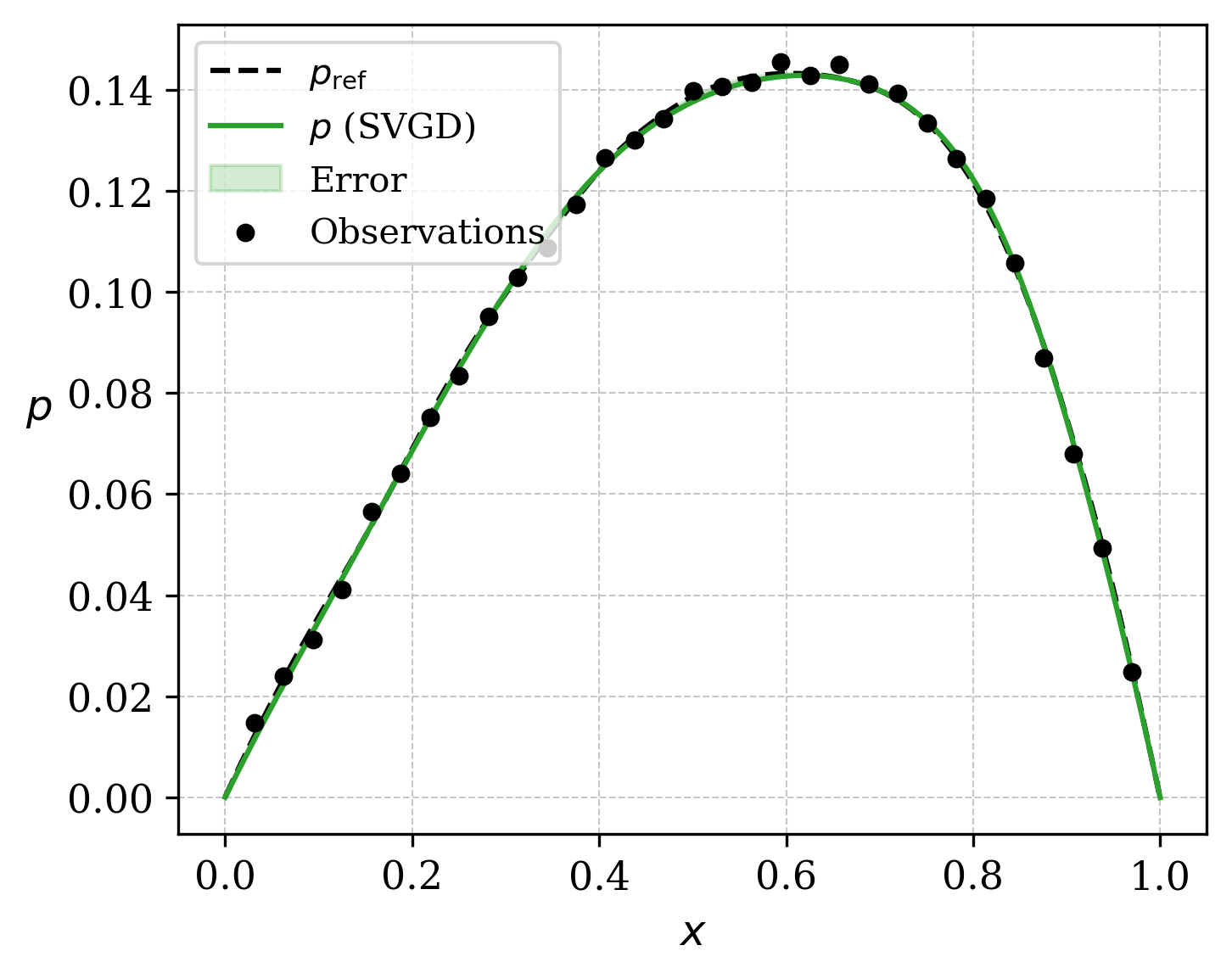}%
\hspace{0.5mm}%
\includegraphics[width=0.196\textwidth]{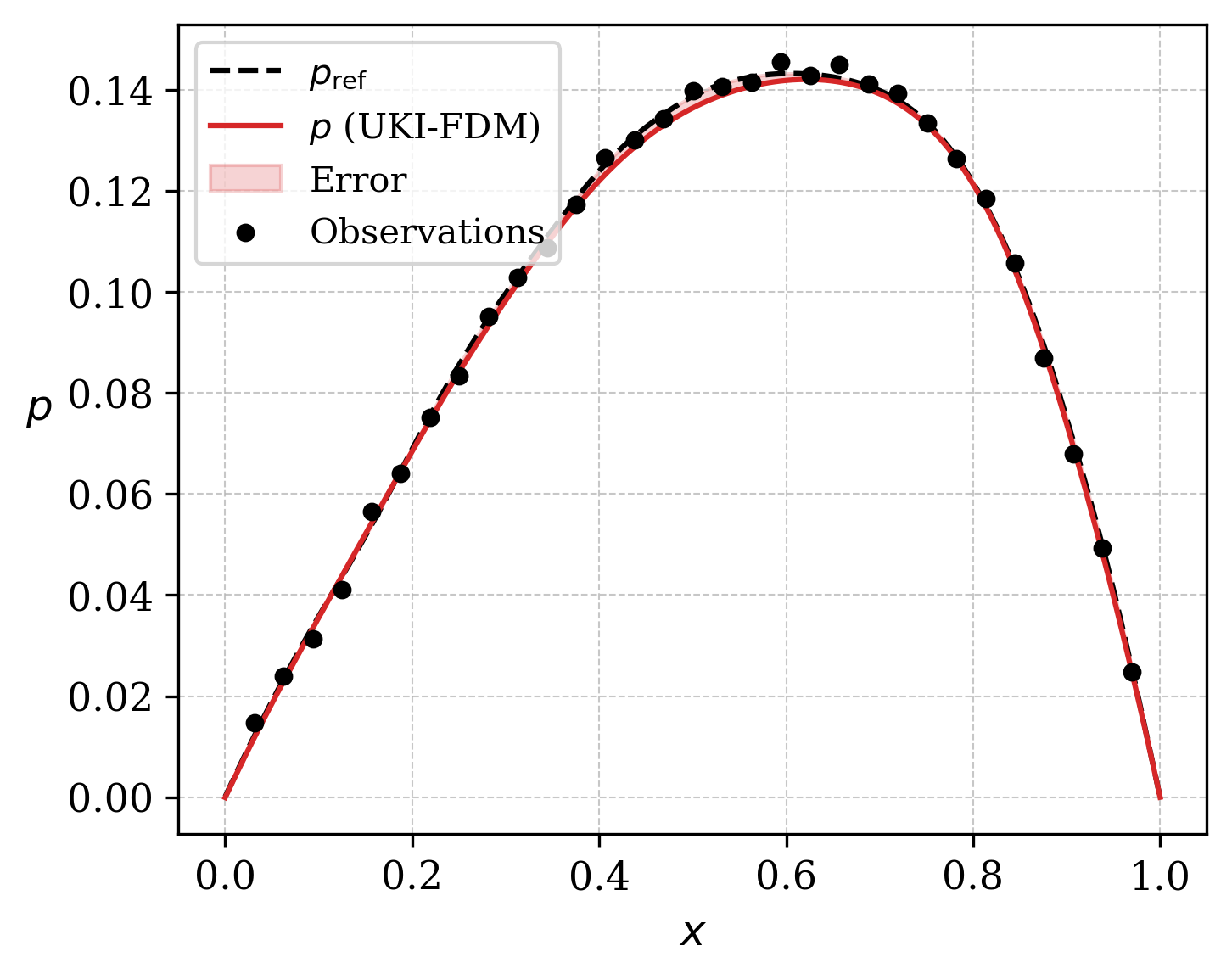}%
\hspace{0.5mm}%
\includegraphics[width=0.196\textwidth]{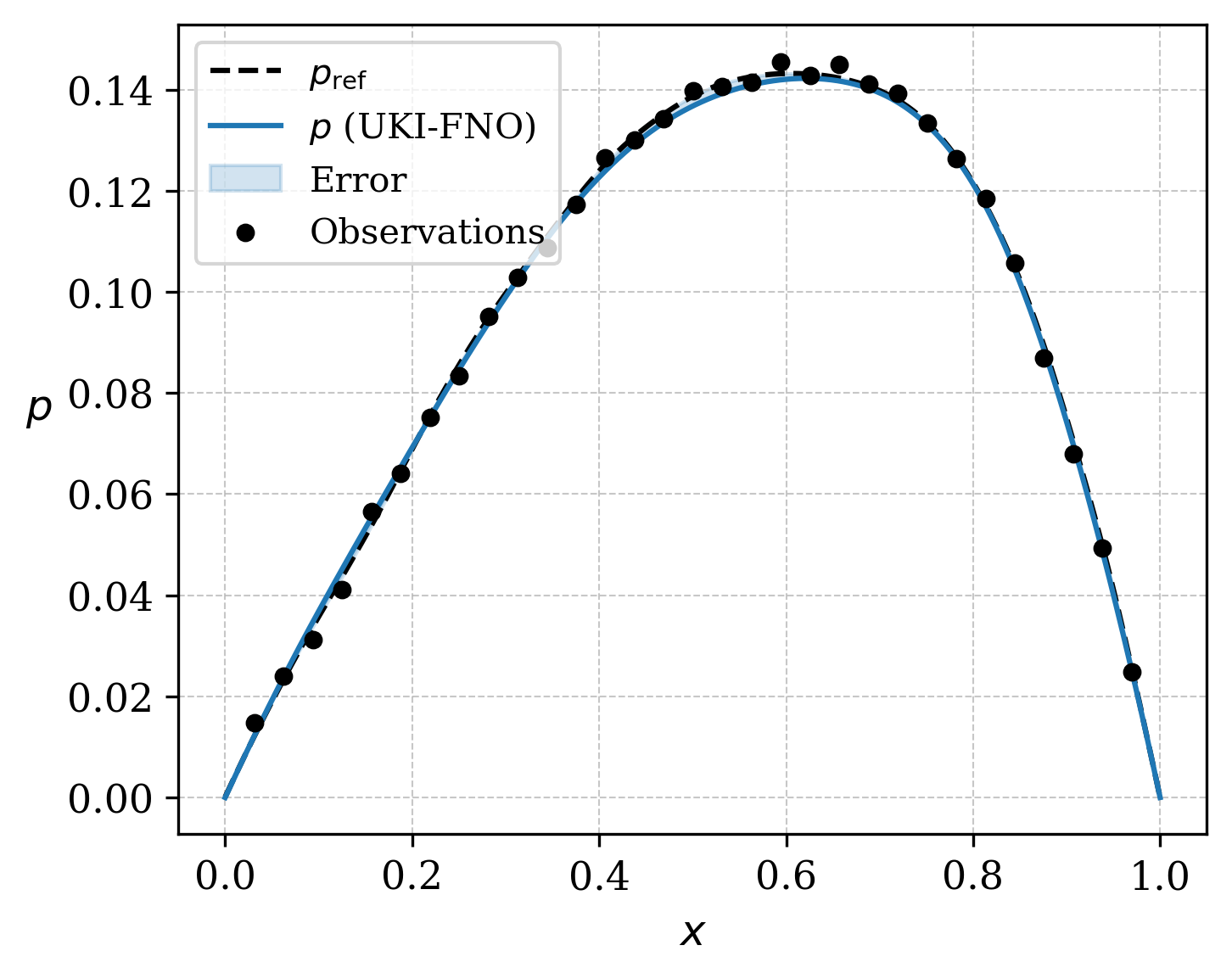}%
\hspace{0.5mm}%
\includegraphics[width=0.196\textwidth]{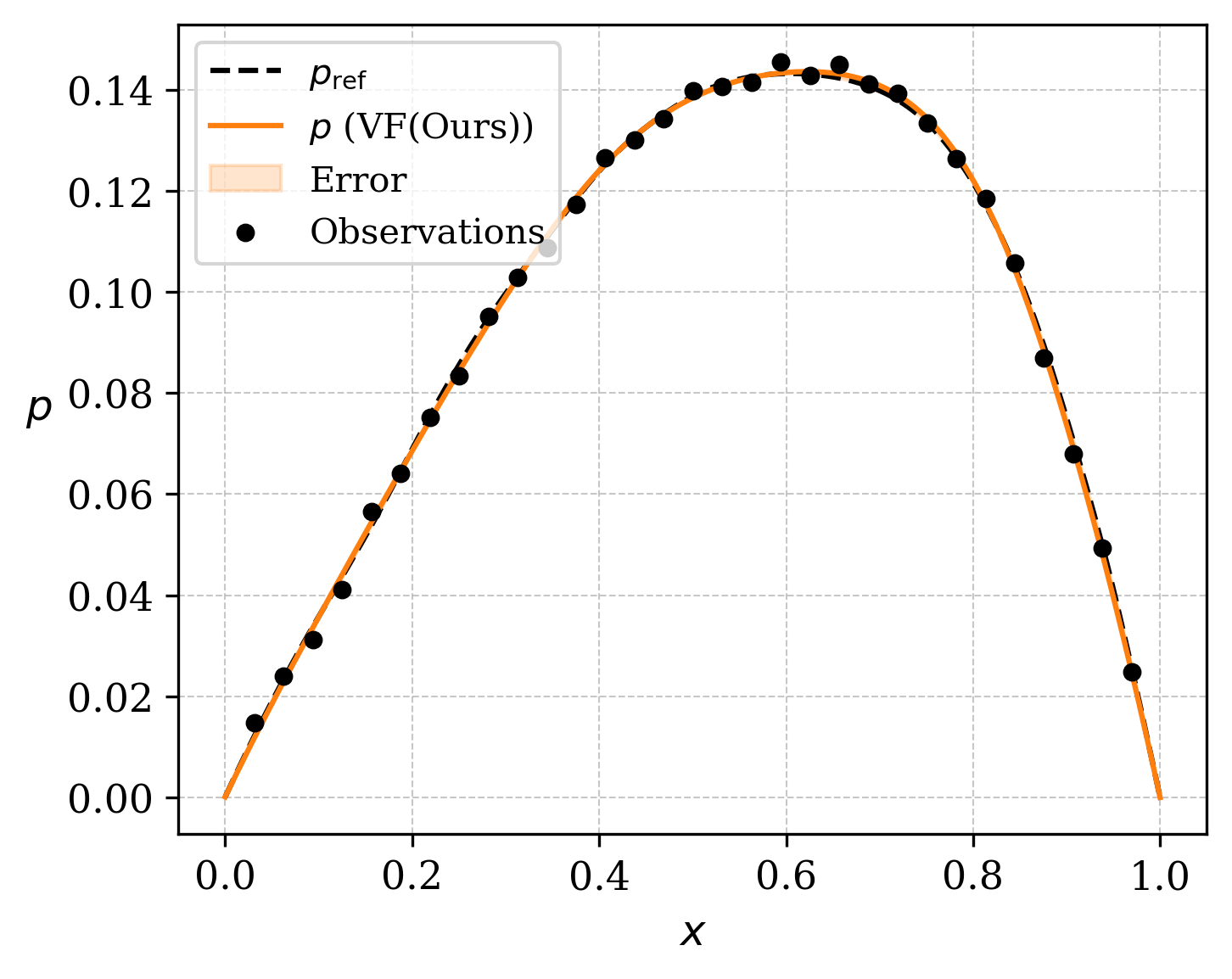}\\

\vspace{-1mm}
\centerline{\small (a) Noise level $\delta=1\%$}
\vspace{1.5mm}

\includegraphics[width=0.196\textwidth]
{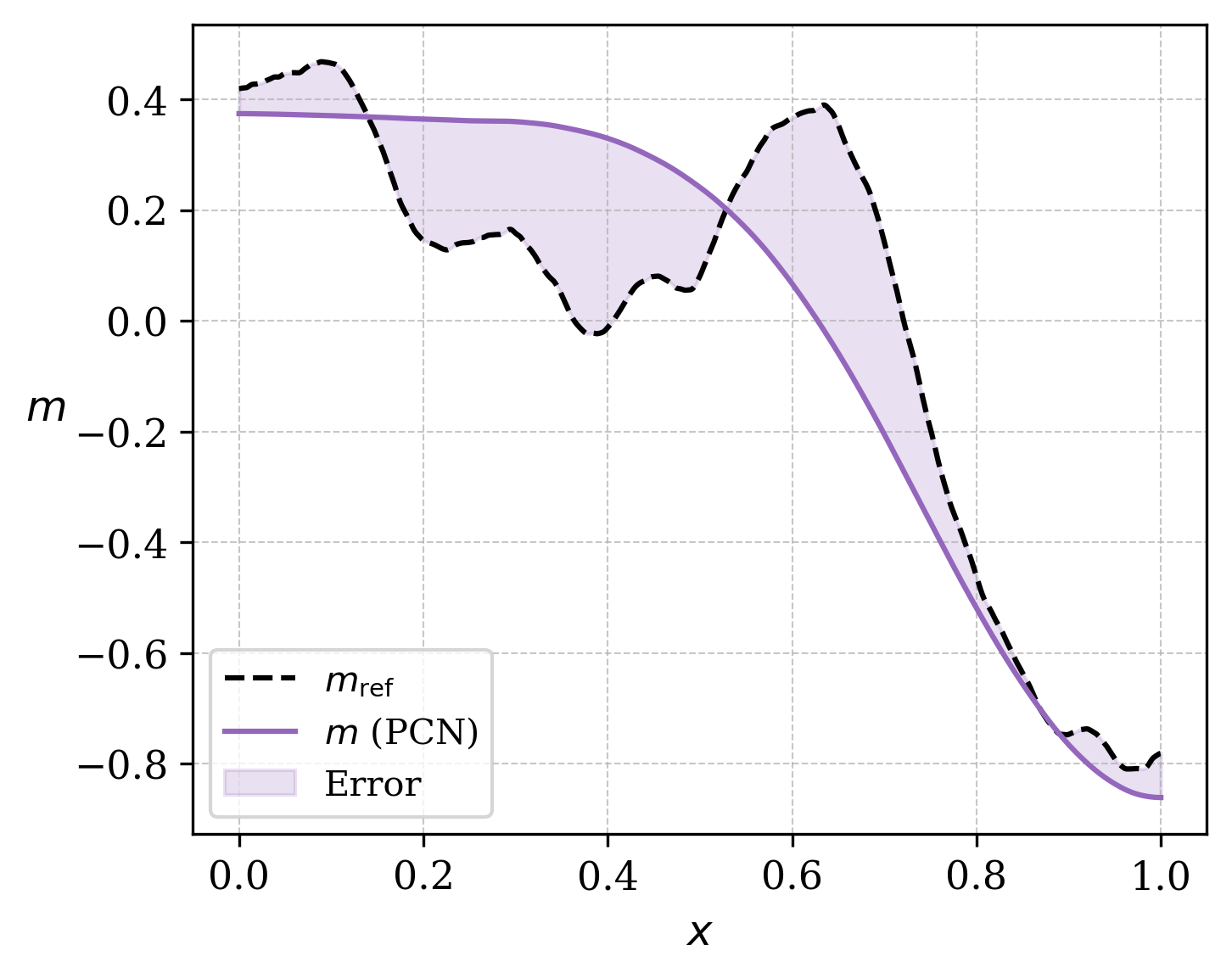}%
\hspace{0.5mm}%
\includegraphics[width=0.196\textwidth]{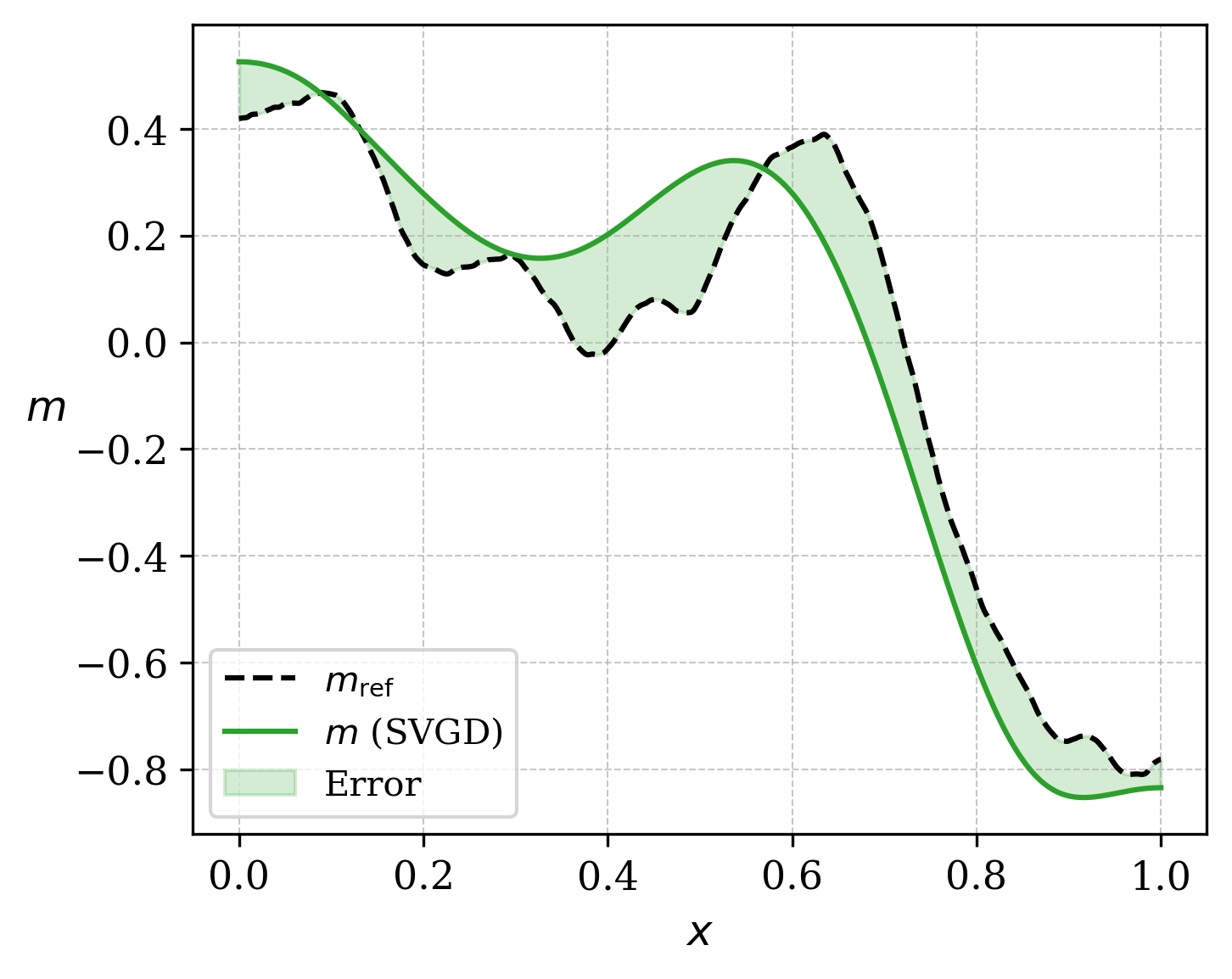}%
\hspace{0.5mm}%
\includegraphics[width=0.196\textwidth]{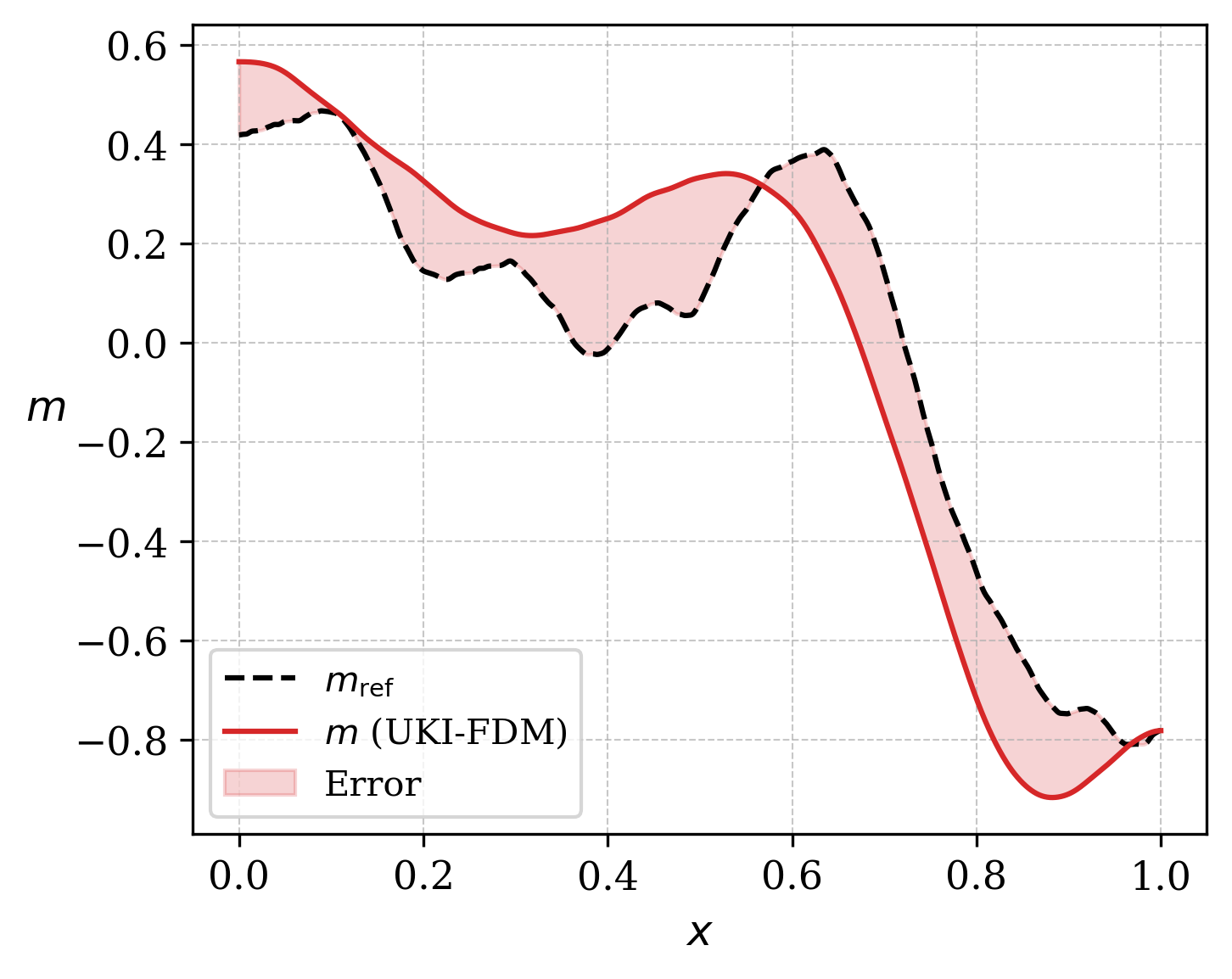}%
\hspace{0.5mm}%
\includegraphics[width=0.196\textwidth]{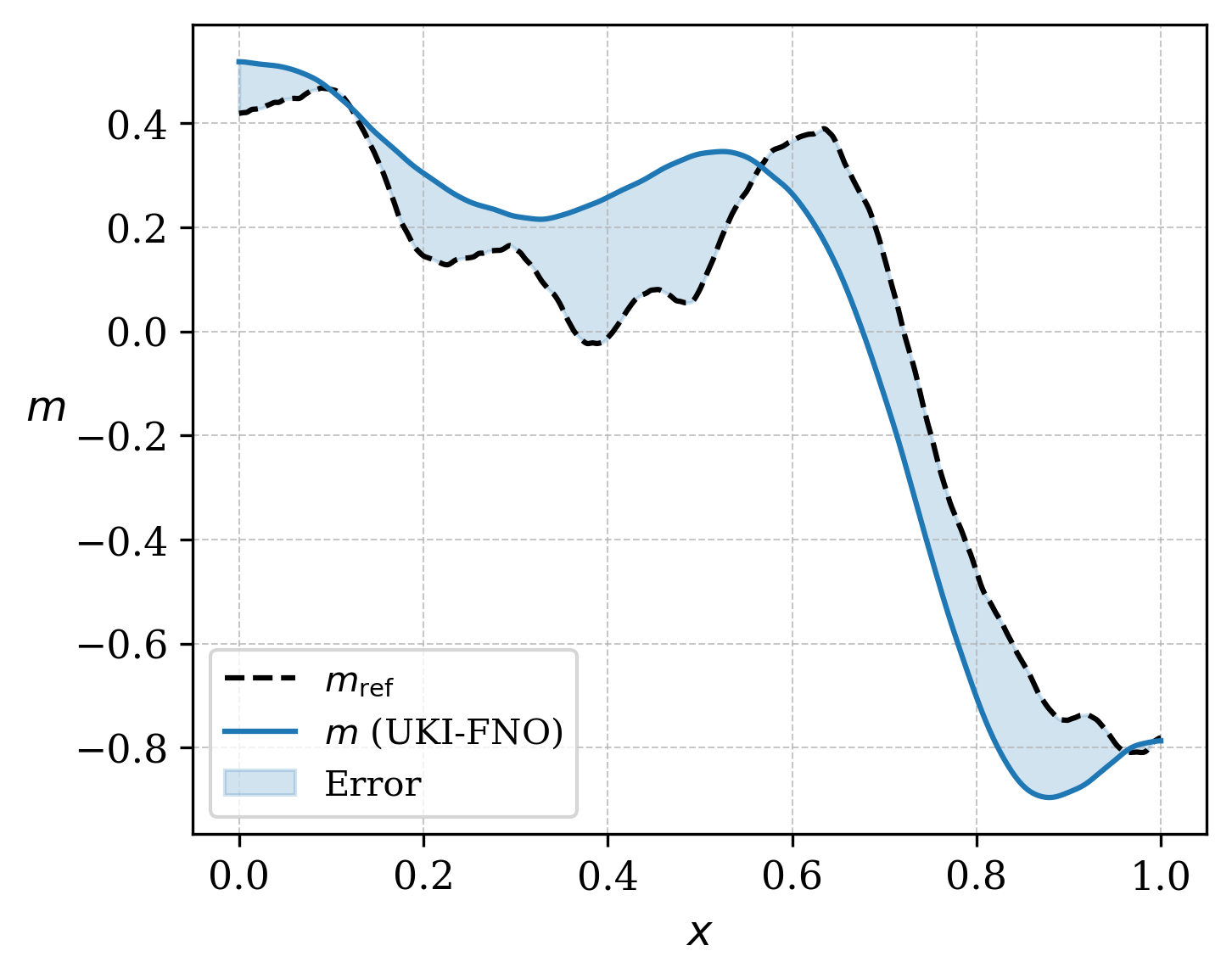}%
\hspace{0.5mm}%
\includegraphics[width=0.196\textwidth]{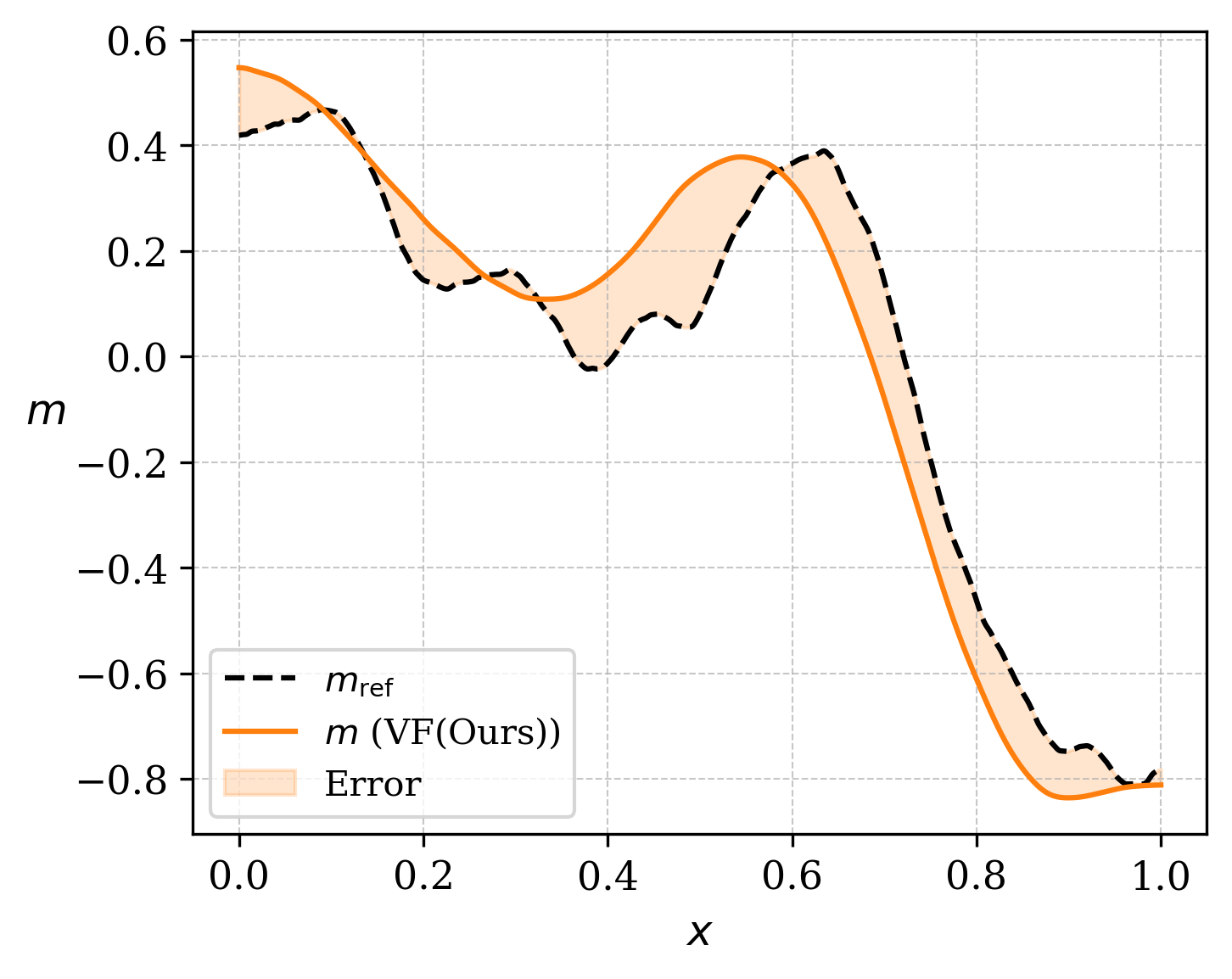}\\
\vspace{0.5mm}

\includegraphics[width=0.196\textwidth]{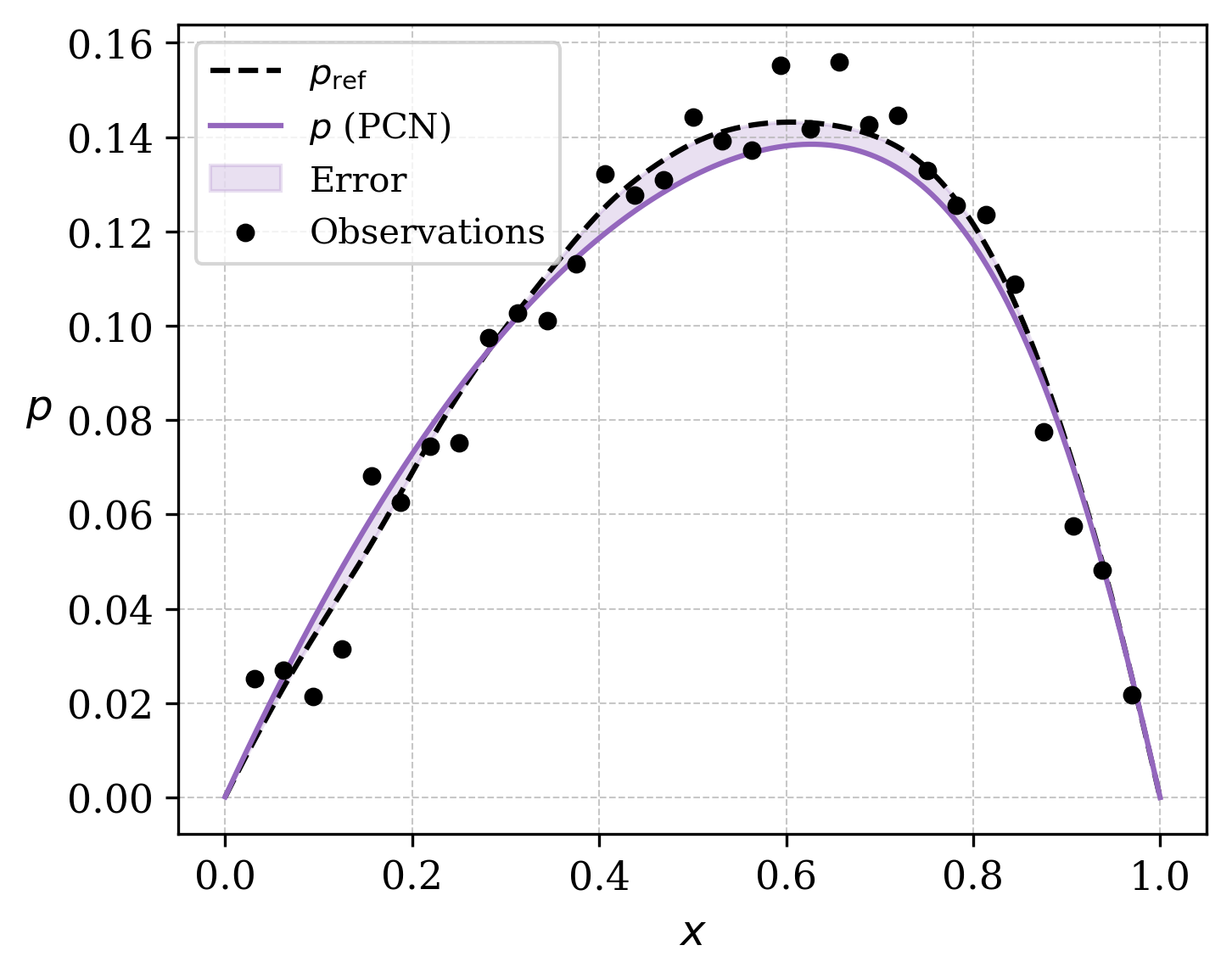}%
\hspace{0.5mm}%
\includegraphics[width=0.196\textwidth]{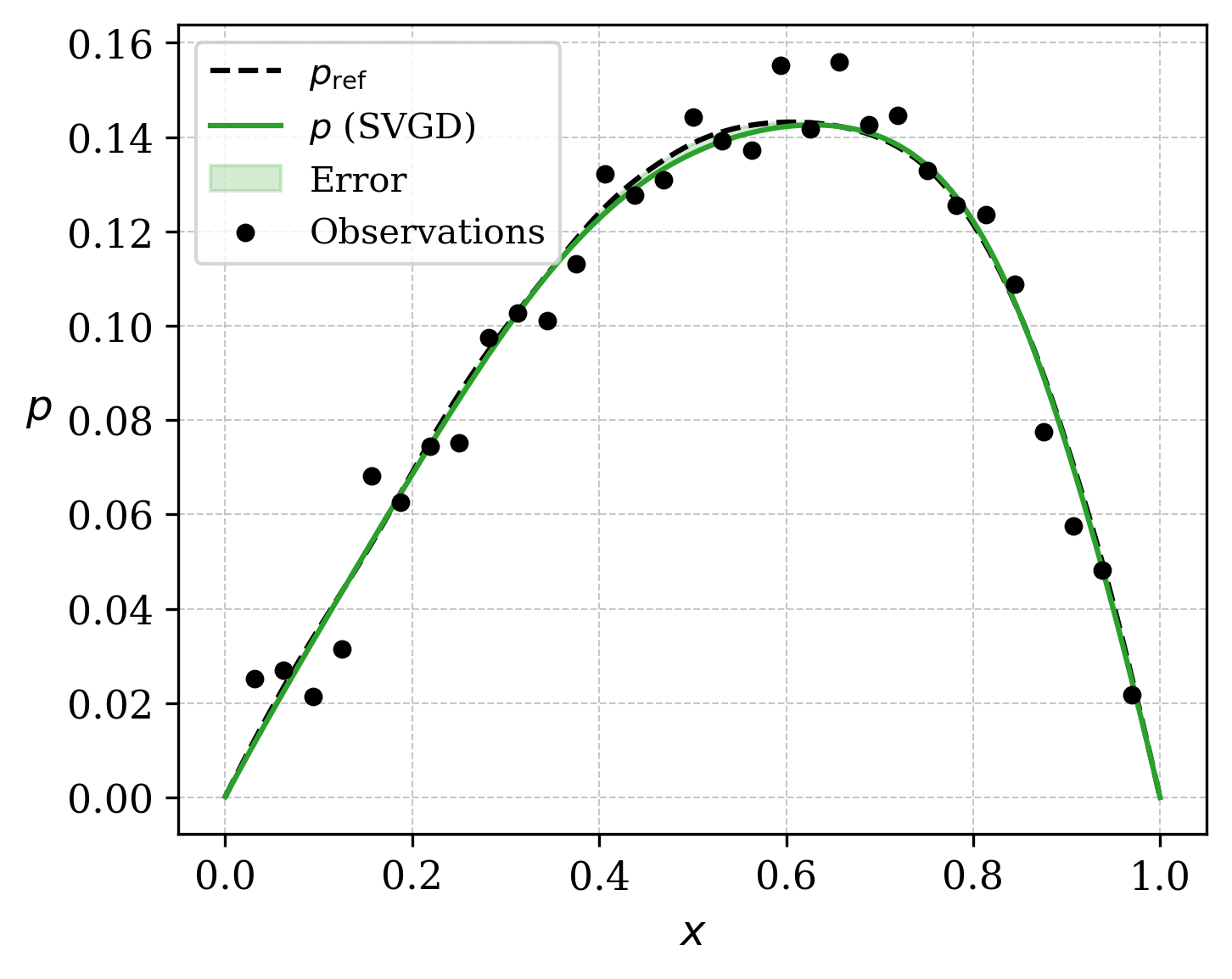}%
\hspace{0.5mm}%
\includegraphics[width=0.196\textwidth]{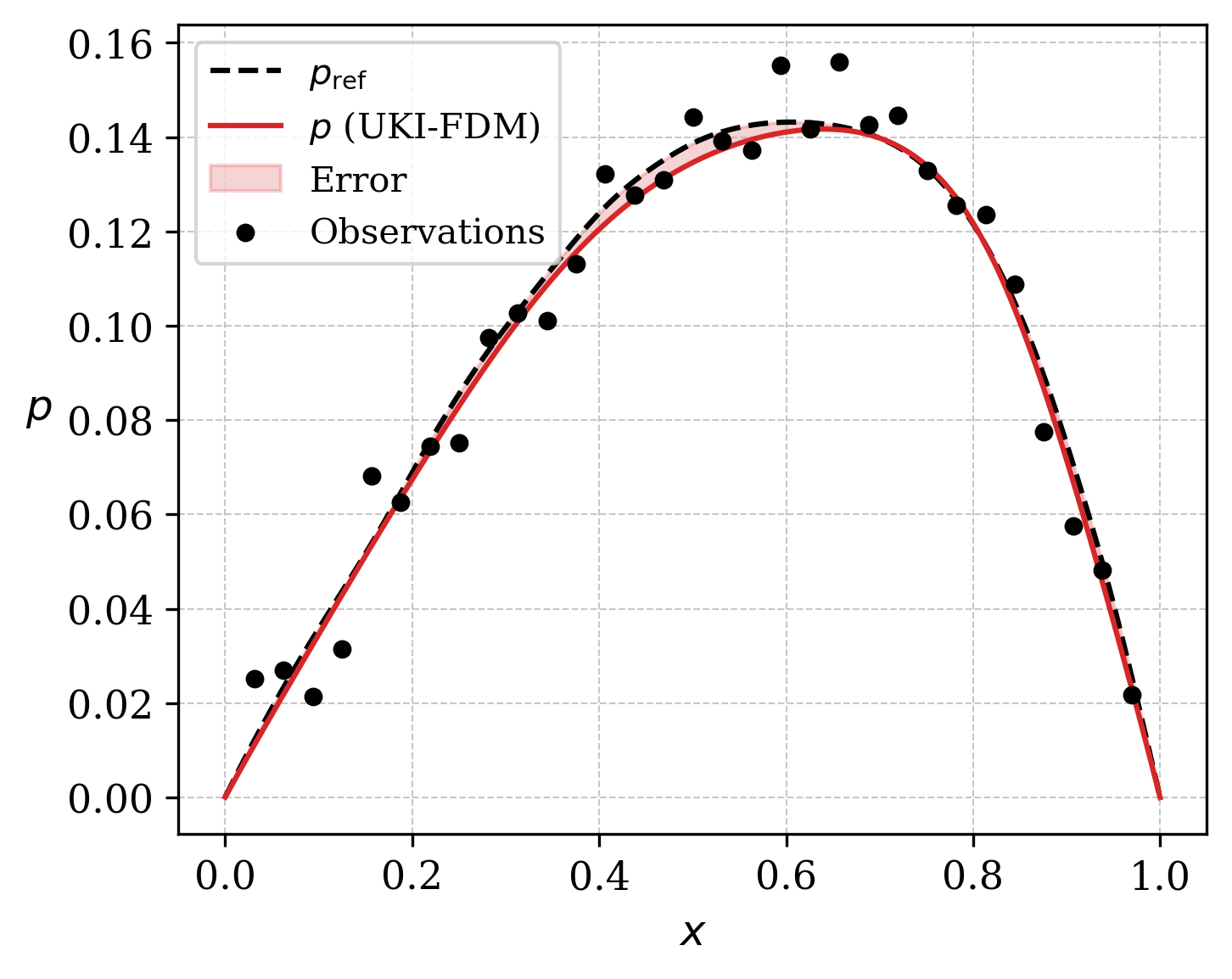}%
\hspace{0.5mm}%
\includegraphics[width=0.196\textwidth]{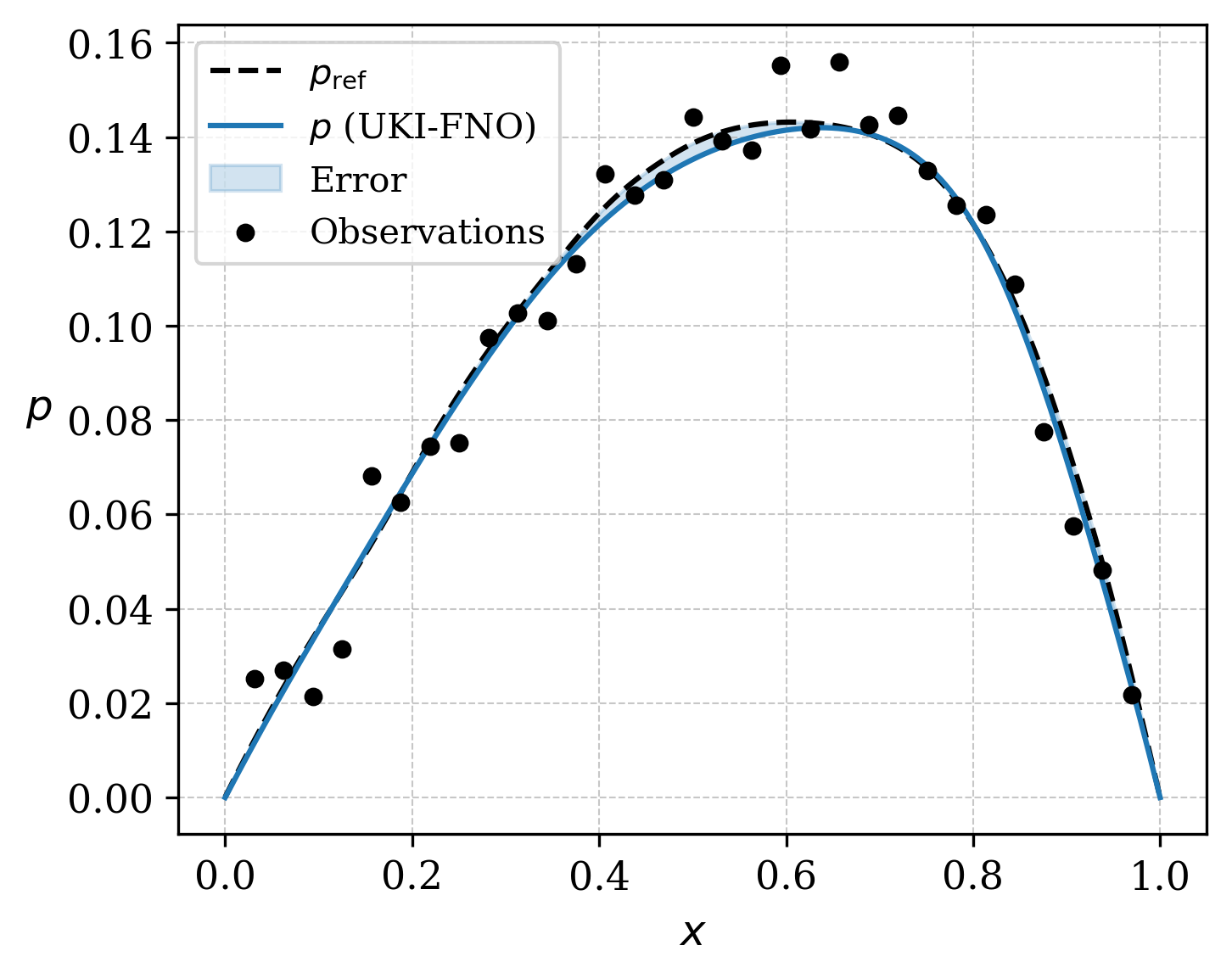}%
\hspace{0.5mm}%
\includegraphics[width=0.196\textwidth]{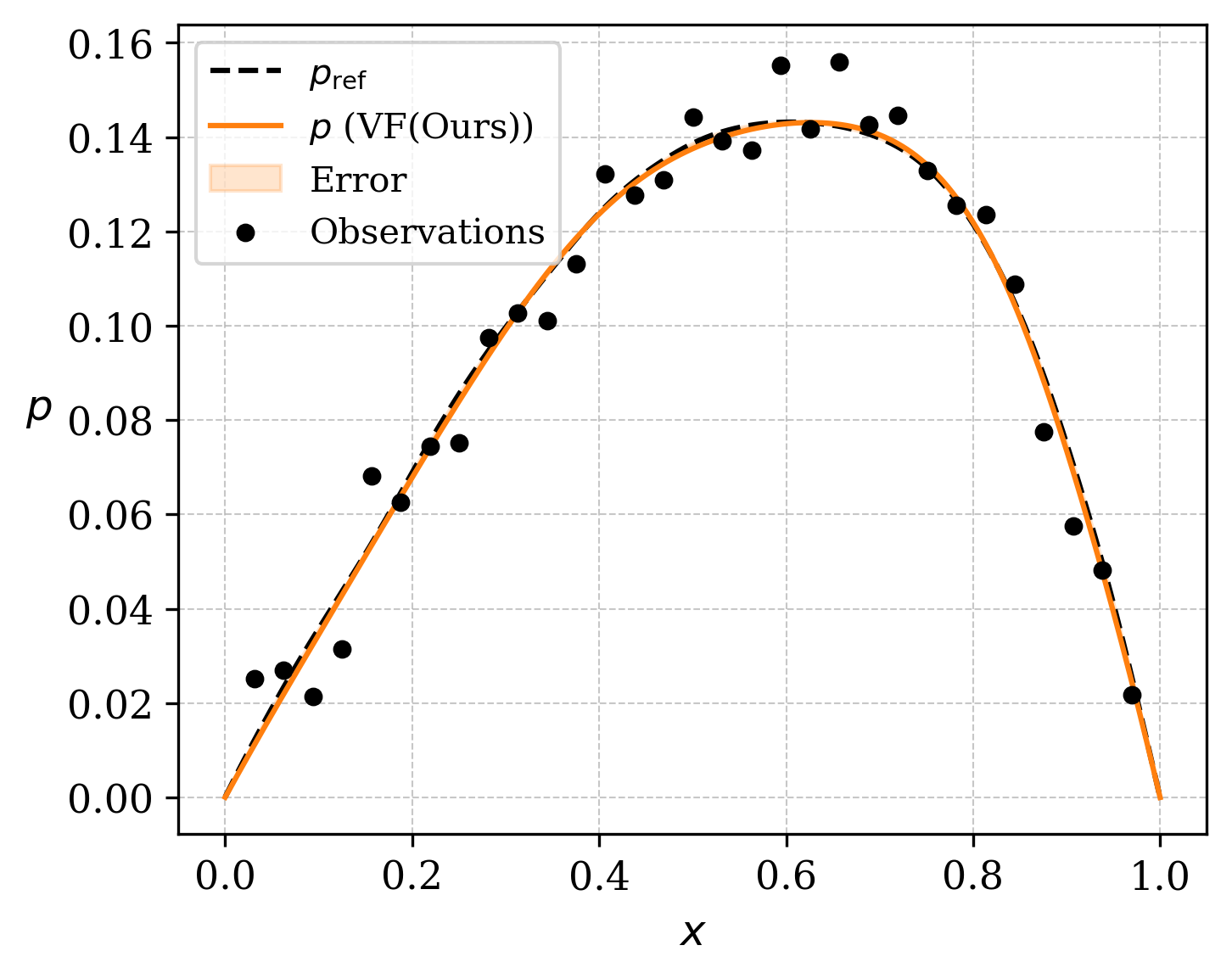}

\vspace{-1mm}
\centerline{\small (b) Noise level $\delta=5\%$}
\vspace{1.5mm}

\includegraphics[width=0.196\textwidth]{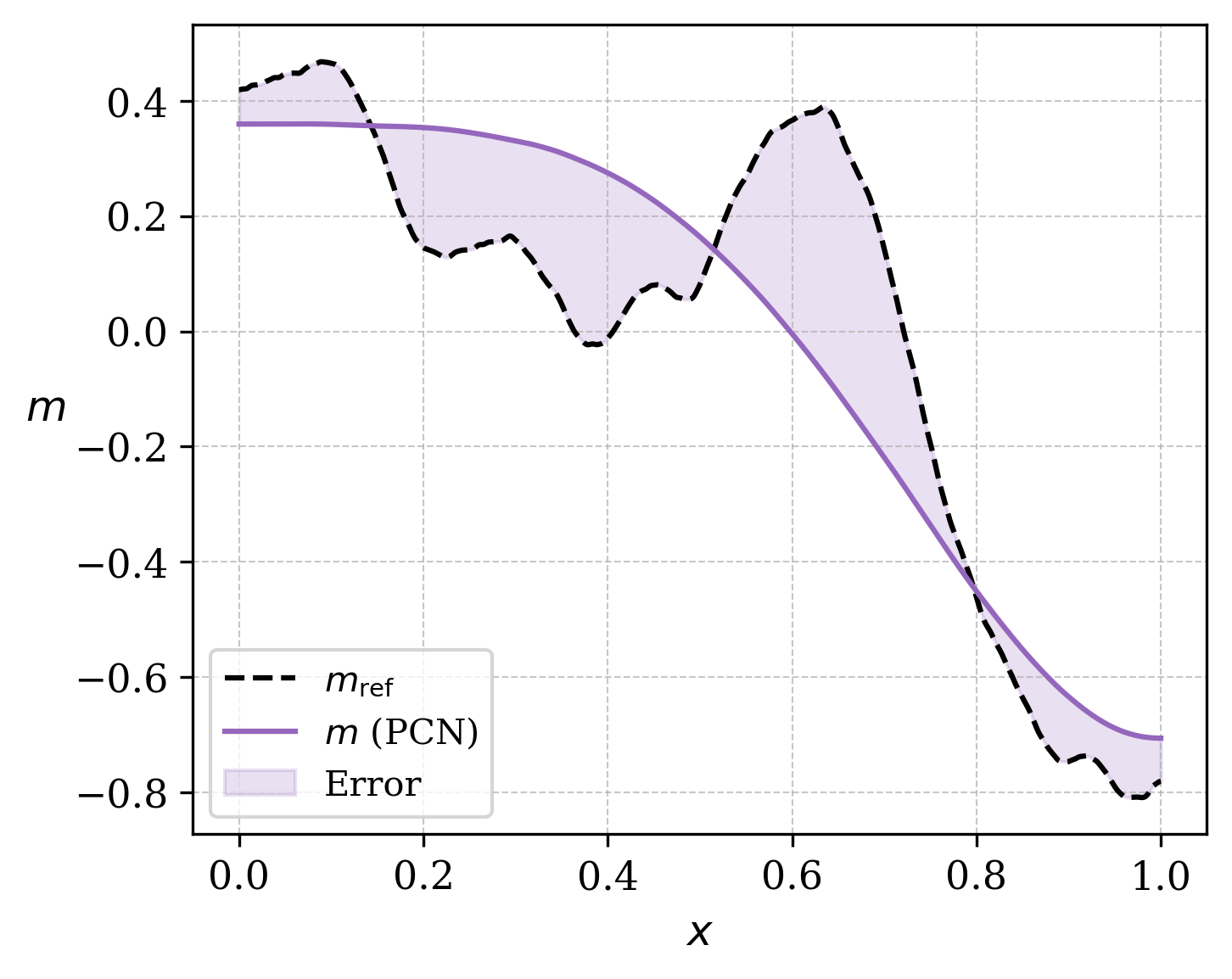}%
\hspace{0.5mm}%
\includegraphics[width=0.196\textwidth]{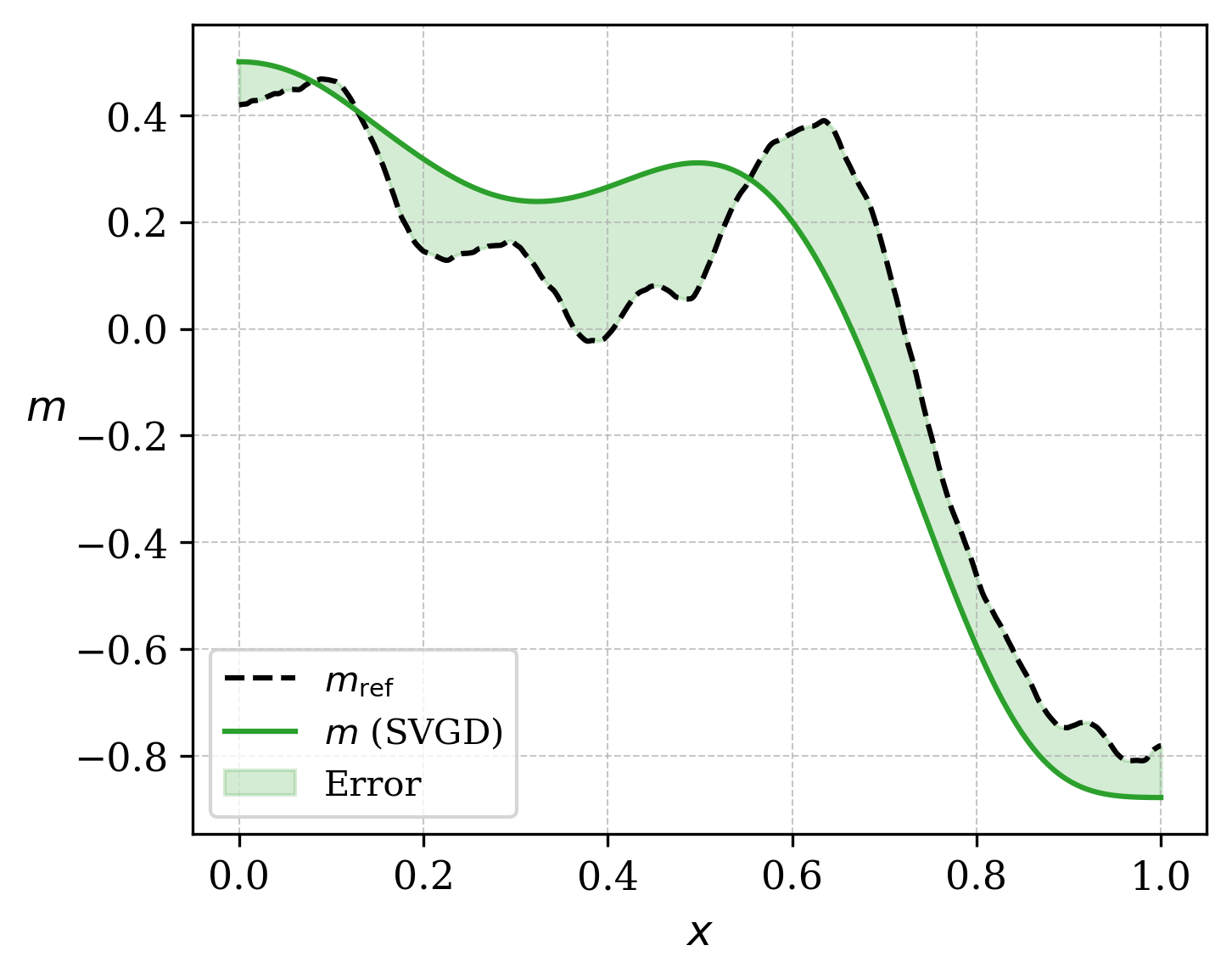}%
\hspace{0.5mm}%
\includegraphics[width=0.196\textwidth]{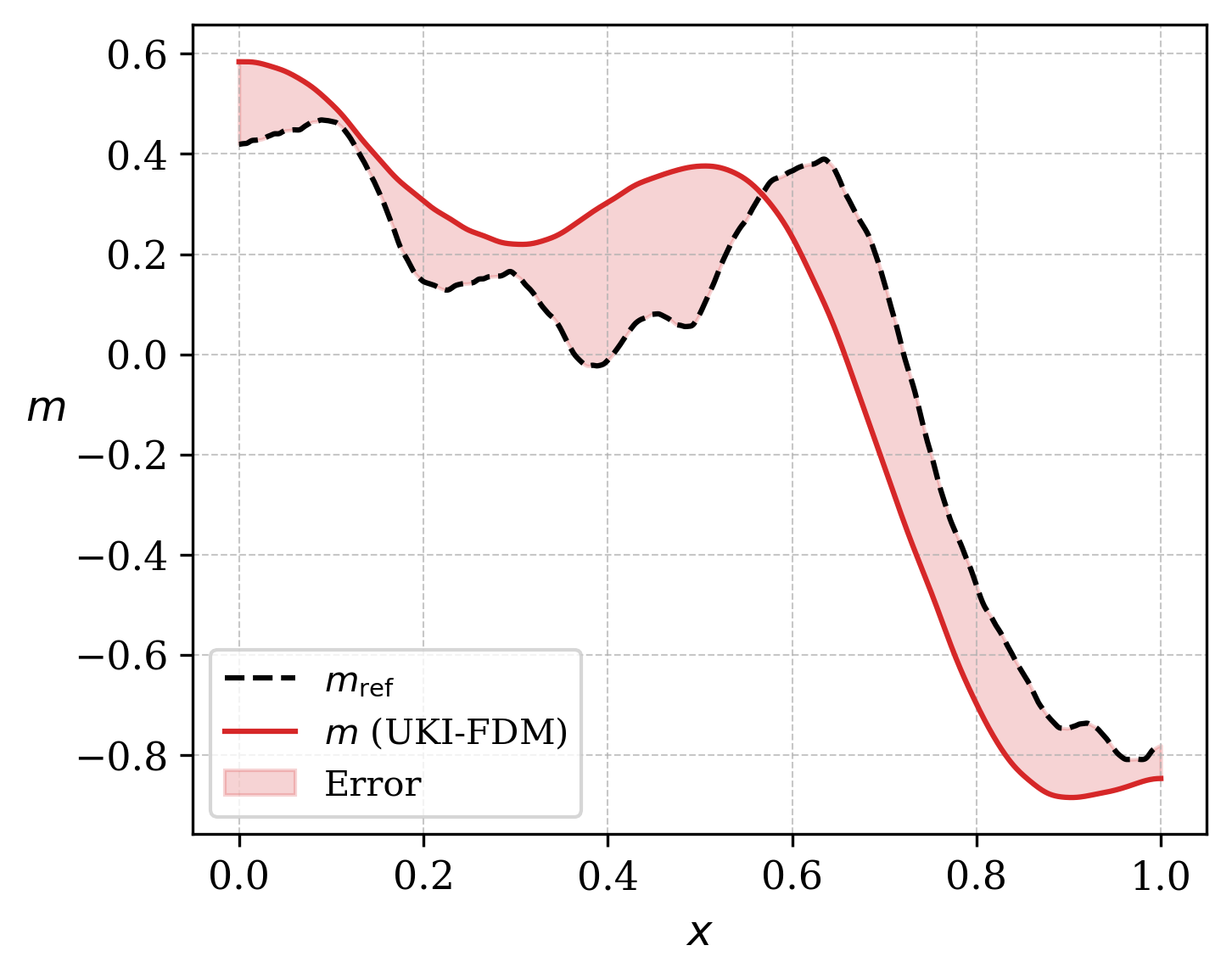}%
\hspace{0.5mm}%
\includegraphics[width=0.196\textwidth]{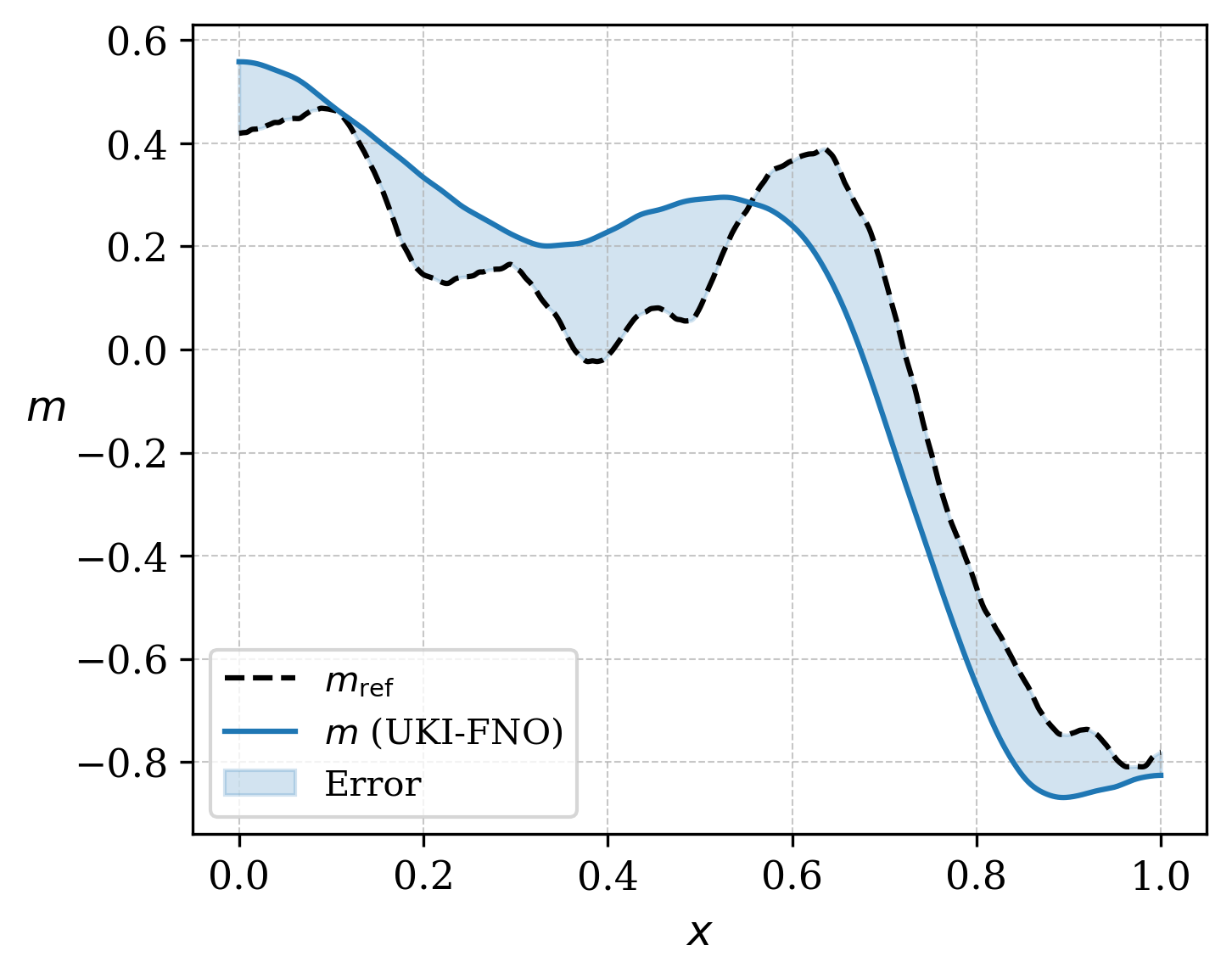}%
\hspace{0.5mm}%
\includegraphics[width=0.196\textwidth]{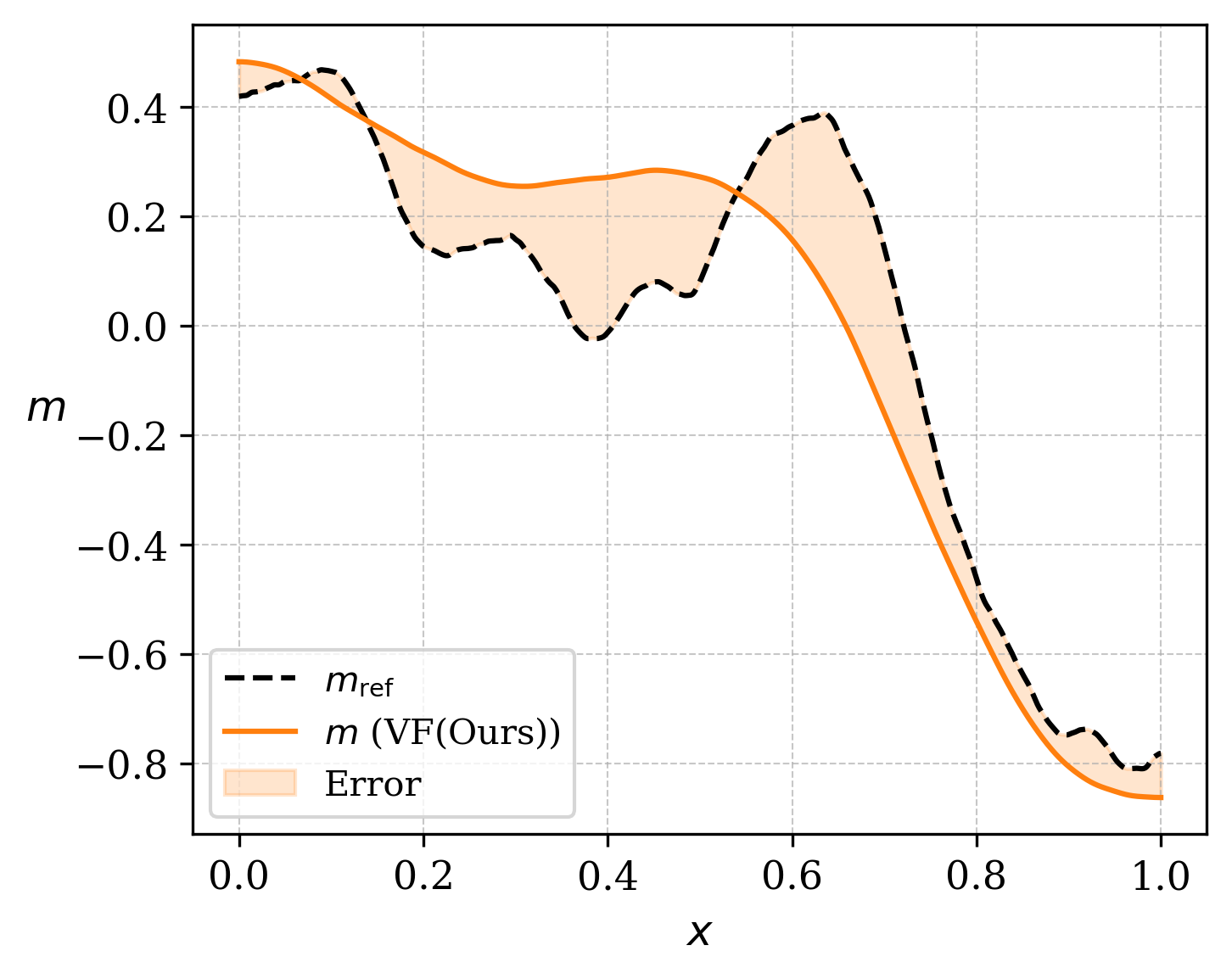}\\
\vspace{0.5mm}

\includegraphics[width=0.196\textwidth]{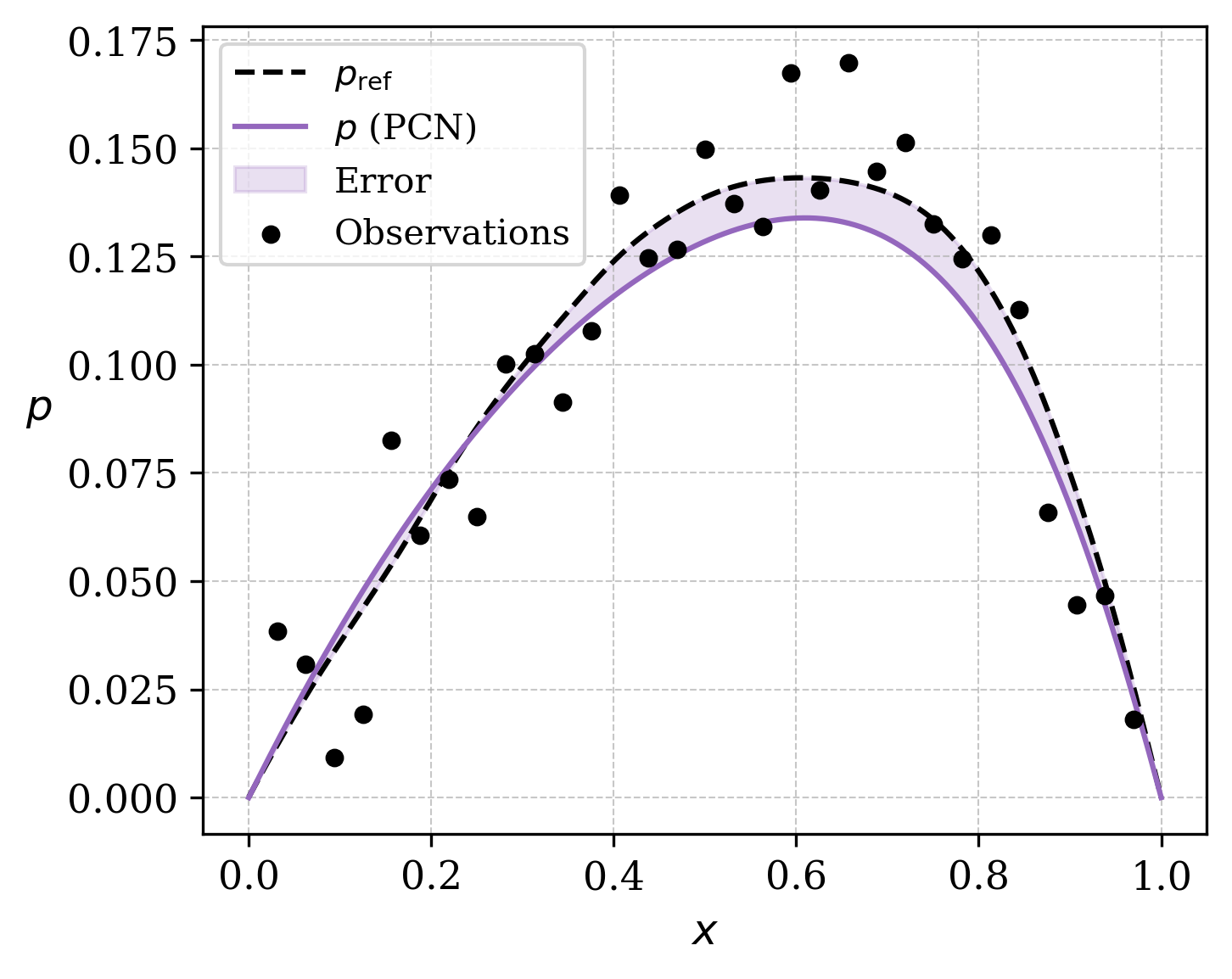}%
\hspace{0.5mm}%
\includegraphics[width=0.196\textwidth]{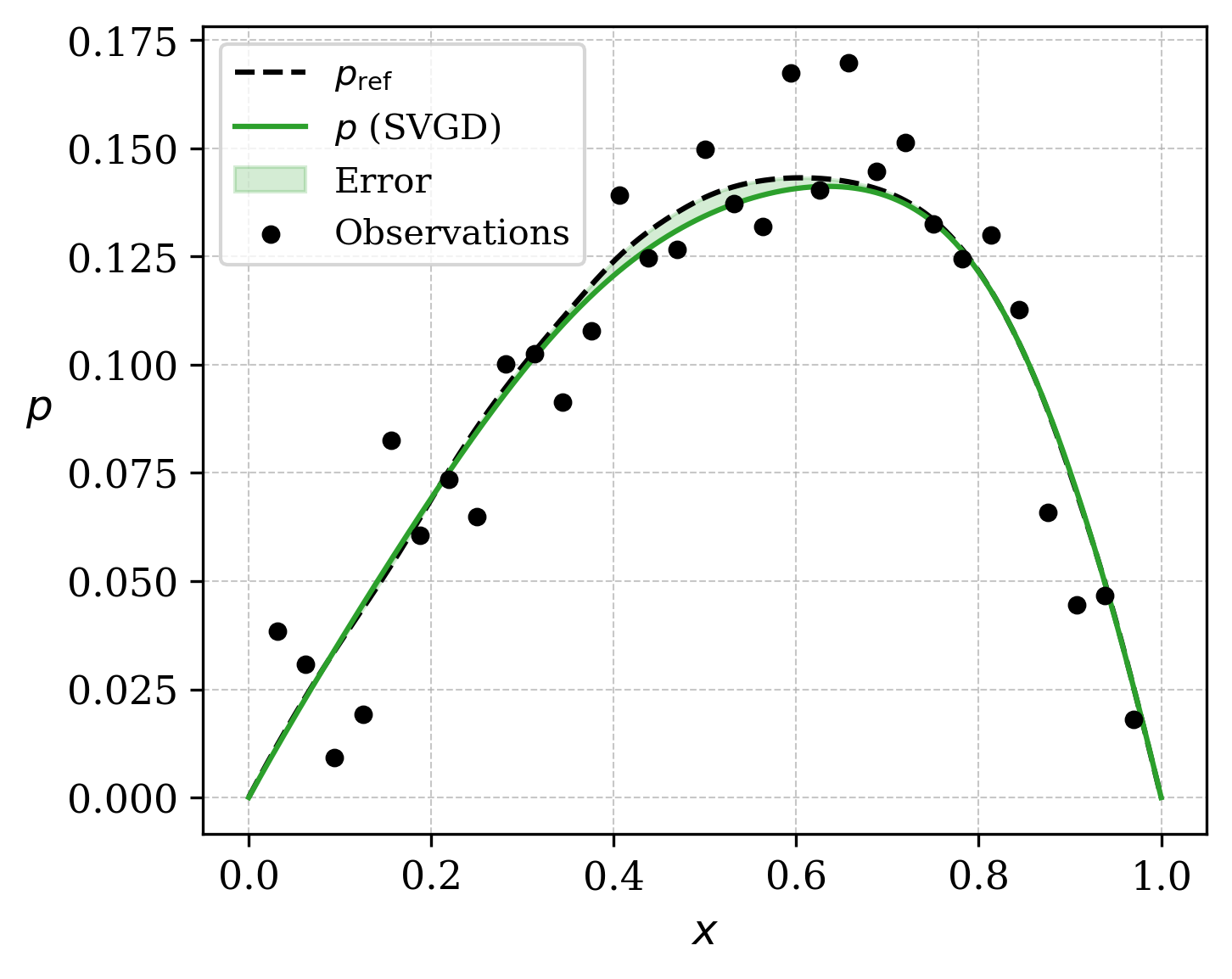}%
\hspace{0.5mm}%
\includegraphics[width=0.196\textwidth]{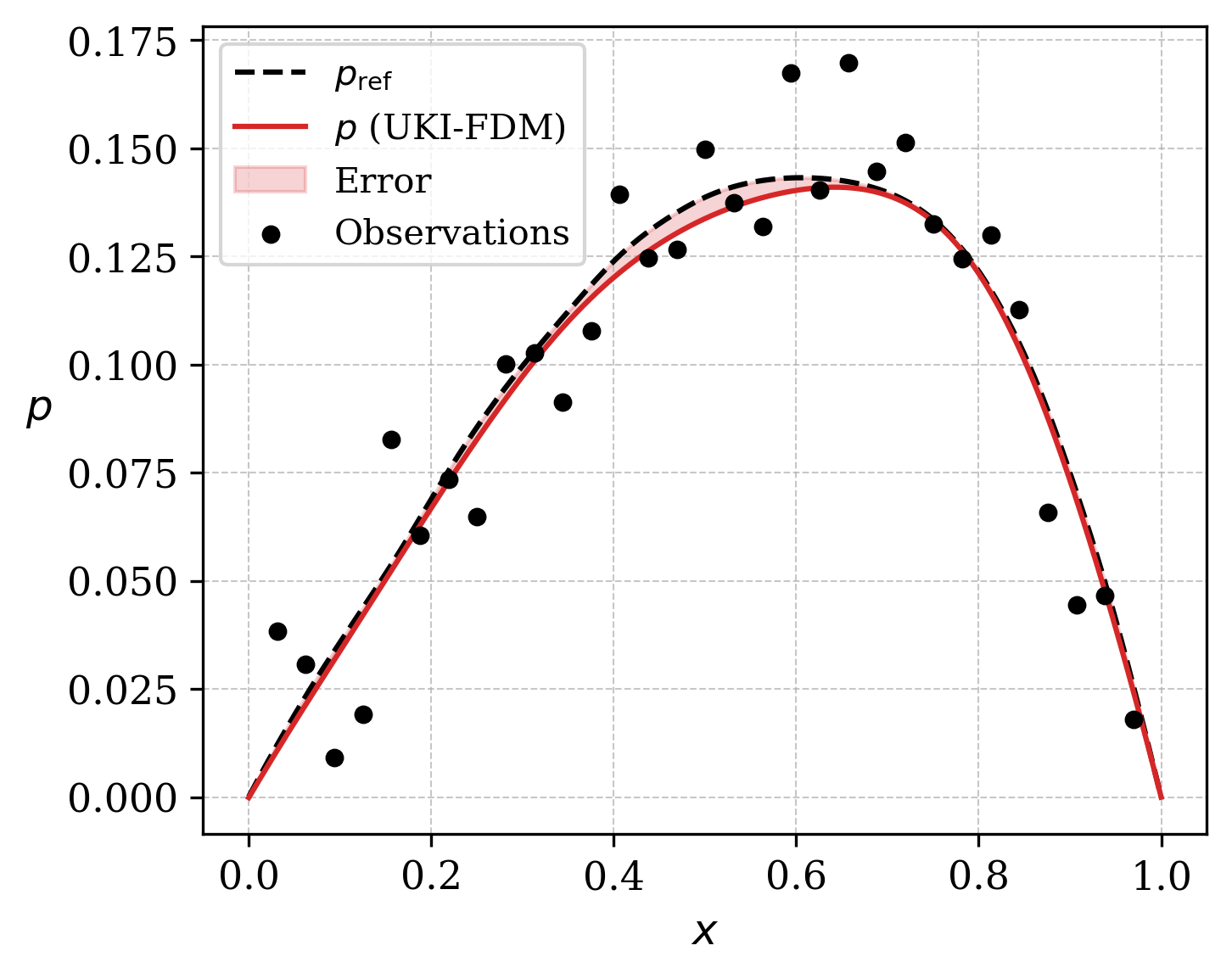}%
\hspace{0.5mm}%
\includegraphics[width=0.196\textwidth]{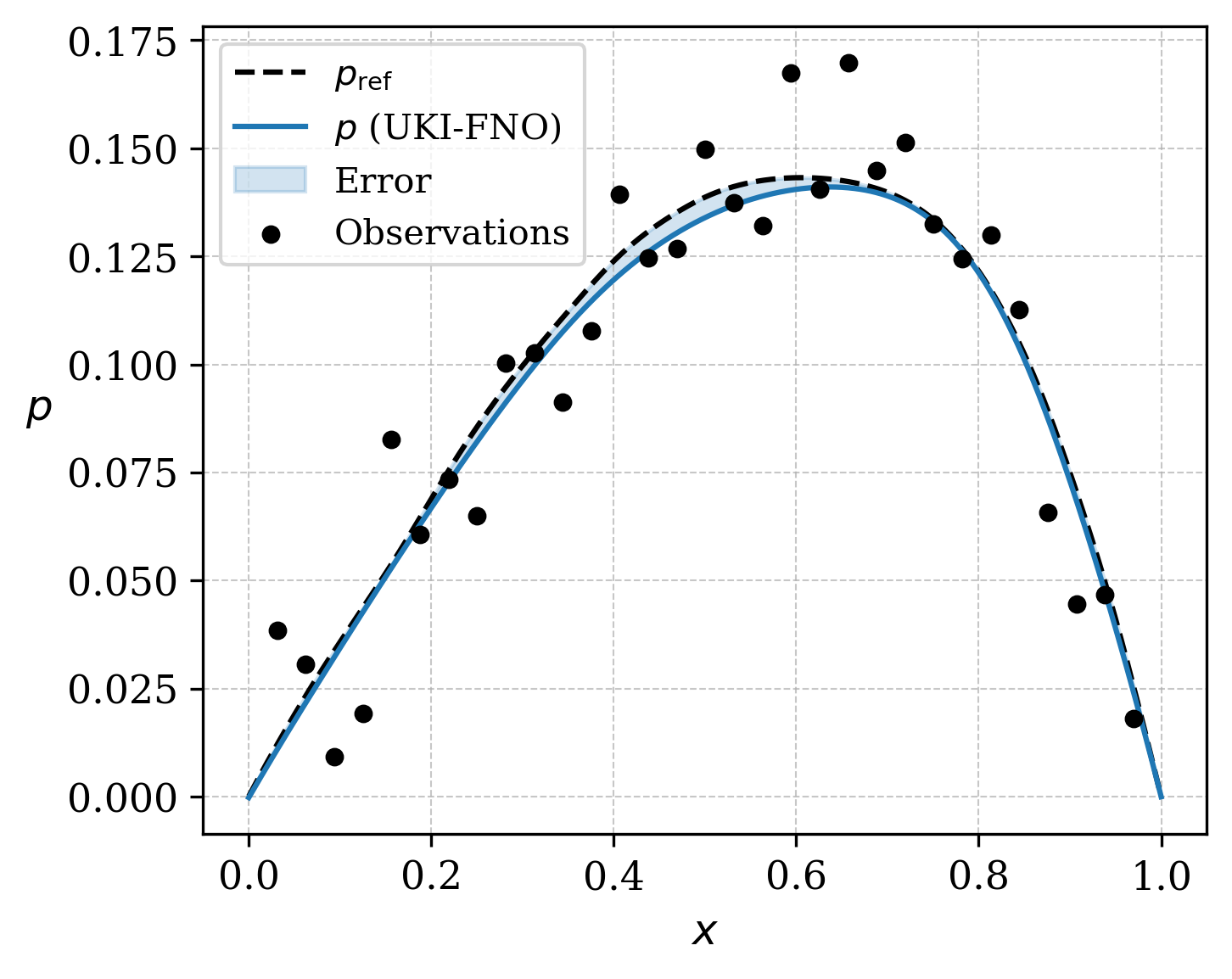}%
\hspace{0.5mm}%
\includegraphics[width=0.196\textwidth]{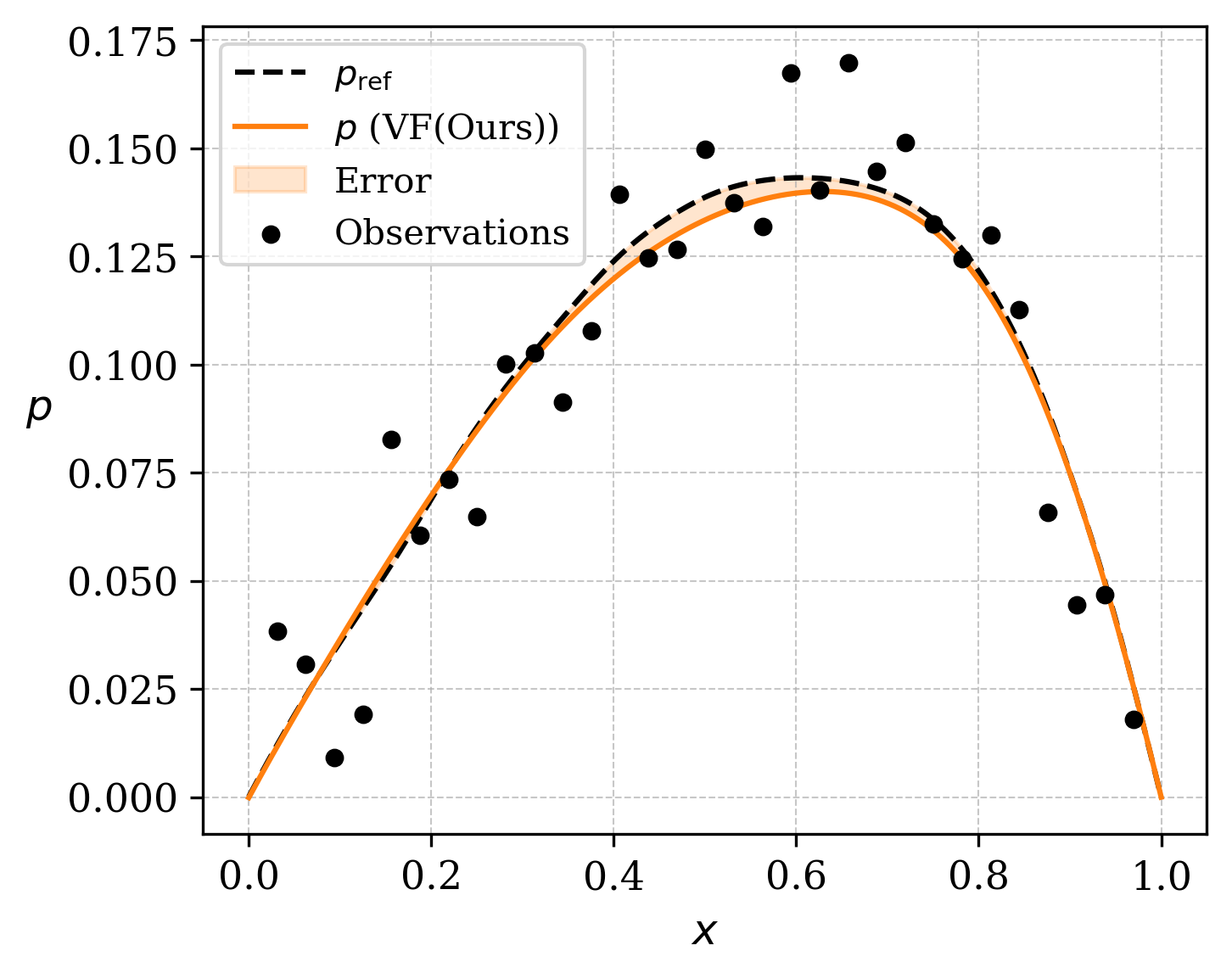}

\vspace{-1mm}
\centerline{\small (c) Noise level $\delta=10\%$}

\caption{Qualitative comparison for the 1D Darcy Flow problem with $d=64$ under different noise levels.
For each noise level, the first row shows the reconstructed coefficient fields $m$ and the second row shows the corresponding pressure fields $p$.
From left to right: pCN, SVGD, UKI-FDM, UKI-FNO, and our proposed VF method.
}
\label{fig:darcy1d_noise010_qual}
\vspace{-1em}
\end{figure}

Table~\ref{tab:comparison_1d_darcy} reports the relative inversion error $e_{\mathcal{I}}$ under varying noise amplitudes ($\delta \in \{1\%, 5\%, 10\%\}$) and truncation dimensions ($d \in \{32, 64\}$). The proposed framework consistently achieves the lowest inversion error across all scenarios. Notably, the improvement is especially pronounced at moderate-to-high noise levels, such as the 5\% and 10\% cases, demonstrating the robustness of our dual-flow architecture in highly uncertain regimes where standard Gaussian approximations (as in UKI) or particle-based methods (SVGD) struggle.

Figure~\ref{fig:fitting_error_1d_darcy} tracks the surrogate fitting error $e_{\mathcal{S}}$ over the adaptive training stages. Our approach converges to a lower fitting error in fewer stages compared to the baselines, highlighting the efficiency of the adaptive fine-tuning process guided by the VF posterior samples. 

Figure~\ref{fig:darcy1d_noise010_qual} illustrates the qualitative reconstructions for the 1D Darcy Flow problem with $d=64$ under different noise levels. As the noise level increases, the posterior distribution becomes increasingly challenging to characterize. The proposed VF method effectively captures the dominant spatial variations in the log-permeability field, producing pressure fields that closely follow the reference solutions. In comparison, the baseline methods, including pCN, SVGD, and UKI, show larger deviations from the reference, and overall the VF reconstructions achieve higher accuracy than all baselines, particularly under higher noise amplitudes.

\subsection{2D Darcy Flow}
\label{subsec:2d_darcy}

We extend the steady-state Darcy Flow to the 2D domain $\Omega = [0,1]^2$:
\begin{align}
    -\nabla \cdot \left( \exp(m_{\mb{\xi}}(\mb{x})) \nabla p(\mb{x}) \right) &= f(\mb{x}), \quad \mb{x} \in \Omega, \\
    p(\mb{x}) &= 0, \quad \mb{x} \in \partial \Omega,
\end{align}
where the source term is piecewise constant to induce strong heterogeneity:
\begin{equation}
    f(x_1, x_2) =
    \begin{cases}
        1000, & 0 \le x_2 \le \frac{4}{6}, \\
        2000, & \frac{4}{6} < x_2 \le \frac{5}{6}, \\
        3000, & \frac{5}{6} < x_2 \le 1.
    \end{cases}
\end{equation}

The log-permeability field $m_{\mb{\xi}}(\mb{x})$ is modeled as a Gaussian random field with covariance operator $\mathcal{C} = \sigma^2\,(-\Delta + \tau^2)^{-l}$, where $\Delta$ is the 2D Laplacian with homogeneous Neumann boundary conditions. We set $\sigma = 1$, $\tau = 2$, and $l = 3$. 

The truncated Karhunen-Loève expansion of $m_{\mb{\xi}}(\mb{x})$ is
\begin{equation}
    m_{\mb{\xi}}(\mb{x}) = \sum_{k} \sqrt{\lambda_k}\, \xi_k\, \psi_k(\mb{x}),
\end{equation}
with eigenfunctions and eigenvalues given explicitly by
\begin{equation}
    \psi_k(\mb{x}) =
    \begin{cases}
        \sqrt{2} \cos(\pi k_1 x_1), & k_2 = 0, \\
        \sqrt{2} \cos(\pi k_2 x_2), & k_1 = 0, \\
        2 \cos(\pi k_1 x_1) \cos(\pi k_2 x_2), & \text{otherwise},
    \end{cases}
    \quad
    \lambda_k = \sigma^2 \bigl( \pi^2 (k_1^2 + k_2^2) + \tau^2 \bigr)^{-l}.
\end{equation}

The vector $\boldsymbol{\xi} \in \mathbb{R}^d$ forms the finite-dimensional parameter space for inference. Here, we infer the log-permeability field from 36 uniformly spaced pressure observations. The forward PDE is solved numerically using a $71\times 71$ spatial grid.

\begin{table}[t]
\centering
\small
\renewcommand{\arraystretch}{1.12}
\setlength{\tabcolsep}{7pt}
\caption{Relative inversion error $e_{\mathcal{I}}$ of 2D Darcy Flow problem under different noise amplitudes and $d$ values. Lower is better. Results are averaged over 3 runs per experiment.}
\label{tab:comparison_2d_darcy}

\resizebox{0.9\textwidth}{!}{%
\begin{tabular}{lccc@{\hspace{1.2em}}ccc}
\toprule
\multirow{2}{*}{\textbf{Method}}
& \multicolumn{3}{c}{$d=32$}
& \multicolumn{3}{c}{$d=64$} \\
\cmidrule(lr){2-4} \cmidrule(lr){5-7}
& \textbf{1\%} & \textbf{5\%} & \textbf{10\%}
& \textbf{1\%} & \textbf{5\%} & \textbf{10\%} \\
\midrule
pCN        & 0.2623 & 0.4075 & 0.6044 & 0.2650 & 0.4206 & 0.6116 \\
SVGD-FNO   & 0.2602 & 0.4115 & 0.6100 & 0.2620 & 0.4115 & 0.6110 \\
UKI-FDM    & \textbf{0.2119} & 0.3891 & 0.4161 & 0.2169 & 0.3757 & 0.4067 \\
UKI-FNO    & 0.2153 & 0.3851 & 0.4225 & \textbf{0.2151} & 0.3869 & 0.4332 \\
\textbf{Ours} & 0.2238 & \textbf{0.3331} & \textbf{0.3578} & 0.2206 & \textbf{0.3372} & \textbf{0.3504} \\
\bottomrule
\end{tabular}%
}\vspace{-1em}
\end{table}

\begin{figure}[!htb]
\centering
\setlength{\abovecaptionskip}{4pt}
\setlength{\belowcaptionskip}{-2pt}
\includegraphics[width=0.32\textwidth]{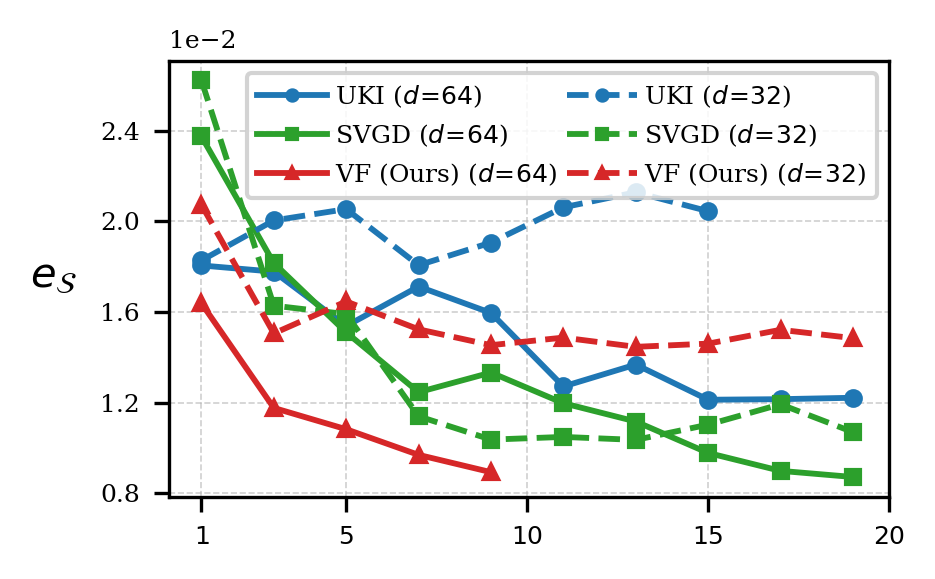}
\includegraphics[width=0.32\textwidth]{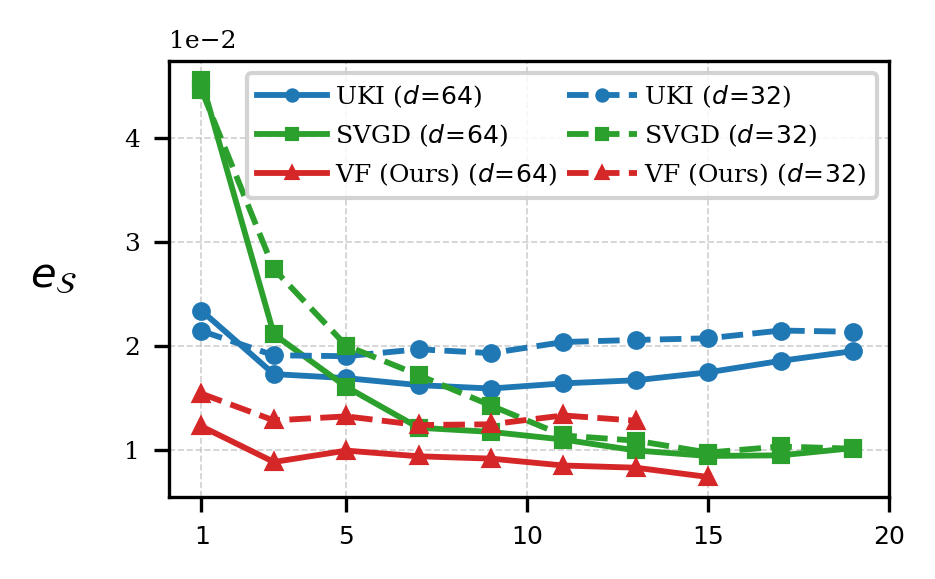}
\includegraphics[width=0.32\textwidth]{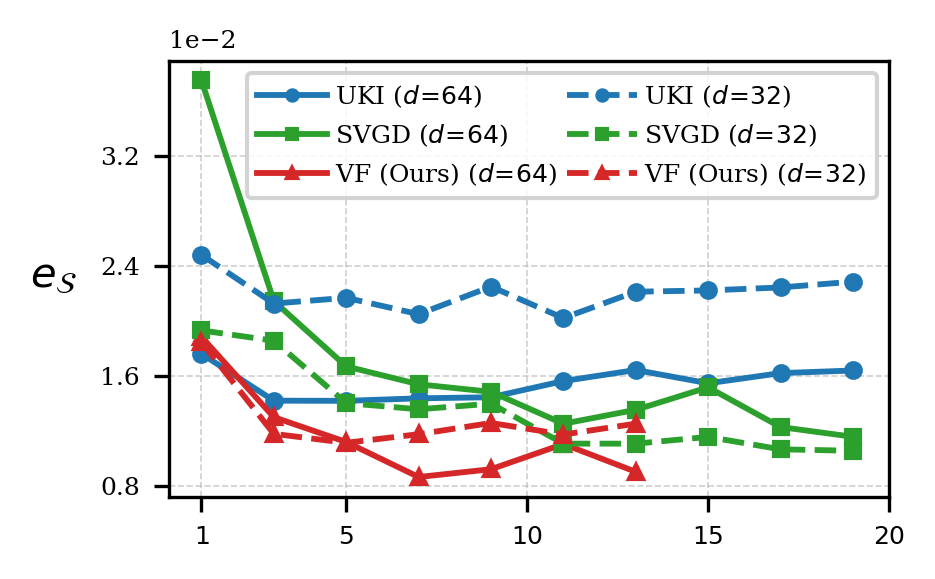}

\caption{Surrogate fitting error $e_{\mathcal{S}}$ across adaptive stages of 2D Darcy Flow problem. Columns: noise levels $\delta \in \{1\%, 5\%, 10\%\}$. Solid/dashed lines: $d=64/32$. Overall, our VF method converges to lower errors in fewer stages.}
\label{fig:fitting_error_2d_darcy}
\end{figure}

\begin{figure}[htbp]
\centering

\includegraphics[width=0.95\textwidth]{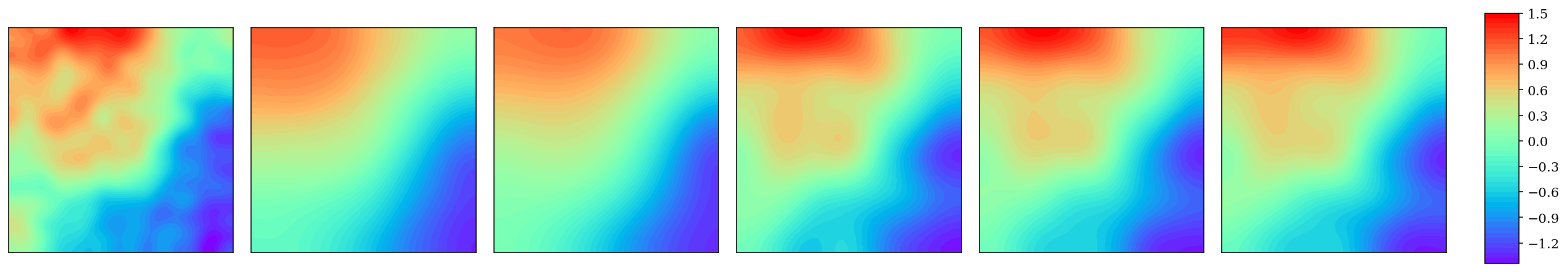}\\
\includegraphics[width=0.95\textwidth]{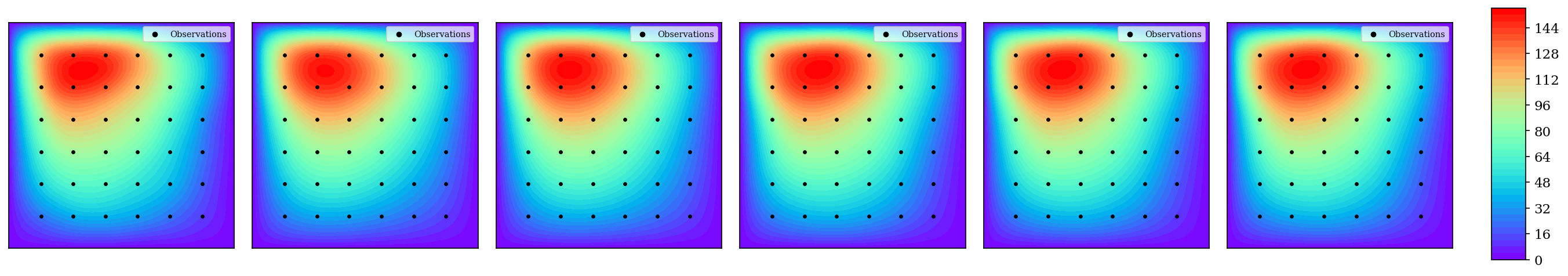}\\

\vspace{-1mm}
\centerline{\small (a) Noise level $\delta=1\%$}
\vspace{1mm}

\includegraphics[width=0.95\textwidth]{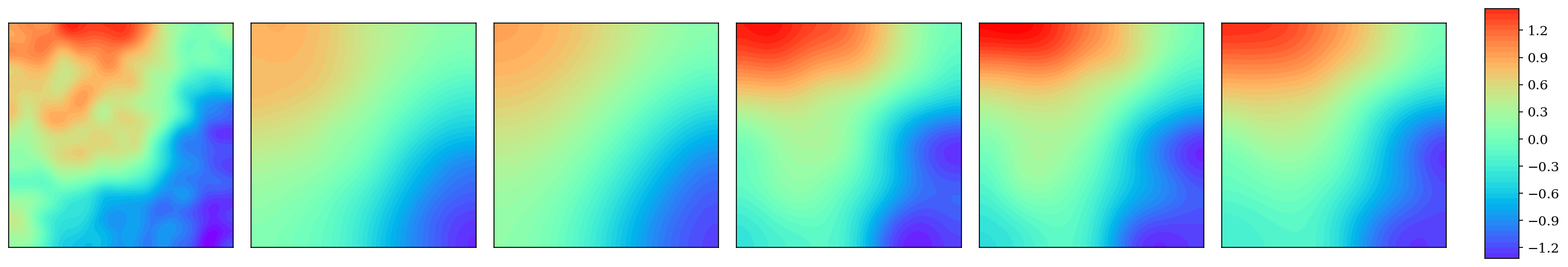}\\
\includegraphics[width=0.95\textwidth]{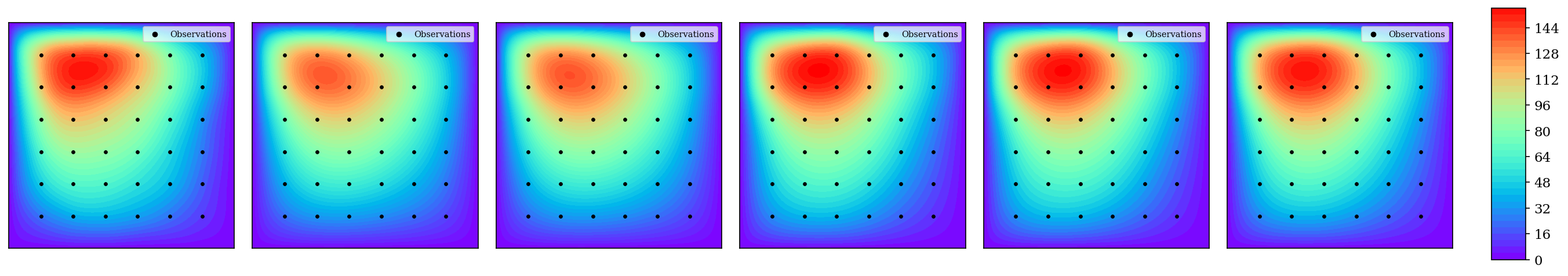}\\

\vspace{-1mm}
\centerline{\small (b) Noise level $\delta=5\%$}
\vspace{1mm}

\includegraphics[width=0.95\textwidth]{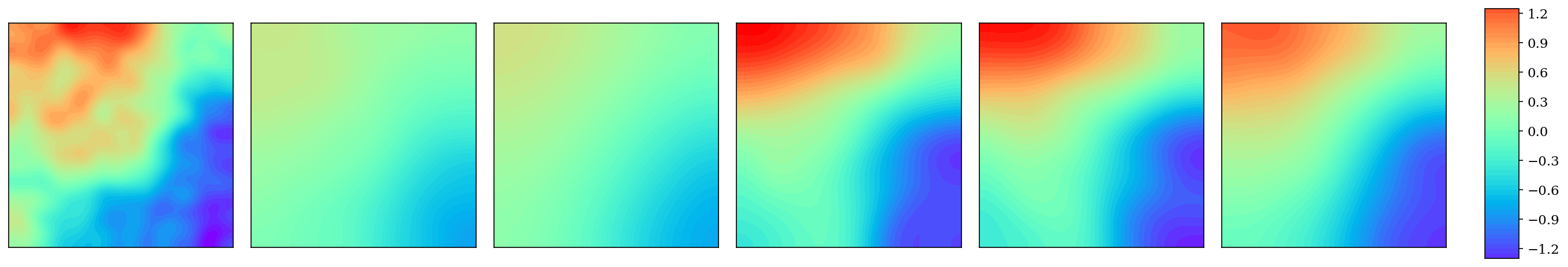}\\
\includegraphics[width=0.95\textwidth]{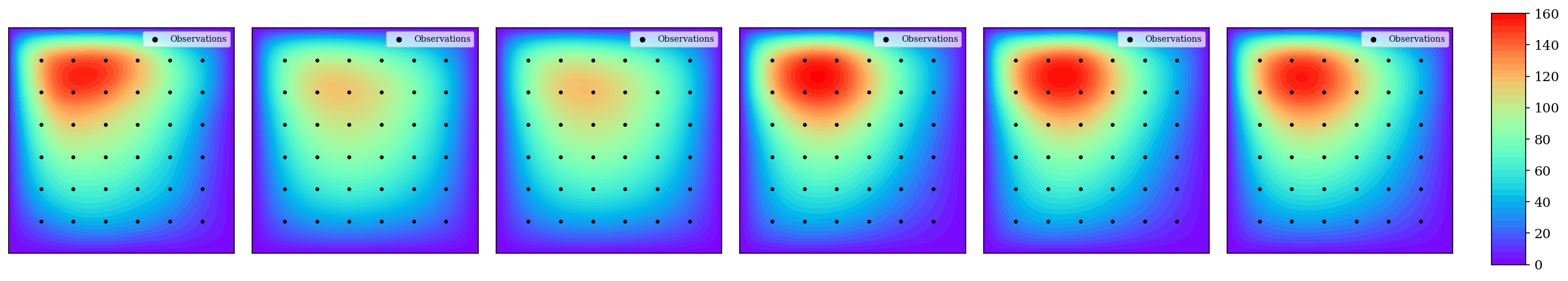}\\

\vspace{-1mm}
\centerline{\small (c) Noise level $\delta=10\%$}

\caption{
Qualitative comparison for the 2D Darcy Flow problem with $d=64$ under different noise levels.
For each noise level, the first row shows the reconstructed coefficient fields $m$ and the second row shows the corresponding pressure fields $p$.
From left to right: reference ground truth, pCN, SVGD, UKI-FDM, UKI-FNO, and our proposed VF method.
}
\label{fig:darcy2d_qual}
\vspace{-1em}
\end{figure}

The quantitative inversion results for the 2D Darcy problem are summarized in Table~\ref{tab:comparison_2d_darcy}. In this highly heterogeneous setting, The proposed method achieves the best performance in most medium- and high-noise cases, while remaining competitive in the low-noise regime. The iterative prior updating scheme effectively mitigates prior misspecification, allowing the model to locate the true posterior even when the reference field is drawn from an out-of-distribution uniform distribution.

The evolution of the surrogate fitting error $e_{\mathcal{S}}$, depicted in Figure~\ref{fig:fitting_error_2d_darcy}, further confirms the stability of our method. The VF-guided adaptive training rapidly reduces the surrogate error, outperforming both UKI-FDM and UKI-FNO in terms of convergence speed and final accuracy.

Figure~\ref{fig:darcy2d_qual} presents qualitative comparisons for the 2D Darcy Flow problem with $d=64$ under different noise levels. Compared with the 1D case, the 2D coefficient field exhibits substantially more complex spatial structures and heterogeneous patterns. The proposed VF method consistently reconstructs the high-contrast regions of the permeability field while preserving high-frequency spatial features and avoiding the spurious artifacts observed in the pCN and SVGD results. In particular, the reconstructions produced by pCN and SVGD exhibit significant discrepancies in scale compared with the reference field, as these methods lack the iterative prior updating mechanism and therefore tend to generate parameter fields concentrated around the initial prior distribution $\mathcal{N}(\mb{0},\mb{I})$. In contrast, UKI-based methods and our proposed framework are able to better capture the overall magnitude and spatial range of the reference coefficient field. However, UKI tends to overestimate the extreme values and appears to exhibit more fine-scale details than our framework. In reality, these apparent details correspond to errors introduced by the Gaussian-based assumptions in UKI, resulting in lower reconstruction accuracy compared with our VF method. Consequently, the pressure fields produced by VF remain highly consistent with the reference solutions, whereas the baseline methods -- including UKI, pCN, and SVGD -- either produce oversmoothed reconstructions or distorted local structures.

\subsection{2D Navier-Stokes}
\label{subsec:2d_ns}

We consider the vorticity formulation of the 2D NS equations on $\Omega = [0,1]^2$ with periodic boundary conditions:
\begin{align}
    \partial_t \omega + \mb{u} \cdot \nabla \omega &= \nu \Delta \omega + f(\mb{x}), && \mb{x} \in \Omega,\; t \in (0,T], \\
    \nabla \cdot \mb{u} &= 0, && \mb{x} \in \Omega,\; t \in [0,T], \\
    \omega(\mb{x}, 0) &= m_{\mb{\xi}}(\mb{x}), && \mb{x} \in \Omega,
\end{align}
where the viscosity is $\nu = 10^{-2}$, final time $T = 1$, and the forcing term is
\begin{equation}
    f(\mb{x}) = 0.1\bigl(\sin(2\pi(x_1+x_2)) + \cos(2\pi(x_1+x_2))\bigr).
\end{equation}

The initial vorticity field $m_{\mb{\xi}}(\mb{x})$ is modeled as a Gaussian random field with covariance operator $\mathcal{C} = \sigma^2 (-\Delta + \tau^2)^{-l}$,
where $\Delta$ is the 2D Laplacian with periodic boundary conditions. We set the parameters as $\sigma = 25$, $\tau = 2$, and $l = 2.5$.
The truncated Karhunen-Loève expansion of $m_{\mb{\xi}}(\mb{x})$ is
\begin{equation}
    m_{\mb{\xi}}(\mb{x}) = \sum_{k \in K} \sqrt{\lambda_k}\, \xi^c_{(k)}\, \psi^c_k(\mb{x}) + \sqrt{\lambda_k}\, \xi^s_{(k)}\, \psi^s_k(\mb{x}),
\end{equation}
where $K = \{ (k_x, k_y) \mid k_y > 0 \text{ or } (k_y = 0 \text{ and } k_x > 0)\}$, and the basis functions and eigenvalues are
\begin{equation}
\begin{aligned}
    \psi^c_k(\mb{x}) = \sqrt{2}\, &\cos(2\pi \mb{k} \cdot \mb{x}), \quad
    \psi^s_k(\mb{x}) = \sqrt{2}\, \sin(2\pi \mb{k} \cdot \mb{x}), \\
    & \lambda_k = \sigma^2 \bigl( 4 \pi^2 (k_x^2 + k_y^2) + \tau^2 \bigr)^{-l}.
\end{aligned}
\end{equation}
The vector of coefficients $\boldsymbol{\xi} \in \mathbb{R}^d$ forms the finite-dimensional parameter space for inference. The goal is to recover the initial vorticity field $\omega_0$ from 36 uniformly spaced sparse observations of the final state $\omega(\mb{x}, 1)$ at $T=1$. The forward solver operates on a $128\times 128$ spatial grid.

\begin{table}[t]
\centering
\small
\renewcommand{\arraystretch}{1.12}
\setlength{\tabcolsep}{7pt}
\caption{Relative inversion error $e_{\mathcal{I}}$ of 2D Navier-Stokes problem under different noise amplitudes and $d$ values. Lower is better. Results are averaged over 3 runs per experiment.}
\label{tab:comparison_2d_ns}

\resizebox{0.9\textwidth}{!}{%
\begin{tabular}{lccc@{\hspace{1.2em}}ccc}
\toprule
\multirow{2}{*}{\textbf{Method}}
& \multicolumn{3}{c}{$d=32$}
& \multicolumn{3}{c}{$d=64$} \\
\cmidrule(lr){2-4} \cmidrule(lr){5-7}
& \textbf{1\%} & \textbf{5\%} & \textbf{10\%}
& \textbf{1\%} & \textbf{5\%} & \textbf{10\%} \\
\midrule
pCN        & 0.4910 & 0.4560 & 0.5387 & 0.4686 & 0.4634 & 0.5514 \\
SVGD-FNO   & 0.3720 & 0.4591 & 0.5413 & 0.3723 & 0.4572 & 0.5407 \\
UKI-FDM    & 0.2948 & 0.3777 & 0.4571 & 0.3082 & 0.3756 & 0.4598 \\
UKI-FNO    & 0.2864 & 0.3778 & 0.4586 & 0.2951 & 0.3784 & 0.4618 \\
\textbf{Ours} & \textbf{0.2849} & \textbf{0.3647} & \textbf{0.4207} & \textbf{0.2879} & \textbf{0.3644} & \textbf{0.4184} \\
\bottomrule
\end{tabular}%
}\vspace{-1em}
\end{table}

\begin{figure}[ht]
\centering
\setlength{\abovecaptionskip}{4pt}
\setlength{\belowcaptionskip}{-2pt}

\includegraphics[width=0.32\textwidth]{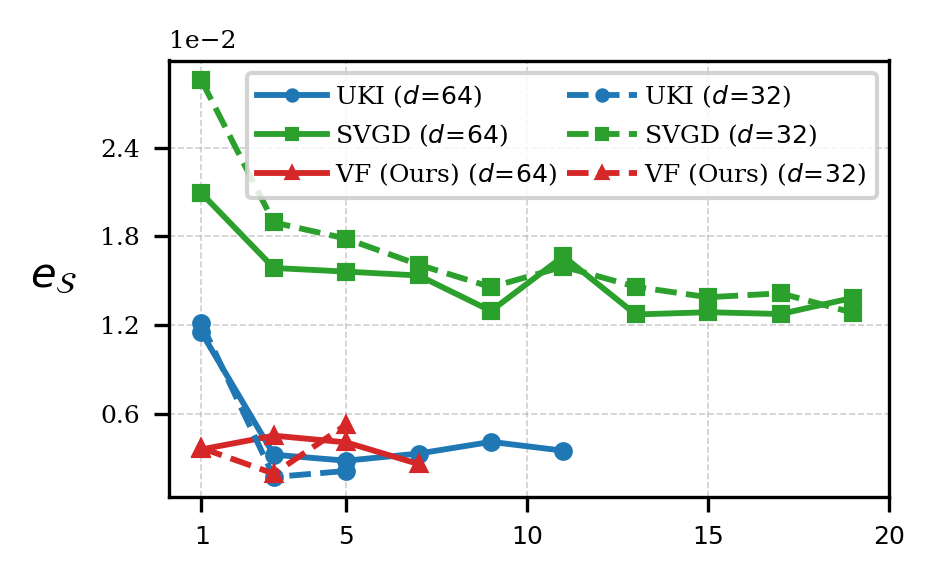}
\includegraphics[width=0.32\textwidth]{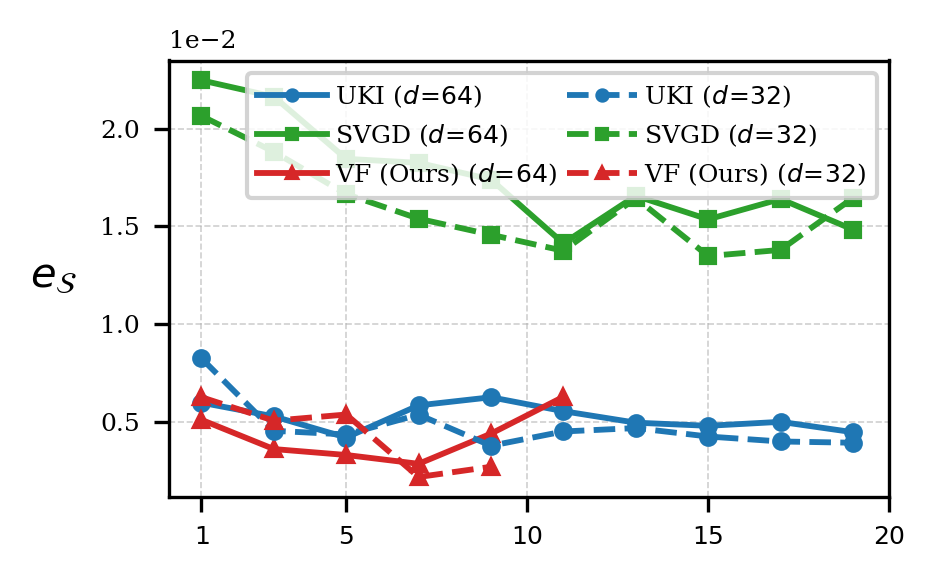}
\includegraphics[width=0.32\textwidth]{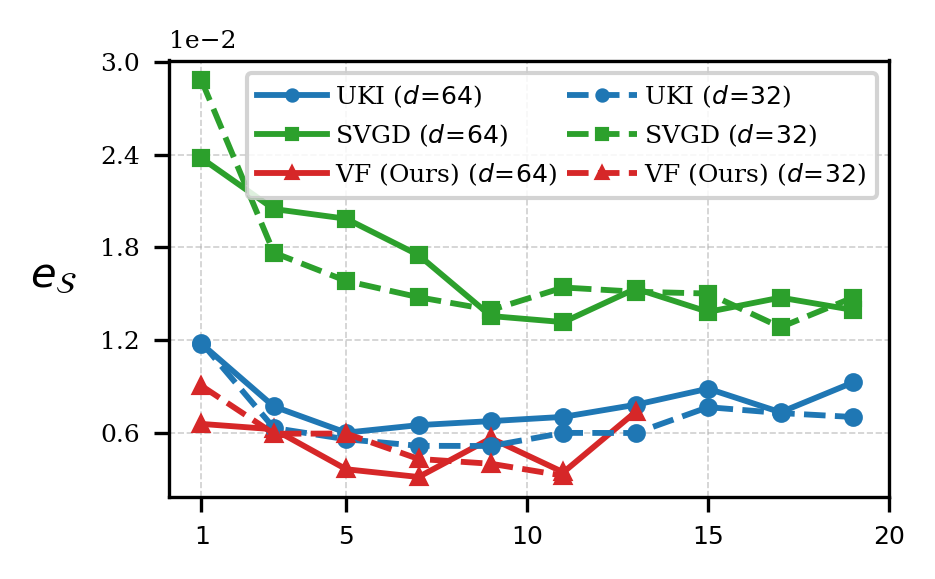}

\caption{Surrogate fitting error $e_{\mathcal{S}}$ across adaptive stages of 2D Navier-Stokes problem. Columns: noise levels $\delta \in \{1\%, 5\%, 10\%\}$. Solid/dashed lines: $d=64/32$. Overall, our VF method converges to lower errors in fewer stages.}
\label{fig:fitting_error_2d_ns}
\end{figure}

\begin{figure}[ht]
\centering

\includegraphics[width=0.95\textwidth]{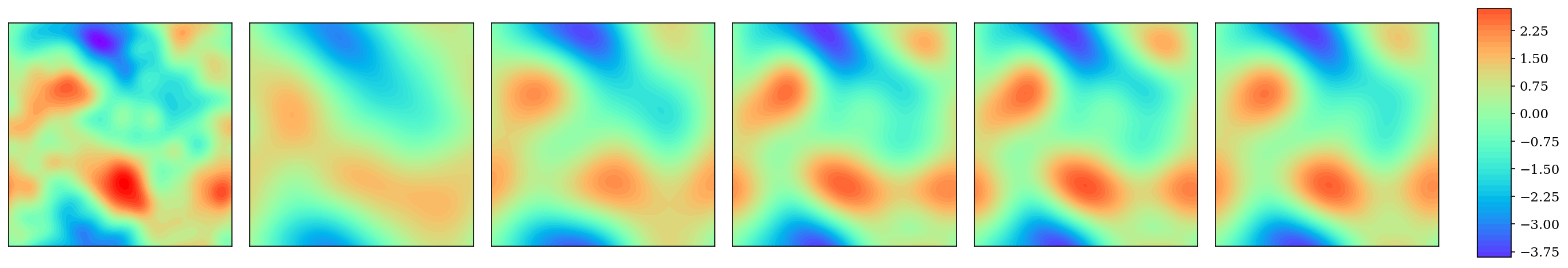}\\
\includegraphics[width=0.95\textwidth]{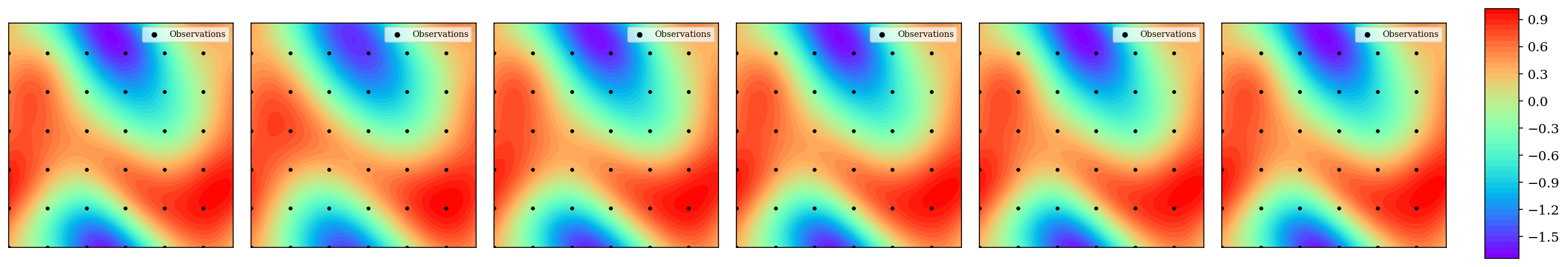}\\

\vspace{-1mm}
\centerline{\small (a) Noise level $\delta=1\%$}
\vspace{1mm}

\includegraphics[width=0.95\textwidth]{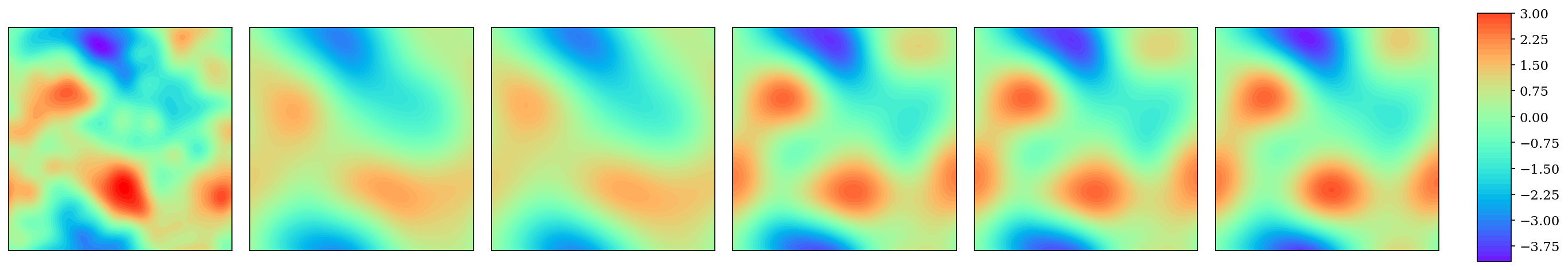}\\
\includegraphics[width=0.95\textwidth]{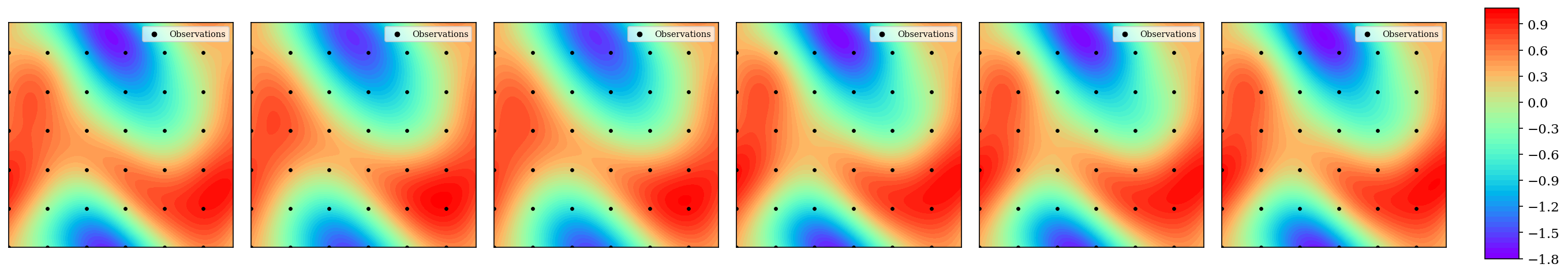}\\

\vspace{-1mm}
\centerline{\small (b) Noise level $\delta=5\%$}
\vspace{1mm}

\includegraphics[width=0.95\textwidth]{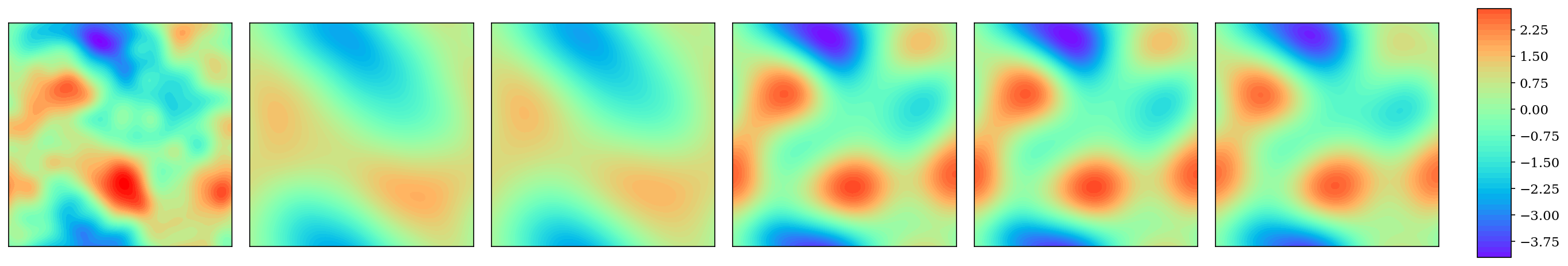}\\
\includegraphics[width=0.95\textwidth]{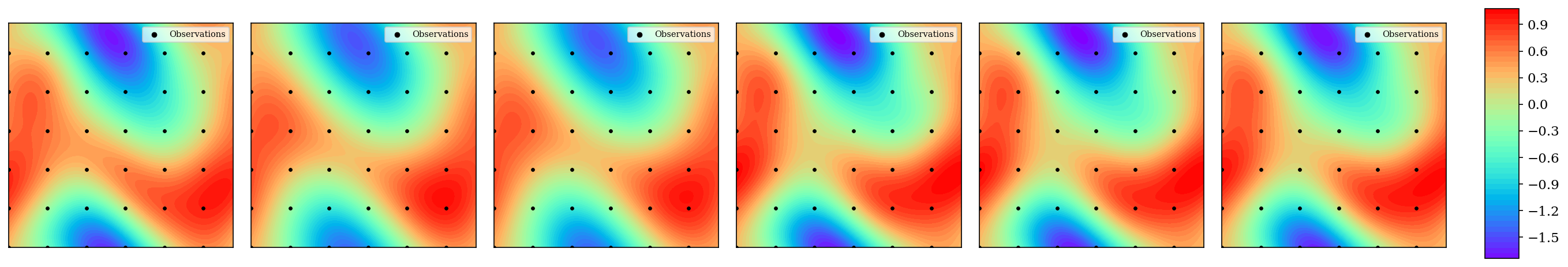}\\

\vspace{-1mm}
\centerline{\small (c) Noise level $\delta=10\%$}

\caption{
Qualitative comparison for the 2D Navier-Stokes problem with $d=64$ under different noise levels.
For each noise level, the first row shows the reconstructed initial vorticity fields $\omega_0$ and the second row shows the corresponding vorticity fields $\omega(\cdot,T)$ at $T=1$.
From left to right: reference ground truth, pCN, SVGD, UKI-FDM, UKI-FNO, and our proposed VF method.
}
\label{fig:ns2d_qual}
\vspace{-1em}
\end{figure}

Table~\ref{tab:comparison_2d_ns} details the relative inversion errors for the 2D Navier-Stokes equations. This problem is particularly challenging due to the highly nonlinear time-dependent dynamics over time. Despite these difficulties, the VF framework achieves the lowest $e_{\mathcal{I}}$ across all tested truncation dimensions and noise levels. 

Figure~\ref{fig:fitting_error_2d_ns} illustrates the surrogate fitting error $e_{\mathcal{S}}$. The FNO surrogate, when fine-tuned using the posterior-concentrated samples generated by the VF model, adapts rapidly to the complex fluid dynamics, achieving a lower error plateau than the UKI-driven adaptive surrogates.

Visual comparisons of the reconstructed initial vorticity fields and the corresponding final states under different noise levels are shown in Figure~\ref{fig:ns2d_qual}. Similar to the 2D Darcy case, the VF method accurately captures the complex vortex structures in $\omega_0$ and preserves their spatial distributions, resulting in final vorticity fields $\omega(\cdot,T)$ at $T=1$ that closely match the reference solutions. In contrast, the baseline methods either oversmooth the vortices or misestimate their magnitudes, with errors amplified under higher noise levels due to the lack of adaptive prior updating and expressive posterior modeling.

\section{Conclusions}
\label{sec:conclusions}


We present a deep adaptive dimension-reduction Bayesian inference framework for high-dimensional PDE-governed inverse problems, addressing non-Gaussian posteriors, surrogate out-of-distribution (OOD) errors, and prior misspecification. Our core Variational Flow (VF) model integrates VAE-based dimensionality reduction with dual normalizing flows, achieving a strictly higher ELBO than VAE and capturing multimodal posteriors. Coupled with an iterative prior updating scheme and an adaptively fine-tuned FNO surrogate, the framework establishes a mutually reinforcing loop for accurate posterior approximation. Empirically, our method achieves state-of-the-art accuracy on standard PDE benchmarks, particularly excelling under high noise and large dimensions. Future work will address the current lack of theoretical convergence guarantees and extend to multi-physics systems.


\section*{Acknowledgments}
This work was supported by the National Natural Science Foundation of China (grants 12131002, 12501598, 12288201, and 12461160275), NSF Grant DMS-2513234, the high-level talent research start-up project funding of Henan Academy of Sciences (No.~232019024), and the Science Challenge Project (No.~TZ2025006).

\appendix
\section{Karhunen-Lo\`{e}ve Expansion with Covariance Operator}
\label{appendix:kl}
The Karhunen-Lo\`eve (KL) expansion provides a systematic way to represent a spatially dependent Gaussian random field (GRF) as a spectral series. After truncation, it yields a finite-dimensional parameterization of the random field, forming the basis for representing the coefficient field in our PDE-governed inverse problems.

\paragraph{Covariance operator}
Let $m(\mb{x})$ be a GRF with mean $\bar{m}(\mb{x})$ and covariance kernel $k(\mb{x}, \mb{x}')$. We define the corresponding covariance operator $\mathcal{C}: L^2(\Omega_s) \to L^2(\Omega_s)$ as
\begin{equation}
    (\mathcal{C}\,\phi)(\mb{x}) = \int_{\Omega_s} k(\mb{x}, \mb{x}')\,\phi(\mb{x}')\,\mathrm{d}\mb{x}'.
\end{equation}
In many PDE-governed inverse problems, we parameterize the covariance via an elliptic operator, e.g.
\begin{equation} 
\mathcal{C} = \sigma^2\,(-\Delta + \tau^2)^{-l}, 
\end{equation} 
where $\Delta$ is the Laplacian on $\Omega_s$ with appropriate boundary conditions, $\sigma > 0$ controls the overall variance (amplitude) of the Gaussian random field, $\tau > 0$ controls the inverse correlation length, and $l > 0$ controls the regularity of the field. This operator is symmetric, positive definite, and compact, ensuring the existence of a countable set of eigenpairs.

Solving the eigenvalue problem $\mathcal{C}\,\phi_j = \lambda_j \,\phi_j$ yields eigenvalues $\lambda_1 \ge \lambda_2 \ge \dots \ge 0$ and orthonormal eigenfunctions $\{\phi_j\}_{j=1}^\infty$ in $L^2(\Omega_s)$. The GRF can then be represented as an infinite series $m_{\boldsymbol{\xi}_\infty}(\mb{x}) = \bar{m}(\mb{x}) + \sum_{j=1}^{\infty} \sqrt{\lambda_j}\,\xi_j\,\phi_j(\mb{x})$, where $\xi_j \sim \mathcal{N}(0,1)$. For numerical tractability, we truncate the series to the first $d$ leading modes:
\begin{equation}
\label{eq:kl-trunc}
    m_{\mb{\xi}}(\mb{x}) = \bar{m}(\mb{x}) + \sum_{j=1}^{d} \sqrt{\lambda_j}\,\xi_j\,\phi_j(\mb{x}), \quad 
    \boldsymbol{\xi} = (\xi_1,\dots,\xi_d)^\top \in \mathbb{R}^d.
\end{equation}
This finite-dimensional parameterization defines the exact inference space $\Omega_p$ over which our Variational Flow (VF) posterior approximation is constructed.

\section{VF for Standard Generative Modeling}
\label{appendix:vf_case1}

Section~\ref{sec:vf_posterior} addresses the setting where the target density $\hat{p}(\mb{x})$ is known up to a normalization constant. Here we describe the complementary setting of standard generative modeling, where only i.i.d.\ samples $\{\mb{x}^{(n)}\}_{n=1}^{N} \sim p_{\mb{x}}$ are available but $p_{\mb{x}}$ cannot be evaluated.

In this case, we train VF by maximizing the ELBO of $\log p_{\mb{x}}$:
\begin{equation}
L_{\mb{\theta},\mb{\alpha},\mb{\beta}}(\mb{x})
= \mathbb{E}_{q_{\mb{z}|\mb{x},\mb{\alpha}}}\!\left[\log \frac{p_{\mb{x}|\mb{z},\mb{\theta}}\,p_{\mb{z},\mb{\beta}}}{q_{\mb{z}|\mb{x},\mb{\alpha}}}\right]
\approx \frac{1}{M}\sum_{i=1}^{M}\log \frac{p_{\mb{x}|\mb{z},\mb{\theta}}(\mb{x}|\mb{z}^{(i)})\,p_{\mb{z},\mb{\beta}}(\mb{z}^{(i)})}{q_{\mb{z}|\mb{x},\mb{\alpha}}(\mb{z}^{(i)}|\mb{x})},
\end{equation}
where $\{\mb{z}^{(i)}\}_{i=1}^{M}$ are drawn from $q_{\mb{z}|\mb{x},\mb{\alpha}}(\mb{z}|\mb{x})$. The total training objective is 
\begin{equation}
\frac{1}{N}\sum_{n=1}^{N} L_{\mb{\theta},\mb{\alpha},\mb{\beta}}(\mb{x}^{(n)}).
\end{equation}

\section{Pre-training of the Fourier Neural Operator Surrogate}
\label{appendix:fno}
We adopt the Fourier Neural Operator (FNO)~\cite{li2020fourier} as the surrogate. The FNO learns mappings between infinite-dimensional function spaces in the spectral domain, achieving discretization-invariant representations that generalize across different grids.

\paragraph{Dataset construction} Training samples $\{\mb{\xi}^{(i)}\}_{i=1}^N$ are drawn from the prior $\pi_0 = \mathcal{N}(\mb{\mu}_0, \mb{\Sigma}_0)$, and the corresponding coefficient fields are generated via the truncated KL expansion~\eqref{eq:kl-trunc} (see Appendix~\ref{appendix:kl} for details). For each sample, the PDE system~\eqref{spdexi1}-\eqref{spdexi2} is solved numerically to obtain the state field $u(\cdot; m_{\mb{\xi}^{(i)}})$, yielding the training dataset:
\begin{equation}
    \mathcal{D} = \bigl\{\, m_{\mb{\xi}^{(i)}}(\mb{x}),\; u\bigl(\mb{x}; m_{\mb{\xi}^{(i)}}\bigr) \,\bigr\}_{i=1}^N.
\end{equation}

\paragraph{Training objective} Let $\mb{\vartheta}$ denote the trainable parameters of the FNO, denoted $\mathcal{F}_{\mb{\vartheta}}: m_{\mb{\xi}} \mapsto u(\cdot; m_{\mb{\xi}})$. The model is trained by minimizing the relative $L^2$ loss:
\begin{equation}
\label{eq:fno_loss}
    \mathcal{L}_{\mathrm{FNO}}(\mb{\vartheta};\mathcal{D}) = \frac{1}{N} \sum_{i=1}^{N}
    \frac{\bigl\|\mathcal{F}_{\mb{\vartheta}}(m_{\mb{\xi}^{(i)}}) - u(\cdot; m_{\mb{\xi}^{(i)}})\bigr\|_{L^2(\Omega_s)}}
         {\bigl\|u(\cdot; m_{\mb{\xi}^{(i)}})\bigr\|_{L^2(\Omega_s)}}.
\end{equation}
Once trained, the composite map $\mb{\xi} \mapsto \mathcal{F}_{\mb{\vartheta}}(m_{\mb{\xi}})$ serves as the surrogate for the forward map throughout Bayesian inference. Specific hyperparameter settings (dataset size, learning rate schedule, training epochs) for each experiment are reported in Appendix~\ref{appendix:exp_details}.
\section{Implementation Details}
\label{appendix:exp_details}
\textbf{Computational Resources.} All experiments were conducted on a single workstation equipped with an NVIDIA RTX 3090 GPU (24GB VRAM). The total computational time for a single run of each PDE-governed inverse problem (1D Darcy flow, 2D Darcy flow, and 2D Navier-Stokes) is approximately 20 minutes. The Rosenbrock inverse problem takes approximately 2 hours to complete.

\subsection{PDE-governed Inverse Problems}
The stopping threshold of the overall framework is set to a tolerance of $\epsilon = 0.01$. For training the FNO as a surrogate model, the initial dataset size is set to 2,000,
and the model is trained for 1,000 epochs with an initial learning rate of 0.001, using a decay rate of 0.5 applied every 50 epochs. During the adaptive fine-tuning of the FNO, 500 samples are drawn at each stage with a perturbation strength of $\gamma = 3$ (as defined in Eq.~\eqref{eq:perturbation}), and the model is then
fine‑tuned for 100 epochs using a batch size of 25. The learning rate is initialized at 0.001 and decreased by a factor of 0.5 every 25 epochs, using the Adam optimizer~\cite{kingma2014adam}.

For Variational Flow (VF) model, the data dimension is set to $d$ and the latent dimension is 16 throughout all experiments. The flow-based prior employs 6 affine coupling layers, while the conditional normalizing flow encoder uses 2 affine coupling
layers. Each flow subnetwork has a hidden width of 32, and the multiscale squeezing step is fixed at 4. The accompanying VAE-style decoder contains 5 hidden layers of width 64. In terms of VF training, we run at most 20 adaptive stages. At each stage the VF model is trained on 1024 samples with a batch size of 32 and a learning rate of 0.001 for 10 epochs.

\subsection{Rosenbrock Problem}\label{sec:rosen_details}
In this problem, we compare our proposed model with a vanilla VAE, MCMC, SVGD, and UKI. Our method and UKI do not employ iterative priors in this problem. The MCMC method is implemented using the emcee Python package with 1,000 chains, a burn-in period of 50,000 steps, and 10,000 sampling steps. The SVGD method utilizes 20,000 particles and is run for 10,000 iterations. Our model and VAE model are trained with a learning rate of 0.001 on a dataset of 100,000 samples, using a batch size of 10,000, for 20,000 epochs. The UKI method is simulated for 500 steps. For all methods, the results are presented based on 20,000 samples.

\subsection{Preconditioned Crank-Nicolson (pCN) MCMC}\label{sec:pcn}
We implement the preconditioned Crank-Nicolson (pCN) MCMC method, a standard sampler whose efficiency does not degrade with parameter dimension. Assuming a Gaussian prior $\mb{m} \sim \mathcal{N}(\mb{m}_{0}, \mb{C}_{0})$, the proposal rule is
\begin{equation}
    \mb{m}^* = \sqrt{1 - \beta^2}\,(\mb{m}_{n} - \mb{m}_{0}) + \mb{m}_{0} + \beta \mb{\eta}, \quad \mb{\eta} \sim \mathcal{N}(\mb{0}, \mb{C}_0),
    \label{eq:pcn_proposal}
\end{equation}
with step size $\beta \in (0,1]$. Since the proposal preserves the prior measure, the acceptance probability depends only on the likelihood:
\begin{equation}
    a(\mb{m}_{n}, \mb{m}^{*}) = \min\bigl(1,\; \exp\bigl(\Phi(\mb{m}_{n}) - \Phi(\mb{m}^*)\bigr)\bigr),
    \label{eq:pcn_accept}
\end{equation}
where $\Phi$ is the negative log-likelihood. We set $\beta = 0.1$ and run 5{,}000 iterations of the high-fidelity FDM solver, discarding the first $20\%$ as burn-in and thinning by a factor of 10.

\subsection{UKI}\label{uki}
We implement the UKI algorithm of \cite{gao2024adaptive}, based on the stochastic dynamical system
\begin{align}
    \text{Evolution:} \quad & \mb{m}_{n+1} = \mb{r}_{0} + \alpha(\mb{m}_{n} - \mb{r}_{0}) + \mb{\omega}_{n+1}, \quad &\mb{\omega}_{n+1} \sim \mathcal{N}(\mb{0}, \mb{\Sigma_{\omega}}), \nonumber \\
    \text{Observation:} \quad & \mb{y}_{n+1} = \mathcal{G}(\mb{m}_{n+1}) + \mb{\eta}_{n+1}, \quad &\mb{\eta}_{n+1} \sim \mathcal{N}(\mb{0}, \mb{\Sigma_{\eta}}),
    \label{eq:uki_model}
\end{align}
where $\mb{m}_{n+1}$ is the unknown parameter vector, $\mb{y}_{n+1}$ the observation, $\mb{\omega}_{n+1}$ and $\mb{\eta}_{n+1}$ are mutually independent zero-mean Gaussian errors, $\mb{r}_{0}$ is an arbitrary initial anchor, and $\alpha \in (0,1]$ is the regularization parameter, set to $\alpha = 0.5$.

\subsection{SVGD with Adaptive Surrogate Fine-tuning}\label{SVGD_adap}
We implement Stein Variational Gradient Descent (SVGD)~\cite{liu2016stein, liu2017stein} with the radial basis function (Gaussian) kernel
\begin{equation}
k(\mb{x}, \mb{x}') = \exp\!\left(-\|\mb{x} - \mb{x}'\|^2 / h\right),
\end{equation}
whose bandwidth $h$ is set via the median heuristic $h = \mathrm{med}^2 / \log n$, with $\mathrm{med}$ the median of pairwise particle distances and $n$ the number of particles. We run 20 adaptive stages, alternating 50 SVGD updates with FNO fine-tuning at each stage; FNO fine-tuning uses perturbation strength $\gamma = 3$ (Eq.~\eqref{eq:perturbation}) for adequate local coverage.

\bibliographystyle{siamplain}
\bibliography{tang}
\end{document}